\documentclass{article} 
\usepackage{iclr2027_conference,times}

\usepackage{hyperref}
\hypersetup{hidelinks}
\usepackage{url}

\usepackage{amsmath,amssymb}
\usepackage{amsthm}
\usepackage{graphicx}
\usepackage{float}
\usepackage{booktabs}
\usepackage{xcolor}
\usepackage{subcaption}
\usepackage{wrapfig}
\usepackage{longtable}
\graphicspath{{./figures/}}

\def\eqref#1{equation~\ref{#1}}

\newtheorem{proposition}{Proposition}
\newtheorem{definition}{Definition}

\title{Conservation Buys Stability and Factoring Buys Counterfactuals in Physical World Models}

\author{Yufeng Wang \thanks{These authors contributed equally to this work}\\
  Stony Brook University \\
  \And
  Lu Wei $^*$\\
  Stony Brook University \\
  \And
  Parivesh Priye $^*$\\
  Georgia Institute of Technology \\
  \And
  Haibin Ling \\
Westlake University \\}

\iclrfinalcopy
\makeatletter
\g@addto@macro\@maketitle{\lhead{}}
\makeatother

\begin{document}

\maketitle

\begin{abstract}
  A learned simulator can reproduce its training conditions accurately yet fail in two distinct ways once those conditions change. Over long rollouts, small errors accumulate until the trajectory drifts away from physically plausible behavior; under an intervention on a physical parameter, the model may continue to follow the law seen during training rather than the intervened one. We show that these two failures require \emph{different structural remedies}. Evolving a learned energy with a symplectic integrator preserves the geometry of the conservative dynamics and keeps rollouts bounded and physically meaningful for up to $100\times$ the training horizon, while equal-capacity predictors, an energy-regularized predictor, and a tuned neural ODE diverge. By contrast, encoding the physical coupling through an explicit linear factorization enables the model to follow a never-seen sign of that coupling, whereas an unrestricted parameterization remains locked to the training law. Crucially, the two mechanisms are separable: removing the structure responsible for long-horizon stability leaves counterfactual transfer intact, while removing the factorized coupling destroys counterfactual transfer without eliminating stability. This \emph{double dissociation}, established with matched controls that remove or replace one structural component at a time, persists beyond the headline three-body system and remains visible when the physical state must be inferred from pixels rather than provided directly. The result is a concrete design principle for physical world models: long-horizon stability and changed-law generalization arise from distinct structural commitments, and each can be imposed deliberately without requiring the other.
\end{abstract}

\section{Introduction}
\label{sec:intro}

A learned simulator predicts the next state of a physical system and iterates that prediction into a trajectory. Modern models can perform well by conventional measures such as rollout accuracy or visual realism, yet they often fail in two distinct ways once the governing conditions move beyond training. First, prediction errors accumulate until long rollouts drift or become unstable. Second, when the physical law itself changes, for example through reversed gravity, a flipped coupling, or reversed time, the model often continues to follow the law seen during training. Both failures remain visible in current systems. The best reported video generator reaches only $22\%$ joint accuracy on the hard split of VideoPhy-2, no model exceeds $3.3/5$ on PhyGround, and PhyWorldBench includes a dedicated ``anti-physics'' category for changed law failures
\citep{yi2020clevrer,bear2021physion,phyworldbench2025,videophy2_2025,cjepa_2026}.
Related limitations also appear in learned simulators across scales
\citep{sanchezgonzalez2020gns,zhong2021benchmark}.
This paper asks: \emph{which piece of physical structure fixes which failure?}

We address this question by encoding physical invariances directly rather than asking the model to infer them from finite data. The ingredients are individually established, including Hamiltonian and Lagrangian neural networks, generalized coordinate models, constrained dynamics, and symplectic structure preserving parameterizations
\citep{greydanus2019hnn,cranmer2020lnn,lutter2019delan,finzi2020constrained,%
zhong2020symoden}.
What remains unclear is which structural commitment produces which behavior and where each commitment stops helping. We show a \emph{double dissociation}. Conserving a learned energy through a symplectic update keeps long rollouts physically bounded but does not determine how the system should respond to a changed law. Conversely, making an interaction force enter linearly through its physical coupling enables transfer to a never seen sign of that coupling but does not prevent long horizon drift. Matched controls that remove or replace one ingredient at a time show that stability depends on geometric and energetic consistency, whereas counterfactual transfer depends on how the intervened parameter enters the dynamics.

This paper makes four contributions. First, we establish this double dissociation with matched controls, showing that conservation produces long horizon stability and a factored coupling produces counterfactual inversion, while neither produces the other and neither is explained by capacity, integrator choice, or formalism alone. Second, we formalize the mechanism behind each behavior and map the boundary at which each prior stops helping. Third, we show that the dissociation extends beyond smooth few body motion to generalized coordinates, contact, many body systems, mixed force families, and pixel based settings with grounded perception, including a learned object binder and frozen public video encoders. Fourth, we extend the same structural account beyond conservative dynamics, using a passive friction port that reaches the correct temperature and a factored drive that generalizes across unseen forcing strengths and signs.

We study a learned transition map from physical state, including positions and momenta, to the next state and iterate it into a trajectory. To separate dynamics from perception, we provide state in three ways: as oracle state; as state recovered from synthetic renders by an encoder whose position channel is anchored by either a fixed renderer or a label free centroid; and as state discovered from pixels with no anchor. The first two settings recover the target behaviors, while the third exposes a boundary of the current approach. Accordingly, this study concerns physical law fidelity rather than visual fidelity and does not claim improvements in real video generation, image quality, or leaderboard performance.

\section{Related work}
\label{sec:related}

\textbf{Structured dynamics and parameter conditioning.} Hamiltonian and Lagrangian networks encode conservation laws or variational structure
\citep{greydanus2019hnn,cranmer2020lnn,lutter2019delan}. Generalised coordinate, constrained, and intrinsically symplectic variants extend these ideas to rigid and structured systems
\citep{finzi2020constrained,zhong2020symoden,jin2020sympnets}, with continuous conservation statements for the learned energy and discrete behaviour determined in part by the numerical integrator
\citep{zhong2021benchmark}. Conditioning dynamics on physical parameters, including coupling interactions through multiplicative coefficients, is also standard
\citep{greydanus2019hnn,cranmer2020lnn,lutter2019delan,zhong2020symoden,%
jin2020sympnets,brunton2016sindy,han2021adaptable,raissi2019pinn}, while context adaptation extends prediction to new parameter draws within the training family
\citep{kirchmeyer2022coda,yin2021leads,wang2022dyad}. Our setting differs in two respects. First, we intervene on a coupling whose sign lies outside the training support. Second, we compare matched factored and unrestricted versions of HNN, DeLaN, and SymODEN to determine which structural commitment supports stability and which supports counterfactual inversion.

\textbf{Simulators, dissipation, contacts, and perception.} Graph network simulators are strong relational predictors
\citep{sanchezgonzalez2020gns,battaglia2016interaction,satorras2021egnn}, and training noise is a common remedy for rollout instability; in the simulator studied here, it delays but does not prevent unbounded drift. Dissipative, port Hamiltonian, and GENERIC or metriplectic models separate reversible from irreversible dynamics
\citep{sosanya2022dhnn,zhong2020dissipative,desai2021phnn,lee2021gfinn}. D-HNN evaluates unseen friction coefficients, port Hamiltonian networks recover energy, forcing, and dissipation, and compositional port Hamiltonian learning assembles systems from learned components
\citep{neary2023compositional}. Differentiable contact models further extend structured dynamics to collisions
\citep{zhong2021contact,hochlehnert2021contact}. On the perception side, Hamiltonian generative models and unsupervised Lagrangian models learn dynamics directly from images
\citep{toth2020hgn,zhong2020unsupervised}, while \citet{botev2021priors} identify continuous time reversibility as the most broadly useful physical prior for pixel based learning. We show that structure alone does not induce a canonical latent representation, and that reversibility helps only when the underlying dynamics are themselves reversible. 
While recent video world models and physical reasoning benchmarks report persistent inconsistencies with physical laws \citep{yi2020clevrer,bear2021physion,riochet2018intphys,phyworldbench2025,%
vjepa2_2025,videophy2_2025,cjepa_2026,phyground_2026}, our work isolates the specific structural commitments required to guarantee long-horizon stability and counterfactual generalisation.

\section{Methods}
\label{sec:method}

\subsection{Setup and the two behaviours we measure}

\begin{figure}[h]
  \centering
  \includegraphics[width=\linewidth]{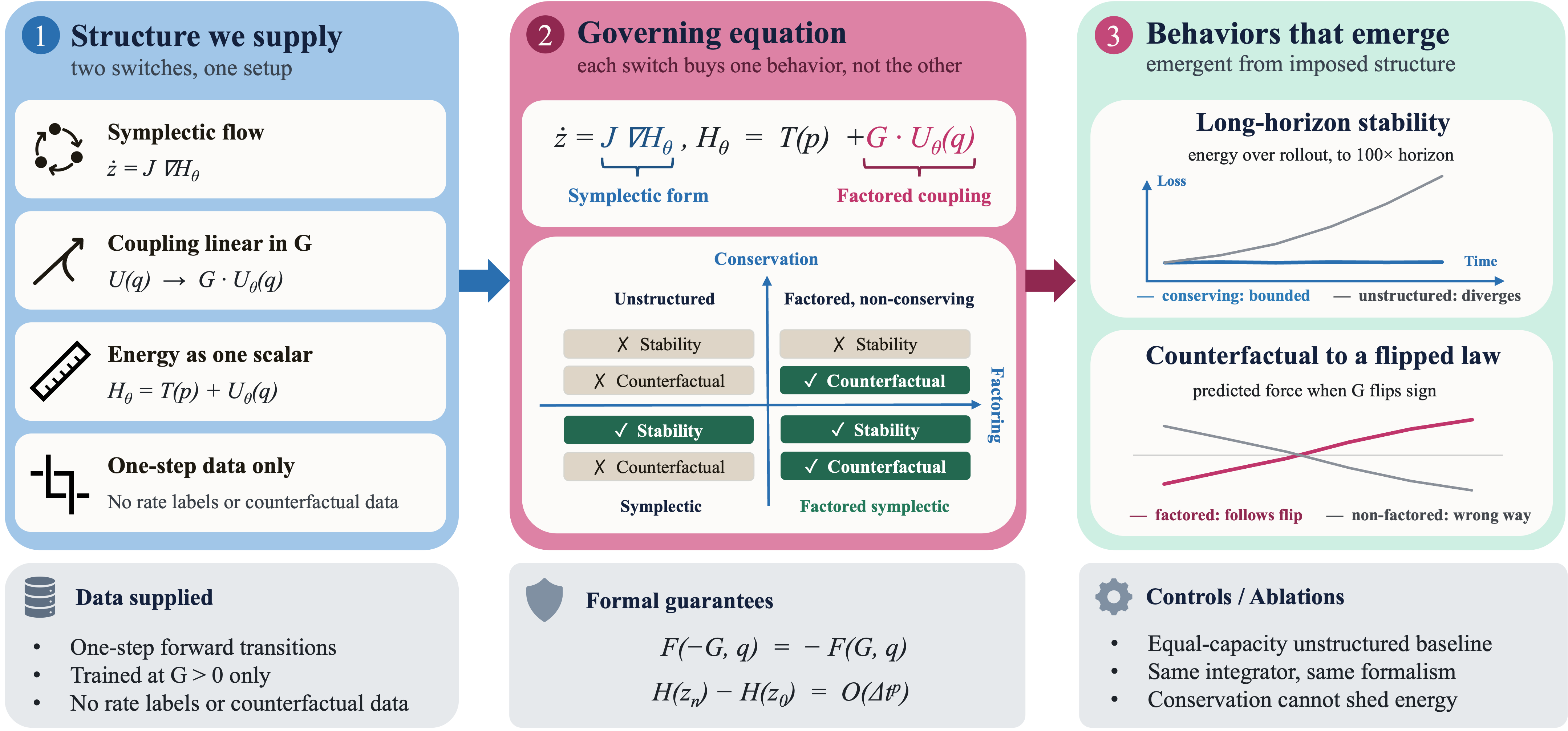}
  \caption{Overview of the two pieces of physical structure we impose (left), the double
    dissociation they produce (centre), and how each behaviour is measured and bounded
    (right and bottom).}
  \label{fig:overview}
\end{figure}

Figure~\ref{fig:overview} summarizes the study. Left: we impose two structural commitments—energy conservation (evolving a learned energy $H_\theta$ with a symplectic step) and factored coupling (the intervened parameter $G$ enters the force linearly, with all other content learned from forward-law data). Center: a double dissociation—conservation yields column-wise stability, factoring yields row-wise changed-law response, so only the model with both is stable \emph{and} answers the counterfactual. Right/bottom: measurements, matched controls that rule out capacity, integrator choice, and formalism, and the limits where each structure stops helping. Below we formalize each component. For $N$ bodies, the state is $x=(q,p)$ with positions $q$ and momenta $p$, and parameters $c=(G,m_1,\dots)$ include the coupling $G$ and masses. A dynamics model is a transition map iterated into a rollout,
\begin{equation}
  x_{t+1}=f_\theta(x_t,c),\qquad
  x_{0:T}=\big(x_0,f_\theta(x_0,c),\dots\big),
\end{equation}
and is trained only on one step prediction from forward law data with attractive gravity,
$G>0$. Our headline system is softened three body gravity,
\[
V(q)=-\sum_{i<j}\frac{Gm_im_j}
{\sqrt{\lVert q_i-q_j\rVert^2+\epsilon^2}},
\]
with evaluation rollouts of $T=2000$ steps, one hundred times the training horizon.
The remaining systems, including Coulomb charges, colliding disks, linear drag, a double
pendulum in generalised coordinates, a confined thermal bath, and driven particles,
together with the datasets, optimiser, and training settings, are specified in
App.~\ref{app:formal}.

We ask two questions of the trained transition map. The first is long horizon stability,
measured by the relative drift of the true energy along a finite rollout,
\begin{equation}
  D(t)=\frac{\lvert E(x_t)-E(x_0)\rvert}
  {\lvert E(x_0)\rvert},\qquad t\le T,
  \label{eq:drift}
\end{equation}
where $E$ is fixed by the reference Hamiltonian. We report both the per episode median
and the worst episode, and count nonfinite trajectories explicitly rather than assigning
them an arbitrary numerical value. The second behaviour is counterfactual inversion. We
train only on attractive gravity, $G>0$, and evaluate the unseen repulsive regime,
$G<0$. Let $\hat x_{0:T}$ be the model's rollout, and let $x^{-}_{0:T}$ and $x^{+}_{0:T}$ be the
ground-truth rollouts under the flipped repulsive law ($G<0$) and the trained attractive law
($G>0$), both started from the same true initial state so that initialisation error is
counted rather than cancelled. The regime-match indicator asks which of these two references
the model's rollout lies closer to,
\begin{equation}
  \mathrm{RM}
  =
  \mathbb{1}\!\left[
  \lVert \hat x_{0:T}-x^{-}_{0:T}\rVert
  <
  \lVert \hat x_{0:T}-x^{+}_{0:T}\rVert
  \right],
  \label{eq:rm}
\end{equation}
so $\mathrm{RM}=1$ when the rollout follows the changed law it never saw in training and
$\mathrm{RM}=0$ when it stays with the training law. Alongside it we report a normalized
trajectory error, the squared error against the flipped-law reference $x^{-}_{0:T}$ divided
by that reference's variance (Def.~\ref{def:metrics}).

\subsection{Two structural commitments, and what each guarantees}

A symplectic model evolves a learned Hamiltonian $H_\theta$ through a symplectic update.
In continuous time, the corresponding flow
\begin{equation}
  \dot q=\frac{\partial H_\theta}{\partial p},
  \qquad
  \dot p=-\frac{\partial H_\theta}{\partial q}
  \label{eq:ham}
\end{equation}
conserves $H_\theta$. The discrete leapfrog update, however, conserves neither
$H_\theta$ nor the reference energy exactly.
\begin{proposition}[Near conservation of a modified Hamiltonian; formal statement in App.~\ref{app:theory}]
  \label{prop:stab}
  Under the regularity, bounded orbit, and small step assumptions of backward error
  analysis, leapfrog applied to Eq.~\eqref{eq:ham} preserves a modified Hamiltonian
  $\widetilde H_\theta=H_\theta+O(dt^2)$ up to exponentially small error over
  exponentially long times \citep{hairer2006geometric}.
\end{proposition}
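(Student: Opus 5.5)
We follow the standard backward error analysis of \citet{hairer2006geometric}, specialised to the learned Hamiltonian $H_\theta$. Write leapfrog (Störmer--Verlet) as the composition $\Phi_{dt}=\varphi^{T}_{dt/2}\circ\varphi^{V}_{dt}\circ\varphi^{T}_{dt/2}$ of exact flows of the kinetic and potential parts when $H_\theta$ is separable. For a general $H_\theta$, use the implicit Störmer--Verlet scheme. In both cases $\Phi_{dt}$ is symplectic, being a composition of exact Hamiltonian flows or the generalised leapfrog. It is also symmetric: $\Phi_{-dt}=\Phi_{dt}^{-1}$.

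\textbf{Step 1: the modified equation.} Seek a formal vector field $\widetilde f=f+dt\,f_2+dt^2 f_3+\cdots$ whose exact time-$dt$ flow matches $\Phi_{dt}$ order by order in $dt$. The coefficients are determined recursively by Taylor-expanding both sides.
\begin{itemize}
\item Symmetry of $\Phi_{dt}$ kills all odd powers, so $\widetilde f=f+O(dt^2)$.
\item Symplecticity makes each $f_j$ locally Hamiltonian. Since the phase space is $\mathbb{R}^{2dN}$, which is simply connected, each $f_j=J^{-1}\nabla H_j$.
\item The Baker--Campbell--Hausdorff formula for the splitting gives the $H_j$ explicitly as iterated Poisson brackets of the kinetic and potential parts.
\end{itemize}
Hence $\widetilde H_\theta=H_\theta+dt^2H_3+dt^4H_5+\cdots$.

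\textbf{Step 2: optimal truncation.} This is the main obstacle, because the series generally diverges. The assumption needed is that $H_\theta$ is real-analytic and bounded on a complex neighbourhood $B_r(K)$ of a compact set $K$ containing the orbit. For the softened potential with smooth activations (e.g.\ $\tanh$ or softplus MLPs), $H_\theta$ is analytic, and the softening $\epsilon$ keeps singularities away. Cauchy estimates on nested complex neighbourhoods then bound $\|f_j\|\le C\,(Mj/r)^{j-1}$. Truncating at $N\sim r/(e\,M\,dt)$ gives a modified Hamiltonian $\widetilde H_\theta$ whose flow differs from $\Phi_{dt}$ by $O(dt\,e^{-c/dt})$ per step. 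Tracking the shrinking domains is delicate, and I would cite Hairer--Lubich--Wanner, Thm.~IX.7.6, for these constants rather than reprove them.

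\textbf{Step 3: telescoping.} The truncated flow conserves $\widetilde H_\theta$ exactly. Combining this with the per-step defect and the Lipschitz bound of $\widetilde H_\theta$ on $K$ gives
\[
|\widetilde H_\theta(x_n)-\widetilde H_\theta(x_0)|\le n\,C\,dt\,e^{-c/dt}.
\]
This bound stays $O(e^{-c/(2dt)})$ for $n\,dt\le e^{c/(2dt)}$. The bounded-orbit assumption guarantees $x_n\in K$ throughout. Consequently $|H_\theta(x_n)-H_\theta(x_0)|=O(dt^2)$ with no secular growth. This last consequence is the statement used in the paper. Note that the result controls $H_\theta$, not the reference $E$, so bounded drift $D(t)$ also requires the learned $H_\theta$ to be close to $E$.
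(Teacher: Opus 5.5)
Your proposal is correct and takes the same route as the paper. The paper's formal statement is just an appeal to the Hairer--Lubich--Wanner backward error analysis: analyticity on a neighbourhood of a compact set containing the orbit, a symmetric symplectic modified Hamiltonian $\widetilde H_\theta=H_\theta+O(dt^2)$, an exponentially small per-step defect after optimal truncation, and accumulation over exponentially long times. Your sketch spells out those standard steps, and your Step~3 bound of $O(e^{-c/(2dt)})$ for $n\,dt\le e^{c/(2dt)}$ is slightly more careful than the paper's wording, which uses the same $c$ for the per-step remainder and the time horizon and so, read literally, would not give a small accumulated error.
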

This statement concerns $\widetilde H_\theta$, not the physical reference energy.
Moreover, a generic MLP potential need not be coercive, so boundedness of the true
energy remains an empirical finite horizon property. We therefore report learned energy
conservation, physical energy drift, state escape, and trajectory fidelity as separate
outcomes.

The second commitment is a coupling factored potential, which makes the interaction
force exactly linear in $G$:
\begin{equation}
  U_\theta(q)=\sum_{i<j}\varphi_\theta(q_i-q_j),\qquad
  V(q,c)=G\,U_\theta(q),\qquad
  F_i=-G\,\partial_{q_i}U_\theta(q),
  \label{eq:factored}
\end{equation}
where $\varphi_\theta$ is learned using attractive data only.
\begin{proposition}[Sign inversion by construction]
  \label{prop:invert}
  For any learned $\varphi_\theta$ and any $G$ independent kinetic energy, the force in
  Eq.~\eqref{eq:factored} is odd in $G$,
  $F_i(q,-G)=-F_i(q,G)$ for all $q$. The dynamics at $-G$ therefore follow the same
  learned interaction under the reversed coupling without requiring training examples
  from the opposite sign regime.
\end{proposition}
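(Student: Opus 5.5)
The plan is a direct computation from Eq.~\eqref{eq:factored}. First I will fix the objects. The learned pair potential $\varphi_\theta$ and hence $U_\theta(q)=\sum_{i<j}\varphi_\theta(q_i-q_j)$ are functions of $q$ only, and the trained weights $\theta$ do not depend on $G$. I will assume $\varphi_\theta$ is differentiable, which is automatic for an MLP with smooth activations; with ReLU one works almost everywhere. Since $G$ enters only as a scalar prefactor, differentiating $V(q,c)=G\,U_\theta(q)$ with respect to $q_i$ gives $F_i(q,G)=-G\,\partial_{q_i}U_\theta(q)$. Substituting $-G$ yields $F_i(q,-G)=G\,\partial_{q_i}U_\theta(q)=-F_i(q,G)$ for every $q$, because $\partial_{q_i}U_\theta$ is evaluated at the same $q$ and does not see $G$. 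This is the oddness claim, and it uses no property of $\varphi_\theta$ beyond differentiability.

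Next I will lift oddness of the force to a statement about the dynamics. Take a Hamiltonian $H_\theta(q,p;G)=K(p)+G\,U_\theta(q)$ whose kinetic term $K$ is independent of $G$, as the proposition assumes. Then $\dot q=\partial_p K$ is identical at $\pm G$, and $\dot p=F(q,G)$ changes only through the sign of the force. The vector field at $-G$ is therefore exactly the one obtained by reversing the coupling in the learned interaction. The same holds for the discrete leapfrog map, since each kick uses $F(q,\pm G)$ and each drift uses $\partial_pK$. Hence the rollout at $-G$ is determined entirely by the $G>0$-trained $\varphi_\theta$ evaluated with the opposite sign, and no data from the $G<0$ regime enters. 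I will write out one leapfrog step explicitly to make this concrete.

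The only step needing care is stating precisely what the second sentence claims, because this is where a careless reading could overreach. The proposition asserts that the model at $-G$ reproduces the \emph{learned} interaction under reversed coupling. It does not assert that the model matches the true repulsive law. If $G\,U_\theta\approx V_{\text{true}}$ on the region of state space visited by attractive data, then the true repulsive dynamics are exactly $-G\,\partial_qU$ with the true $U$, by the same factorisation of the reference potential. The model's error under $-G$ is therefore governed by $\|\partial_qU_\theta-\partial_qU\|$ on the states reached by repulsive rollouts. I would state this as a remark rather than as part of the proof, noting that those states may lie off the training distribution, since that is the genuine caveat the experiments must address.
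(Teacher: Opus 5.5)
Your proof is correct and follows essentially the same route as the paper's. Both arguments use the fact that $V=G\,U_\theta$ is linear in $G$ with a $G$-independent coefficient, which gives $F_i(q,-G)=-F_i(q,G)$, and both then observe that with a $G$-independent kinetic term only the $\dot p$ component of the Hamiltonian vector field changes sign; your explicit extension to the leapfrog kicks and your remark separating the structural sign flip from the accuracy of $\varphi_\theta$ on off-distribution states are sound additions.
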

The model family supplies the sign inversion, while the learned interaction
$\varphi_\theta$ must still recover the correct magnitude and state dependence.
Counterfactual accuracy therefore holds only where the learned content remains accurate
on the states visited by the flipped rollout. By contrast, a nonfactored model conditions
a black box potential $V_\theta(q,G)$ on $G$ without imposing any relation between
$G>0$ and $G<0$. Section~\ref{sec:r2} shows that such models can represent the flipped
law but do not learn to select it from forward law data alone. Proofs are given in
App.~\ref{app:theory}.

\subsection{Controls, and how claims are measured}

We test each claim with ablations that remove or replace one structural ingredient at a time, attributing behaviour to the required component. For conservation, we compare capacity-matched baselines: an unstructured predictor, a fixed physical-energy penalty, and a tuned neural ODE, to separate the roles of model capacity, objective, and integration. For factoring, we compare a black box that receives the coupling without its linear form and a model that injects it via positive localized features, to test whether inversion arises from capacity or conditioning alone. We repeat these tests across Hamiltonian, Lagrangian, and symplectic-ODE formalisms and a graph-network simulator. In chaotic regimes, primary metrics are energy drift (Eq.~\eqref{eq:drift}) and regime match (Eq.~\eqref{eq:rm}); trajectory error is secondary, and divergence is distinguished from finite error. We repeat experiments across independent initializations, reporting uncertainty via error bars, standard deviations, or $p$-values; near-binary outcomes are assessed per complete run. Results are assigned to one of four pre-registered evidence classes (well-supported, directional, supported, boundary). Full model lists, matching, statistics, and per-run results appear in App.~\ref{app:formal}, App.~\ref{app:results}, and App.~\ref{app:evidence}.

\section{Results}

In this section we first show that energy conservation stabilises long rollouts but does not answer a changed law. We then show that a factored coupling answers the changed law but does not provide stability. We combine both effects in a single dissociation table, rule out the main confounds, map where each structural prior stops helping, and finally show that the pattern extends beyond the headline system.

\subsection{Conservation makes long rollouts stable, but cannot answer a changed law}
\label{sec:r1}

\begin{table}[h]
  \centering
  \caption{Relative true-energy drift $D(t)$ over the rollout.}
  \label{tab:t-stab}
  \begin{tabular}{@{}lcccc@{}}
    \toprule
    Model & $t{=}100$ & $t{=}500$ & $t{=}2000$ & NF\\
    \midrule
    Unstructured MLP (one-step map) & $2.6$ & ${>}10^{4}$ & ${>}10^{4}$ & $61$\\
    Energy-penalised MLP, $\lambda{=}1$ & $3.3$ & ${>}10^{4}$ & ${>}10^{4}$ & $123$\\
    Energy-penalised MLP, $\lambda{=}100$ & $0.011$ & $0.33$ & $115$--${>}10^{4}$ & $0$\\
    Neural ODE (RK4, matched $dt$, tuned) & $0.08$ & $3\times10^{2}$--$4\times10^{3}$ & ${>}10^{4}$ & $0$\\
    Symplectic (ours) & $0.40$ & $2.7$ & $7.1$ [$6.7$, $7.7$] & $0$\\
    \bottomrule
  \end{tabular}
\end{table}

The first question is whether conserving a learned energy stabilises long rollouts where alternative approaches do not. We train the five models in Table~\ref{tab:t-stab} using one-step prediction on attractive-gravity trajectories and roll each model out for $2000$ steps. Every reported value comes from a single raw record that retains the finite status of each episode (App.~\ref{app:A}). In the table, ${>}10^{4}$ denotes a per-episode median above the display cap, while NF counts trajectories that become non-finite by step $2000$, out of $192$ episodes.

\begin{wrapfigure}{l}{0.50\linewidth}
  \centering
  \vspace{-9pt}
  \includegraphics[width=\linewidth]{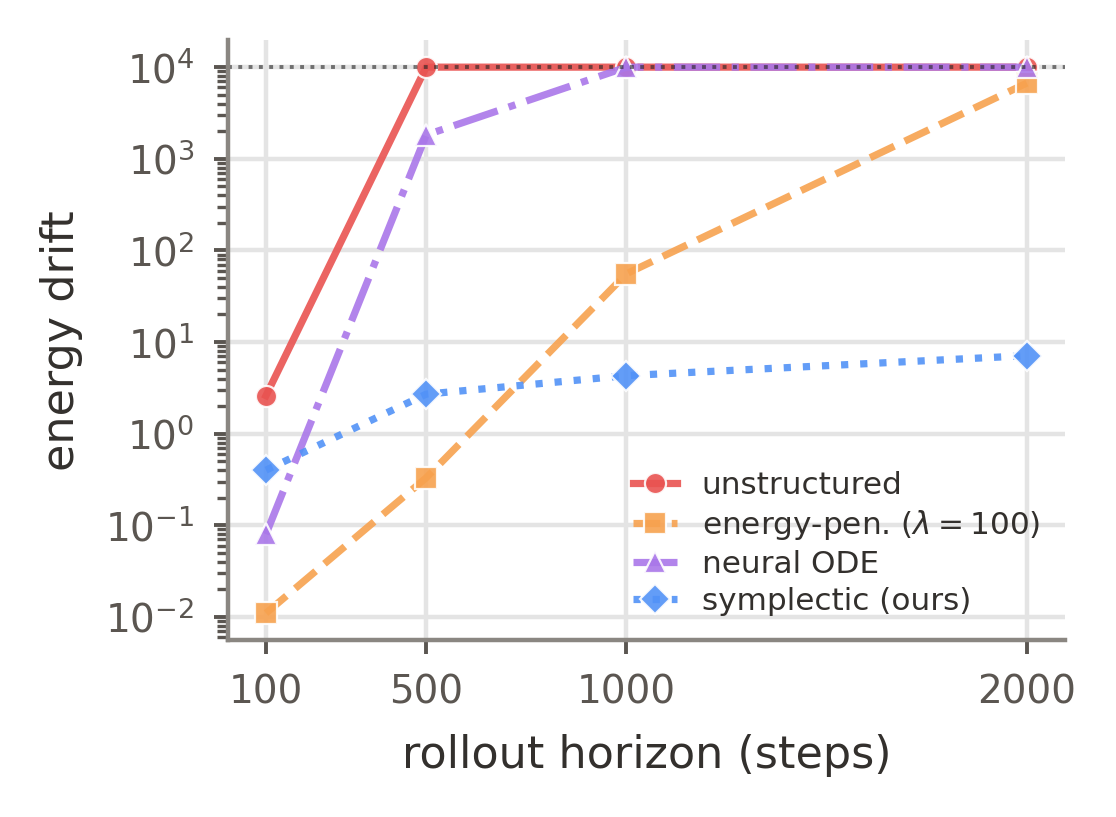}
  \caption{Long-horizon energy drift (log scale, capped at $10^{4}$): only the symplectic
    model stays bounded, while the $\lambda{=}100$ penalty and the neural ODE reach lower
    drift at short horizons before diverging.}
  \label{fig:stability}
  \vspace{-10pt}
\end{wrapfigure}

The symplectic model is the only one that stays finite and below the $10^{4}$ cap in all episodes (Fig.~\ref{fig:stability}), but finiteness does not imply accuracy. Its true-energy drift increases from $0.40$ at $100$ steps to $7.1$ at $2000$ ($710\%$ error) because Prop.~\ref{prop:stab} guarantees near-conservation for a modified learned Hamiltonian, not the physical energy. The one-step MLP exceeds the cap in the median episode by step $500$, and $61/192$ trajectories become non-finite between steps $1000$ and $2000$. Adding an energy penalty ($\lambda=1$) does not help: $123/192$ episodes become non-finite. Because non-finite trajectories have undefined (not infinite) error, we report non-finite counts, energy drift, and rollout error separately rather than as a single ratio. Thus, the comparison is finite rollouts with large drift versus divergence, and the cap should not be treated as a measured lower bound.

The neural ODE is the stronger control: at matched step size, objective, capacity, and learning-rate sweep, every trajectory stays finite and its $100$-step drift ($0.08$) is five times smaller than the symplectic model's, yet by $500$ steps it reaches $3\times10^{2}$--$4\times10^{3}$ and exceeds the cap by $1000$. The short horizon favours the neural ODE, the long horizon favours Hamiltonian structure, and since equal step size does not imply equal integration error for RK4 and leapfrog, the comparison is between complete updates rather than integrators in isolation.

The most informative control is the energy-penalised model, whose target is the analytic physical energy: its short-horizon drift is far smaller than the symplectic model's, yet grows without bound by $2000$ steps (with trajectory error following), so penalising the true energy delays drift without bounding it, and the two are always reported together. Two further probes on the factored model sharpen this picture: a short-rollout loss more than halves the $2000$-step drift while preserving inversion, and over $10{,}000$ steps the factored model's drift saturates from $1.9$ toward $0.89$ while the non-factored symplectic model keeps drifting.

Stability is nevertheless all that conservation provides. Trained only on attractive gravity, the same symplectic model does not follow the unseen repulsive law. Its rollout remains closer to the attractive world, with regime match $0$. We turn to that second behavior next.

\subsection{A factored coupling answers a changed law, but cannot make rollouts stable}
\label{sec:r2}

\begin{table}[h]
  \centering
  \caption{Following a never-trained sign flip of the coupling to $G\in\{-0.5,-1.0,-1.5\}$.}
  \label{tab:inversion}
  \begin{tabular}{@{}lcc@{}}
    \toprule
    Model & regime match & rollout error (nMSE) \\
    \midrule
    factored symplectic     & $1$ at every $G$          & $0.03$--$0.18$ \\
    non-factored symplectic & $0$ at every $G$          & ${\approx}2.7$ \\
    \bottomrule
  \end{tabular}
\end{table}

\begin{wrapfigure}{r}{0.50\linewidth}
  \centering
  \vspace{-9pt}
  \includegraphics[width=\linewidth]{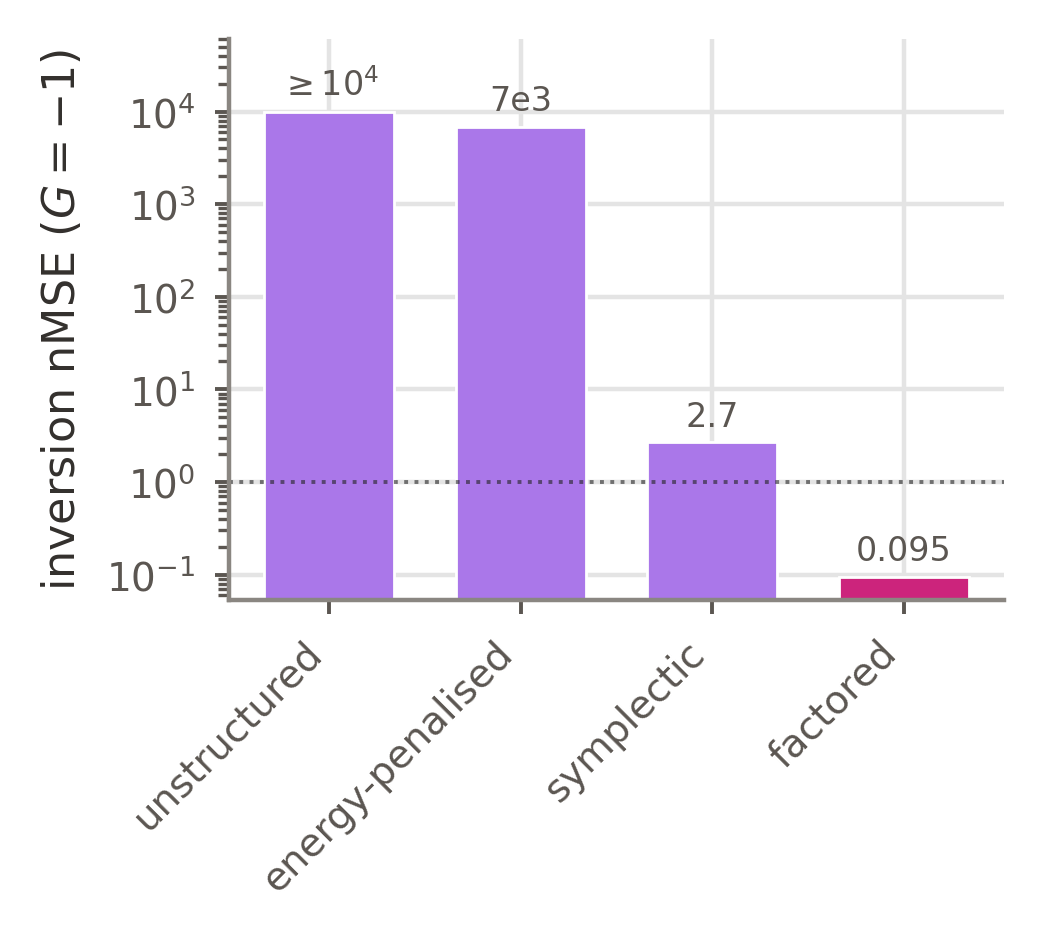}
  \caption{Inversion error at $G=-1$; only the factored model falls below nMSE $1$.}
  \label{fig:inversion}
  \vspace{-10pt}
\end{wrapfigure}

We next ask whether a model trained only on attractive gravity can follow the unseen repulsive law (Table~\ref{tab:inversion}). Regime match measures the fraction of rollouts that follow the repulsive rather than the attractive reference, while rollout error is measured against the true repulsive trajectory. The factored model follows the flipped-law regime at every tested coupling, with regime match $1$ and nMSE $0.03$--$0.18$ for $G\in\{-0.5,-1.0,-1.5\}$. In contrast, the non-factored symplectic model remains bounded but never inverts, with regime match $0$ at every coupling and nMSE near $2.7$. Its rollout remains closer to the attractive reference. At $G=-1$, only the factored model falls below nMSE $1$, by four orders of magnitude (Fig.~\ref{fig:inversion}). The mechanism is Prop.~\ref{prop:invert}: because the force is linear in $G$, the sign reversal is supplied by construction, while the learned interaction shape $\varphi_\theta$ determines the magnitude of the repulsive trajectory. The non-factored model fails not because it lacks capacity, but because its black-box potential imposes no relation between the unseen $G<0$ regime and the observed $G>0$ regime.

The converse also holds: factoring provides the counterfactual but not stability. Placed on a non-conserving base, the factored coupling still inverts the regime but diverges over long rollouts (\S\ref{sec:r3}). On smooth parameter shifts the distinction becomes quantitative rather than binary: every model's one-step error rises smoothly, and the factored model achieves the lowest extrapolation error ($10$ to $30\times$ below the training band) for the coupling it explicitly factors, but gains no advantage for the mass parameter it does not.

\subsection{The dissociation, and the controls that rule out confounds}
\label{sec:r3}

\begin{table}[t]
  \centering
  \caption{The double dissociation, from one matched comparison on oracle three-body gravity.}
  \label{tab:dissociation}
  \setlength{\tabcolsep}{5pt}
  \begin{tabular}{@{}lcccc@{}}
    \toprule
    Model & conserves & factored & stability: drift & inversion: match (nMSE) \\
    \midrule
    unstructured        & no  & no  & ${>}10^{4}$          & no ($0$) \\
    symplectic          & yes & no  & $3.9$                & no ($0$) \\
    factored graph-net  & no  & yes & ${>}10^{4}$          & \textbf{yes} ($1$; $0.35$) \\
    factored symplectic & yes & yes & \textbf{$1.7$}       & \textbf{yes} ($1$; $0.08$) \\
    \bottomrule
  \end{tabular}
\end{table}

The two effects meet in Table~\ref{tab:dissociation}, which crosses energy conservation with coupling factorization in one matched comparison. The stability column reports true-energy drift at $2000$ steps, where bounded models remain below $2$ and divergent models reach the $10^{4}$ display cap. The inversion column reports regime match together with rollout nMSE under the sign flip. The pattern is a clean double dissociation. Stability appears when the update is symplectic, while inversion appears when the coupling is factored, and the two effects vary independently. The factored symplectic model is the only model that both remains stable and answers the changed law; the unstructured model does neither. We next remove the most plausible confounds, namely capacity, integrator choice, and the choice of formalism.

The first concern is that the counterfactual benefit may come from some feature that accompanies the factored potential rather than from factorization itself. We test this by varying the state representation, parameter map, and integrator one at a time using an exact linear diagnostic. Each candidate fits a stiffness matrix by ridge regression to force labels containing a fixed small non-gradient curl component and observation noise. Hamiltonian candidates are projected onto the symmetric cone, while the conditional parameter map is constructed from positive-localised features. The design crosses leapfrog, RK4, and implicit midpoint at three step sizes in two coercive coupled-oscillator systems. Every candidate fits the training regime comparably, so the comparison is not driven by state escape (paired runs, Holm $p\approx0.003$). At matched in-distribution fit, the factored map reaches error $0.0035$ at the unseen coupling, while an unrestricted map trained on the same data reaches $0.985$. The Hamiltonian representation keeps true-energy error at $5$ to $8\times10^{-4}$, compared with $0.31$ to $0.33$ for an unrestricted vector field.

\begin{wrapfigure}{l}{0.50\linewidth}
  \centering
  \vspace{-9pt}
  \includegraphics[width=\linewidth]{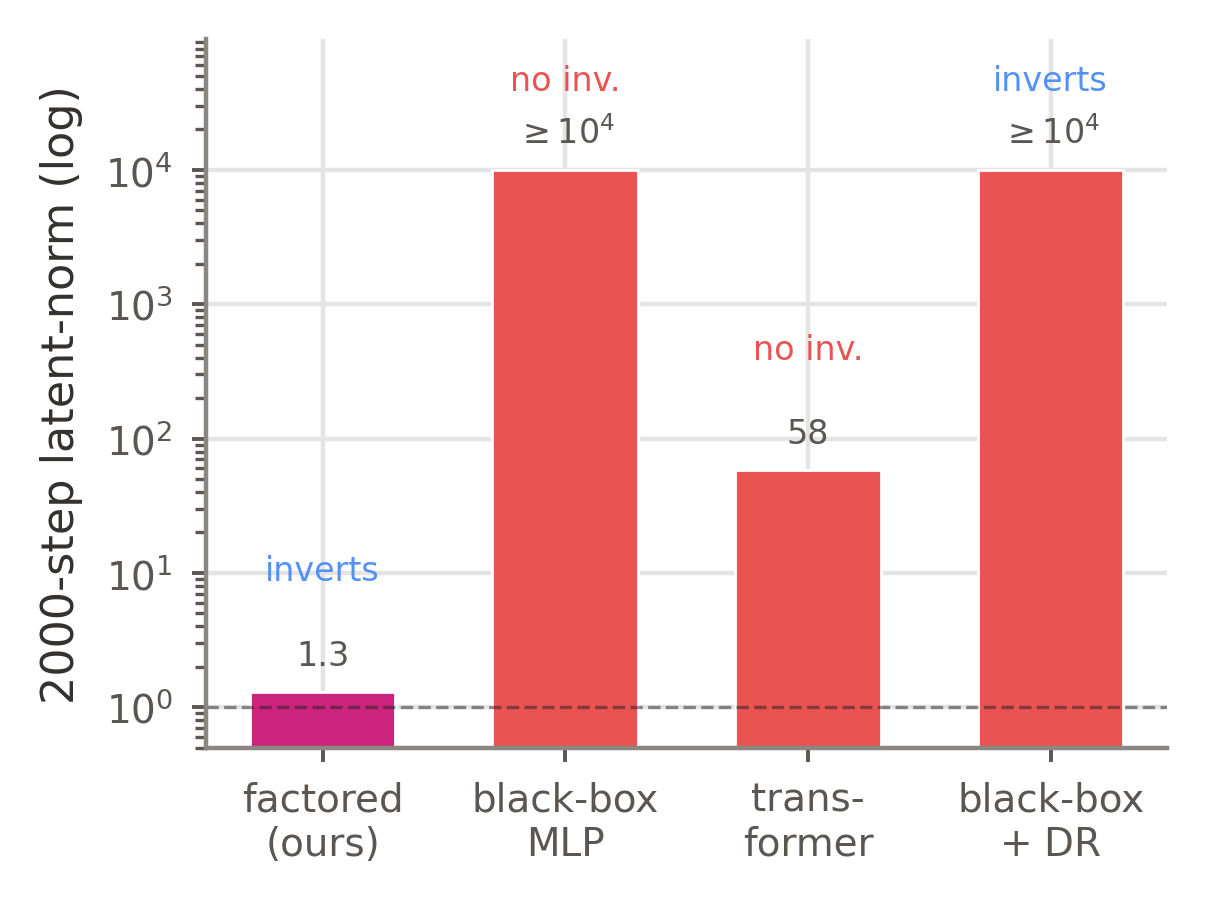}
  \caption{Stronger baselines on the sign flip: $2000$-step latent-norm (stability, log), with each model's inversion outcome marked.}
  \label{fig:baselines}
  \vspace{-10pt}
\end{wrapfigure}

Two controls clarify this result. First, the energy advantage tracks the injected curl: at zero curl both representations drift little ($0.0007$ and $0.002$), but as contamination rises to curl fractions $0.03$, $0.06$, $0.10$ the unrestricted model climbs to $0.35$, $0.94$, $4.30$ while the Hamiltonian stays near $0.0008$, and since no configuration escapes, the factorial isolates fidelity and inversion rather than mere prevention of divergence. Second, the inversion gap reflects the extrapolation prior, not capacity: under the same objective a multiplicative map transfers whether it has eight scalars or $4.5$k parameters, a flexible network receiving the coupling directly only partially extrapolates ($0.071$--$0.075$), and one receiving it through positive-localised features stays near $1.0$ despite the most parameters. The transferable ingredient is the multiplicative structure, supplied rather than identified from positive-support data.

The separation persists across formalisms: on the engine gravity flip both factored models invert, while the three non-factored structured models fail at comparable stability. Capacity is ruled out directly: across a width sweep the unstructured model diverges at every width and the factored model inverts at every width, while adding capacity to the non-factored conserving model only worsens its drift. Graph-network simulators give the complementary control: factoring their messages restores the short-horizon response, but both variants fail the $2000$-step test, so parameter transport alone does not provide long-horizon prediction (App.~\ref{app:B}).

A stronger architecture and a direct data remedy leave the dissociation unchanged (Fig.~\ref{fig:baselines}). A permutation-equivariant set-transformer, used as a stronger relational black box, still diverges over $2000$ steps and fails to invert the unseen sign flip. Domain randomisation trains a black box on both signs of the coupling and therefore recovers inversion because the counterfactual is now part of its training distribution, but it provides no stability and diverges like the ordinary black box. Only the factored symplectic model both remains bounded and inverts, showing that neither attention nor data coverage substitutes for the relevant structural commitment.

\subsection{Where conservation and factoring each stop helping}
\label{sec:r4}

Both structural priors have clear limits. Under linear drag, a port-Hamiltonian model dissipates energy with the correct sign and order of magnitude, at roughly half the true rate ($-0.014/-0.063$ compared with $-0.033/-0.091$). A conserving model cannot dissipate at all. Conservation is therefore the wrong prior once the physical system genuinely loses energy.

A factored law with the wrong parameter dependence gives a subtler failure. A linear-$G$ prior trained on a $G^{2}$ law fits the observed regime well ($0.014\approx0.010$) but predicts a spurious sign flip with error $0.343$. Only the correctly specified $G^{2}$ prior restores the counterfactual ($0.022$). A wrong interaction structure, such as a pairwise prior fitted to a three-body interaction, instead produces poor in-distribution fit ($0.43$), so coupling-power misspecification is the more dangerous case, failing without an in-distribution warning. The ambiguity nonetheless has a constructive resolution: when linear ($G$) and even ($|G|$) couplings fit the positive-support data equally well, positive-only queries reach unique selection only half the time ($0.50$ across four candidate maps), whereas a single fixed query at $G=-1$, using no ground truth, reaches $1.00$, and a non-adaptive query over the full domain is intermediate ($0.775$; App.~\ref{app:G}).

Perception introduces a third boundary: a single frame recovers position and fields but not velocity, so the inferred state falls off the correct energy shell, and the limiting variable is momentum rather than perception as a whole (App.~\ref{app:D}). One further diagnostic sharpens the map. Time-reversal consistency helps exactly when the underlying dynamics are reversible, giving retrace match $1$ under conservative contact and only partial recovery under dissipation, which refines the conclusion of \citet{botev2021priors}. A truth-assisted transport score from the bound in Prop.~\ref{prop:transport-bound} ranks which representation and map roll out worst under the flip (pooled Spearman $0.87$ versus $0.09$ for the initial-state term alone), a model-selection diagnostic rather than a certificate.

\subsection{The dissociation is not an artefact of the headline system}
\label{sec:r5}

\begin{table}[t]
  \centering
  \setlength{\tabcolsep}{10pt}
  \caption{Sign-flip inversion error (nMSE at $G<0$) across state groundings.}
  \label{tab:grounding}
  \begin{tabular}{@{}lcc@{}}
    \toprule
    State grounding & factored & unstructured twin \\
    \midrule
    oracle state                       & $0.067$          & --- \\
    pixel latent $+$ canonical kinetic & $0.065$--$0.092$ & $0.47$ / div. \\
    grounded pixel model               & $0.12$           & --- \\
    decode-free (JEPA) $+$ centroid anchor    & $0.40$           & $0.68$ \\
    learned object binder              & $0.15$           & never \\
    frozen VideoMAE                    & $0.13$           & $0.65$ \\
    frozen V-JEPA2                     & $0.59$           & $0.82$ \\
    DINOv2 $+$ frame differencing      & $0.096$          & $0.62$ \\
    \bottomrule
  \end{tabular}
\end{table}

The same two mechanisms extend well beyond smooth three-body gravity with oracle state. In generalized coordinates, a Hamiltonian model with a learned mass metric on a MuJoCo double pendulum keeps its energy drift near $0.1$ while unstructured baselines diverge, and the gravity flip still inverts only when the coupling is factored, with regime match $1$ versus $0$. Under contact, the symplectic model remains bounded as collisions become stiffer while the unstructured model diverges (App.~\ref{app:C}). The factored coupling also transfers across object count and force family. A model trained on three bodies still inverts with thirty-two; it transfers to Coulomb charges under a held-out sign combination, with error $0.52$ compared with $1.42$ for a black box; and a single factored model trained on a mixed gravity-plus-Coulomb generator inverts both laws about $120\times$ more accurately than a black box receiving the same couplings.

The inversion comes from the factored channel itself rather than from additional data or an unrestricted shortcut. Increasing the training set does not improve the non-factored model and slightly worsens it: its inversion error rises from nMSE $0.76$ with $32$ trajectories to $2.67$ with $512$, because additional attractive trajectories reinforce a continuation that becomes incorrect after the sign flip. The structural channel is also necessary rather than incidental. When a free black-box shortcut is added alongside the factored pathway, it absorbs about $42\%$ of the force and degrades inversion. Penalising the shortcut reduces its contribution to $10^{-4}$ and restores fidelity, improving nMSE from $0.43$ to $0.09$ (App.~\ref{app:E}).

Because neither behavior should rely on a hand-built state, we also ground the counterfactual in pixels (Table~\ref{tab:grounding}). A latent learned from images recovers inversion once a fixed canonical kinetic form is imposed, reaching near oracle-state error. A decode-free objective inverts once a label-free image centroid fixes the latent gauge. A learned object binder over a fixed detector restores inversion after removing object identity had broken it, succeeding on $10/10$ seeds versus $0/10$ for the unstructured twin (exact binomial $p\approx10^{-3}$). The same structure transfers to frozen public video encoders we did not train (VideoMAE, V-JEPA2), where the factored readout inverts more reliably than an unstructured readout.

Encoder scale alone does not recover the physical readout: in V-JEPA2 from $\sim300$M to $1$B parameters, inversion error is essentially flat ($1.02\times$) and the dissociation persists, so perception-side scale does not supply the counterfactual. The split appears within a single model: a readout recovers position and momentum from frozen V-JEPA2 tokens (nMSE $0.28$--$0.43$), yet its action-conditioned predictor under no-op actions is no better than repeating the final frame. The missing ingredient is momentum: per-frame encoders capture position but not velocity, and inversion returns only when momentum is estimated by differencing features across frames, reducing error from $0.62$ to $0.096$ and matching video encoders. Thus the momentum boundary of \S\ref{sec:r4} reappears at the representation level (App.~\ref{app:H}).

The structural account also extends beyond conservative dynamics. In a confined thermal bath, a positive-semidefinite friction port with fluctuation–dissipation noise reaches the correct stationary temperature, while a conserving model does not thermalize. The same passive port cannot represent anti-damping (Prop.~\ref{prop:tension}), so energy injection must enter via a separate factored drive. That drive recovers the driven steady state across trained and held-out strengths, including an unseen cooling sign (Prop.~\ref{prop:driven}). With video observations, a grounded energy-balance estimator recovers the damping rate; on time-reversed footage it yields a negative rate that the sign-constrained port cannot represent, giving an arrow-of-time measure paired with an architectural refusal (App.~\ref{app:F}, \ref{app:H}).

\section{Conclusion and Limitations}
\label{sec:conclusion}
\label{sec:limits}

In this work we investigated which structural commitments make a learned world model obey physical laws, evaluating each through long-horizon stability and counterfactual generalisation. First, energy conservation through a symplectic update provides long-horizon stability where equal-capacity predictors, tuned neural ordinary differential equations, and fixed energy penalties fail, yet it does not improve counterfactual generalisation. Second, making the interaction force linear in the physical coupling enables extrapolation to never-seen signs of that coupling, which an unrestricted parameter map fails to recover, yet it does not improve stability. Third, we map the boundary of each mechanism and identify where its structural prior ceases to help. Fourth, we extend the same account beyond conservative dynamics, using a passive friction port that thermalises correctly and a factored drive that generalises across unseen forcing strengths and signs. The broader lesson is that physical structure does not make a model uniformly more physical. Specific commitments produce specific behaviours, allowing a modeller to combine structural components according to the capabilities a task requires.

However some limitations remains. Conservation becomes the wrong prior when the true system genuinely dissipates energy, while factoring helps only when the intervened parameter follows the supplied functional form. A misspecified coupling power can therefore fit the training regime well while failing under intervention. A fixed kinetic term and a label-free anchor reduce the ambiguity, but joint learned detection and binding remain open. Finally, all evidence in this study comes from synthetic simulators and rendered observations rather than real video, the general deployment guarantee of our work is the next step we aim to ensure.

\section*{AI use statement}
A large language model assisted the authors in polishing the language of the manuscript. The authors verified that all claims, proofs, mathematical formulations, and reported values are valid, checked them against the implementation and results, and take full responsibility for the manuscript.

\section*{Ethics statement}
This work uses only synthetic physics simulations, so no human subjects,
personal data, or scraped datasets are involved. We do not foresee direct
harmful applications of the findings, which concern the architectural conditions
for physical generalisation in learned simulators. The main integrity-relevant
practices, namely prospective committed protocols with metrics fixed in advance,
equal-capacity baselines, and the reporting of negative and boundary results, are
described in the paper and its record appendices.

\section*{Reproducibility statement}
App.~\ref{app:formal} specifies all systems, models, integrators, metrics, and the
perception pipeline. App.~\ref{app:theory} gives the derivations and proofs.
App.~\ref{app:results} gives the per-experiment protocols, seeds, and per-seed results.
App.~\ref{app:prereg} gives the pre-registration protocol (metrics fixed in advance, the
equal-capacity rule, and non-finite capping), the statistics conventions, and the compute
and hyperparameter defaults (Table~\ref{tab:compute}). Our implementation and the
pre-registration documents are provided as supplementary material.

\bibliography{refs}
\bibliographystyle{iclr2027_conference}

\appendix

\section{Formal setup and definitions}
\label{app:formal}

A system of $N$ bodies has state $x=(q,p)$, with
positions $q\in\mathbb{R}^{Nd}$ and conjugate momenta $p\in\mathbb{R}^{Nd}$ ($d$
is the spatial dimension, equal to $2$ except for the pendulum, where $q$ are joint
angles). A parameter vector $c=(G,\{m_i\})$ collects the coupling constant and
masses. A dynamics model is a map $f_\theta:(x_t,c)\mapsto x_{t+1}$, iterated to
a rollout $x_{0:T}$. Ground truth comes from a reference integrator $\Phi$.
Training minimises one-step squared error on forward-law data only. The $k$-step
variant instead sums this error over a $k$-step rollout.

The evaluated systems are as follows. (i) \textbf{Conservative gravity} (oracle state):
$H=\sum_i\lVert p_i\rVert^2/2m_i+V(q)$ with softened-core potential
\begin{equation}
  V(q)=-\sum_{i<j}\frac{G\,m_im_j}{\sqrt{\lVert q_i-q_j\rVert^2+\epsilon^2}},
  \qquad \epsilon=0.1,\; dt=0.002 .
  \label{eq:grav}
\end{equation}
The softening keeps the ground-truth integrator near-conservative (drift
  ${\sim}0.01$; $\epsilon=10^{-2}$ increases the reference drift by
$110\times$). This softening and the associated reference-integrator drift are reported with
all gravity results. (ii) \textbf{Coulomb}: the same
form with $Gm_im_j$ replaced by a charge product $k\,z_iz_j$, $z_i\in\{\pm1\}$.
(iii) \textbf{Springs-with-cutoff} (necessity screen). (iv) \textbf{Soft-sphere
disks} on a periodic torus (minimum-image), contact stiffness series set by the
stiffness $k\in\{30,300,\infty\}$. (v) \textbf{Dissipative gravity}:
Eq.~\eqref{eq:grav} plus linear drag, $\dot p_i=-\partial V/\partial
q_i-\gamma\,p_i/m_i$. (vi) \textbf{Double pendulum} (MuJoCo): joint angles $q$,
configuration-dependent $M(q)$, generalised momentum $p=M(q)\dot q$, RK4 engine
ground truth; conservative and joint-damped variants.

Parameter counts are approximately matched in the five-model gravity comparison.
Other comparisons use the experiment-specific budgets reported below. Equal
parameter count does not imply equal functional capacity.
The unstructured MLP is a residual MLP, $x_{t+1}=x_t+g_\theta(x_t,c)$.
The energy-penalised model is the unstructured MLP plus an energy-drift penalty
$\lambda\,\lvert E_\theta(x_{t+1})-E_\theta(x_t)\rvert$.
The interpretation of this baseline depends on how $E_\theta$ is obtained:
a jointly learned unconstrained energy can become constant and make the
penalty vacuous. Its failure alone therefore cannot establish the inadequacy
of a penalty evaluated using a fixed physical energy.
The penalty energy is \emph{fixed and analytic}, not learned or pretrained: the true
physical energy $E=\sum_i\lVert p_i\rVert^2/2m_i - G\sum_{i<j}m_im_j/\sqrt{r_{ij}^2+\epsilon^2}$
with the true masses and $G$ and the same softening $\epsilon=0.1$ as the generator,
applied as $\lambda\,(E(\hat x_{t+1})-E(x_t))^2$
on the de-standardised prediction. The penalty does not by itself exclude the
constant-energy solution (a wrong trajectory that holds $E$ near its initial value), which
is why rollout error is always co-reported with energy drift. In the raw record the
$\lambda=100$ arm does not take that route: its energy error and its rollout error grow
together (drift $115$--${>}10^{4}$ with normalised rollout error $82$--${>}10^{4}$ at
$2000$ steps), after an early phase in which its energy error is below the symplectic
model's (Table~\ref{tab:f1}).
The symplectic model is a symplectic HNN, $H_\theta=T_\theta(p)+V_\theta(q,c)$, one leapfrog
step.
The factored symplectic model is the factored HNN, potential Eq.~\eqref{eq:factored}; the canonical
\emph{fixed-kinetic} variant sets $T=\sum_i\lVert p_i\rVert^2/2m_i$.
DeLaN \citep{lutter2019delan}: Lagrangian $L_\theta=\tfrac12\dot
q^\top M_\theta(q)\dot q-V(q,c)$, Euler--Lagrange accelerations, factored/
non-factored $V$.
SymODEN \citep{zhong2020symoden}: control-Hamiltonian, learned
$M_\theta(q)^{-1}$ and $V_\theta$, RK4.
Generalised-coordinate HNN: $H=\tfrac12 p^\top M_\theta(q)^{-1}p+g\,U_\theta(q)$,
Cholesky-SPD metric, factored potential.
Port-Hamiltonian: conservative flow plus a learned dissipative port
$-\gamma\,R_\theta(p)$.
The graph-network simulator and its factored variant \citep{sanchezgonzalez2020gns,battaglia2016interaction}:
an interaction network; the factored form multiplies the aggregated message by $G$.
Hybrid (necessity): factored pathway plus a free black-box shortcut with an
$L_2$ penalty $\lambda$ on the shortcut force.

Separable Hamiltonians use symplectic leapfrog. The
non-separable generalised-coordinate Hamiltonian uses Tao's explicit symplectic
scheme \citep{tao2016explicit}, admitted only after a calibration gate (a known
Hamiltonian must stay within $10^{-4}$ relative energy drift over 2000 steps).

The relative energy drift is
\begin{equation}
  D(t)=\frac{\lvert E(x_t)-E(x_0)\rvert}{\lvert E(x_0)\rvert},
\end{equation}
with $E$ the true energy of the predicted state; regime-match and the affine
identifiability residual are Def.~\ref{def:metrics}. Energy-shed rate
(dissipation) and time-reversal match (retrace a reversed-momentum end state)
are defined analogously.

For the stochastic dissipative results (\S\ref{sec:r5}, App.~\ref{app:F}),
we separate mechanical friction from a closed-system GENERIC formulation.
For canonical kinetic energy and momentum-space friction, the continuous model is
\begin{equation}
 dq=\nabla_p H\,dt,\qquad
 dp=\big[-\nabla_qH-M\nabla_pH+f_{\rm ext}\big]dt+\Sigma\,dW,
 \qquad \Sigma\Sigma^\top=2k_BT\,M,\quad M\succeq0 .
 \label{eq:generic}
\end{equation}
The deterministic friction contribution satisfies
$\dot H_{\rm fric}=-(\nabla_pH)^\top M\nabla_pH\le0$.
This is a passive mechanical channel, not a claim that subsystem energy
cannot rise through work or thermal fluctuations. State-dependent
mobilities require a separately specified stochastic drift and convention;
the isolated OU result below concerns constant coefficients.
A closed-system GENERIC model additionally requires degeneracy conditions
$L\nabla S=0$ and $M\nabla E=0$, beyond antisymmetry and positive semidefiniteness
\citep{lee2021gfinn}. Those closed-system laws are not established by the
passive-channel tests here. The
consequences of the imposed relation are reported. (vii)
\textbf{Confined Langevin $N$-body}: softened gravity plus a harmonic trap and an
exact Ornstein--Uhlenbeck thermostat at bath temperature $T$; the diagnostic
observable is the kinetic temperature $\hat T=\langle p^2/m\rangle$ (equal to $T$
at equilibrium by equipartition, $k_B{=}1$). (viii) \textbf{Damped double pendulum}
with linear and nonlinear ($v^2$) joint friction (the engine of \S\ref{sec:r5}).
(ix) \textbf{Driven-dissipative particles}: system (vii) plus a linear active drive
of strength $A$ opposing friction $\gamma$, reaching a \emph{non-equilibrium steady
state} at $T_{\text{eff}}=T\gamma/(\gamma-A)$ (App.~\ref{app:theory}). Models add a
dissipation channel to the symplectic backbone: a diagonal positive-semidefinite
friction with fluctuation--dissipation noise (the \emph{guarantee} side, ``$M\succeq
0$''); the full PSD friction operator $M=LL^\top$ (Cholesky) with correlated noise;
a \emph{factored} friction $M=\gamma M_0$ whose sign flips with $\gamma$ and
permits inversion with respect to $\gamma$; and, for system (ix), a factored drive $A\,w_\theta$ with
net damping $\gamma_f-A\,w$ and fluctuation--dissipation noise tied to $\gamma_f$.
Primary metric is the temperature relative error $|\hat T-T^\ast|/T^\ast$ (against
  the correct equilibrium or steady-state temperature $T^\ast$, the true simulator's
measured value which targets agreement with the discrete reference rather than cancellation of model-dependent integration bias); secondary is the energy-shed rate and
the injection rate at $\gamma<0$.

A scene is rendered to synthetic frames by a fixed
differentiable renderer $R$ (Gaussian or coloured-disk splats at body positions);
an encoder $E_\phi$ maps a frame stack to a latent $(\hat q,\hat p)$. In the
supervised setting, $E_\phi$ is trained against the oracle state. In the
discovered setting, it is trained only through future-frame prediction,
$\lVert R(\text{rollout of }E_\phi(\cdot))-\text{frames}\rVert^2$, without
oracle-state supervision. The symplectic-consistency prior adds $\lVert\hat
p/m-v_{\text{obs}}\rVert^2$, $v_{\text{obs}}$ a finite difference of soft-argmax
centroids across frames (an observed velocity, not oracle state). In the
end-to-end pixel comparison every model shares $E_\phi$ and $R$; a separate
branch removes $R$ to test unsupervised discovery.

\section{Derivations}
\label{app:theory}

\begin{proof}[Proof of Proposition~\ref{prop:invert}]
  Write $f_i(q,G)=-\partial V/\partial q_i=-G\,\partial_{q_i}U_\theta(q)$.
  $V$ is linear in $G$ with $G$-independent coefficient $\sum_{i<j}\varphi_\theta$,
  so $\partial V/\partial q_i$ is linear in $G$ and $f_i(q,-G)=-f_i(q,G)$. The
  Hamiltonian vector field's momentum component $\dot p=f$ therefore changes sign
  with $G$, whereas $\dot q=\partial T/\partial p$ is unchanged. The resulting
  trajectory is the flow generated by the same learned potential under coupling
  $-G$.
\end{proof}
\noindent\emph{Comparison with the non-factored model.} A black-box conditional potential
$V_\theta(q,G)$ has $\partial V_\theta/\partial q$ with no constraint linking
$G>0$ to $G<0$, so $f_\theta(q,-G)$ is unconstrained by $G>0$ training and
inversion is not guaranteed; empirically it fails (App.~\ref{app:B}). Thus,
the dissociation isolates \emph{factoring}, rather than conservation, as the
inversion ingredient.

\noindent\emph{Proposition~\ref{prop:stab}, formal statement.} Let $H_\theta$ be
analytic on a neighbourhood of a compact set that contains the numerical orbit, and let
the step $dt$ be below the threshold of backward-error analysis for that set. Then the
leapfrog map applied to \eqref{eq:ham} is the exact time-$dt$ flow of a modified
Hamiltonian $\widetilde H_\theta=H_\theta+O(dt^2)$ up to a remainder of size
$O(e^{-c/dt})$ per step, so $\widetilde H_\theta$ is conserved up to $O(e^{-c/dt})$ over
times of order $e^{c/dt}$ and the numerical $H_\theta$ stays within $O(dt^2)$ of its
initial value over that interval \citep{hairer2006geometric}. The statement is about
$\widetilde H_\theta$ and the learned $H_\theta$; it says nothing about the physical energy
unless $H_\theta$ equals it.
\noindent\emph{Empirical regularity (not a theorem).} A generic explicit map
$x_{t+1}=x_t+g_\theta(x_t)$ has no corresponding conservation guarantee. Its
energy behaviour therefore depends on the learned map; a contraction, for
example, may remain bounded. In these experiments, the unstructured and
graph-network models exhibit secular drift and reach the reporting cap. The
proposition supplies a sufficient condition for long-time near-conservation,
whereas divergence of the unstructured models is an empirical observation.
\noindent\emph{Remark.} Training-noise injection reduces the per-step prediction
error but does not itself provide an energy-boundedness guarantee. In the graph-network simulator it
delays, but does not prevent, the observed drift (App.~\ref{app:B}). The learned
$\widetilde H_\theta\neq H_{\text{true}}$, which
is why the symplectic model is \emph{bounded but not flat}; multi-step training and
longer horizons shrink and saturate the gap.

\begin{proposition}[Finite-time transport bound]
  \label{prop:transport-bound}
  Let $F^\star$ be the reference field, $F_\theta^{\rm true}$ the learned
  content evaluated with the true parameter map, and $F_\theta^{\rm decl}$
  the deployed field using the declared map, all at the tested coupling.
  Suppose the reference and deployed trajectories remain in a compact region
  $\Omega$ on $[0,T]$, and $F_\theta^{\rm decl}$ is Lipschitz there with
  constant $L\ge0$. Define
  \[
   \epsilon_c=\sup_\Omega\|F_\theta^{\rm true}-F^\star\|,\qquad
   \epsilon_m=\sup_\Omega\|F_\theta^{\rm decl}-F_\theta^{\rm true}\|.
  \]
  Then
  \[
   \|\hat x_t-x_t\|
   \le e^{Lt}\big(\|\hat x_0-x_0\|+t(\epsilon_c+\epsilon_m)\big),
   \qquad 0\le t\le T .
  \]
\end{proposition}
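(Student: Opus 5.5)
The plan is the standard Gr\"onwall argument for a perturbed ODE. I interpret the setting in continuous time: the reference trajectory solves $\dot x_t=F^\star(x_t)$, and the deployed trajectory solves $\dot{\hat x}_t=F_\theta^{\rm decl}(\hat x_t)$. Both stay in $\Omega$ on $[0,T]$.

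Let $e_t=\hat x_t-x_t$. Subtract the two integral equations and insert $\pm F_\theta^{\rm decl}(x_s)$ and $\pm F_\theta^{\rm true}(x_s)$ inside the integrand. This gives the split
\[
e_t=e_0+\int_0^t\big[F_\theta^{\rm decl}(\hat x_s)-F_\theta^{\rm decl}(x_s)\big]ds
+\int_0^t\big[F_\theta^{\rm decl}(x_s)-F_\theta^{\rm true}(x_s)\big]ds
+\int_0^t\big[F_\theta^{\rm true}(x_s)-F^\star(x_s)\big]ds .
\]
The Lipschitz constant $L$ bounds the first integral by $L\int_0^t\|e_s\|ds$. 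The Lipschitz bound is legitimate because both $\hat x_s$ and $x_s$ lie in $\Omega$; if $\Omega$ is not convex, I only need Lipschitz continuity between pairs of points of $\Omega$, which is what the hypothesis asserts. The second and third integrals are bounded by $t\epsilon_m$ and $t\epsilon_c$, since $x_s\in\Omega$ and the suprema are taken over $\Omega$. Hence
\[
\|e_t\|\le a(t)+L\int_0^t\|e_s\|\,ds,\qquad a(t)=\|e_0\|+t(\epsilon_c+\epsilon_m).
\]

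Since $a$ is nondecreasing, the integral form of Gr\"onwall's inequality gives $\|e_t\|\le a(t)e^{Lt}$, which is exactly the claimed bound. If a self-contained argument is preferred, set $\phi(t)=\int_0^t\|e_s\|ds$. Then $(e^{-Ls}\phi)'\le e^{-Ls}a(s)\le e^{-Ls}a(t)$ for $s\le t$. Integrating, $\phi(t)\le a(t)(e^{Lt}-1)/L$, and substituting back yields $\|e_t\|\le a(t)e^{Lt}$. The case $L=0$ is immediate.

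I expect the main obstacle to be the interpretation rather than the estimate. If the models are the discrete leapfrog maps used in the paper instead of continuous flows, the same telescoping argument works on one-step maps with constants $(1+L\,dt)^n\le e^{Lt}$, provided the per-step defects are stated as fields times $dt$. I would note this as a remark and keep the continuous-time proof as the main argument. I would also note that the bound uses no symplectic or factored structure, which is why the paper treats it as a diagnostic rather than a certificate.
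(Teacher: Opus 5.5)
Your proposal is correct and follows the paper's argument. You add and subtract the deployed field (and the true-map field) at the reference state to bound the error growth by $L\|\hat x_s-x_s\|+\epsilon_c+\epsilon_m$, then integrate and apply Gr\"onwall's inequality with the nondecreasing forcing $a(t)=\|\hat x_0-x_0\|+t(\epsilon_c+\epsilon_m)$; your write-up simply makes explicit the continuous-time setting and the Gr\"onwall step that the paper compresses into one line.
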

\begin{proof}
  Adding and subtracting the deployed field at the reference state bounds the
  trajectory-error growth by $L\|\hat x-x\|+\epsilon_c+\epsilon_m$.
  Integration and Gronwall's inequality give the displayed bound.
\end{proof}
The auxiliary true-map field and both suprema are truth-assisted quantities.
They are not generally available at deployment. The bound assumes trajectories
remain in $\Omega$ and therefore does not establish global stability.
The reported pooled Spearman association ($0.87$ versus $0.09$ using only
initial-state error) describes a ranking on the evaluated models
(\S\ref{sec:r4}). It is not coverage calibration or a certified prediction
of error on new model families.

\begin{proposition}[Passive friction and sign-reversed anti-damping]
  \label{prop:tension}
  Consider the deterministic momentum-friction channel
  $\dot p|_{\rm fric}=-M\nabla_pH$ with $M\succeq0$.
  Its contribution to $\dot H$ is nonpositive for every represented state.
  If $M=\gamma M_0$, with $M_0\succ0$, changing $\gamma$ to a negative
  value makes this contribution positive whenever $\nabla_pH\ne0$.
  Thus a globally PSD parameterisation cannot represent that sign-reversed
  channel. For a bath with $T>0$, negative $\gamma$ also prevents
  $\Sigma\Sigma^\top=2k_BT\gamma M_0$ from being a real noise covariance.
\end{proposition}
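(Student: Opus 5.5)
The plan is a direct computation along the flow, followed by two sign arguments. First I would compute the friction channel's contribution to the energy rate. Along $\dot p|_{\rm fric}=-M\nabla_pH$, with $q$ unaffected by this channel, the chain rule gives
\[
\dot H|_{\rm fric}=(\nabla_pH)^\top\dot p|_{\rm fric}=-(\nabla_pH)^\top M\,\nabla_pH .
\]
This is the identity already recorded after Eq.~\eqref{eq:generic}. Since $M\succeq0$, the quadratic form $v^\top Mv$ is nonnegative for every $v$. Taking $v=\nabla_pH$ evaluated at an arbitrary represented state gives $\dot H|_{\rm fric}\le0$ pointwise, which proves the first claim. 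Nothing about $H$ beyond differentiability in $p$ is needed.

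Next I would substitute the factored form $M=\gamma M_0$ with $M_0\succ0$. This gives $\dot H|_{\rm fric}=-\gamma\,(\nabla_pH)^\top M_0\nabla_pH$. Strict positive definiteness yields $(\nabla_pH)^\top M_0\nabla_pH>0$ whenever $\nabla_pH\neq0$. Hence for $\gamma<0$ the contribution is strictly positive at every such state, which proves the second claim.

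The representability claim then follows by contraposition. Suppose a globally PSD parameterisation represented the sign-reversed channel, i.e.\ produced some $M'\succeq0$ with $-M'\nabla_pH=-\gamma M_0\nabla_pH$ at the relevant states. By the first step its energy contribution would be $\le0$ at every state. But by the second step the true contribution is $>0$ at every state with $\nabla_pH\ne0$. Such states exist, for example any nonzero momentum under canonical kinetic energy, so the two cannot coincide. This is the step I expect to need the most care. Representation must be read as equality of the induced friction force, or at least of the energy contribution, on states with nonzero $\nabla_pH$. Without that reading, pointwise vector equality could in principle be met by some $M'$ at isolated states. I would state it as: no $M'\succeq0$ reproduces the channel's energy contribution on such states, which is exactly the obstruction.

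Finally, for the noise: with $T>0$ and $\gamma<0$, the matrix $2k_BT\gamma M_0$ is negative definite. Any real $\Sigma$, however, gives $v^\top\Sigma\Sigma^\top v=\|\Sigma^\top v\|^2\ge0$. Any nonzero $v$ therefore produces a contradiction, so no real noise covariance satisfies the fluctuation--dissipation relation.
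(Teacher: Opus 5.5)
Your proposal is correct and follows the paper's proof essentially step for step: the chain-rule identity $\dot H_{\rm fric}=-(\nabla_pH)^\top M\nabla_pH$, the sign from (semi)definiteness and its reversal for $\gamma<0$, and the observation that $\Sigma\Sigma^\top\succeq0$ whereas $2k_BT\gamma M_0$ is negative definite. Your extra care about what ``represent'' means is harmless but not needed: if $M'\nabla_pH=\gamma M_0\nabla_pH$ holds at even a single state with $\nabla_pH\ne0$, pairing both sides with $\nabla_pH$ already contradicts $M'\succeq0$.
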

\begin{proof}
  The chain rule gives
  $\dot H_{\rm fric}=-(\nabla_pH)^\top M\nabla_pH$.
  Its sign follows from positive semidefiniteness; for negative $\gamma$
  and nonzero velocity the sign reverses. A covariance is positive
  semidefinite, whereas $2k_BT\gamma M_0$ is negative definite.
\end{proof}
This restriction concerns the same passive channel. It neither forbids
external work or thermal energy injection nor makes every anti-damped
stochastic model ill-posed. It also does not establish the invariant law
of a coupled numerical simulator (App.~\ref{app:F}).

\begin{proposition}[Effective stationary temperature for linear drive]
  \label{prop:driven}
  Let $H(q,p)=U(q)+\|p\|^2/(2m)$, $m>0$, $\gamma>0$, $T>0$, and
  $A<\gamma$. Consider the continuous underdamped dynamics
  \[
   dq=(p/m)\,dt,\qquad
   dp=-\nabla U(q)\,dt-(\gamma-A)(p/m)\,dt+
       \sqrt{2\gamma k_BT}\,dW .
  \]
  If the density is normalizable and boundary terms vanish, then
  $\rho_\infty(q,p)\propto\exp[-H(q,p)/(k_BT_{\rm eff})]$
  is stationary, where $T_{\rm eff}=T\gamma/(\gamma-A)$.
  Each momentum component has variance $m k_BT_{\rm eff}$.
\end{proposition}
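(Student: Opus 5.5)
The plan is to verify directly that the candidate density solves the stationary Fokker--Planck equation. The statement assumes normalizability and vanishing boundary terms, so stationarity reduces to showing that the Fokker--Planck operator annihilates $\rho_\infty$. With $\beta_{\rm eff}=1/(k_BT_{\rm eff})$ and $\rho_\infty=Z^{-1}e^{-\beta_{\rm eff}H}$, the forward equation for the given SDE is
\[
\partial_t\rho=-\nabla_q\cdot\big((p/m)\rho\big)+\nabla_p\cdot\big((\nabla U(q)+(\gamma-A)p/m)\rho\big)+\gamma k_BT\,\Delta_p\rho .
\]
The diffusion coefficient is $\tfrac12\cdot 2\gamma k_BT$, since the noise amplitude is $\sqrt{2\gamma k_BT}$.

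I will split the generator into a Hamiltonian (Liouville) part and an Ornstein--Uhlenbeck part in $p$, and show that each annihilates $\rho_\infty$ separately.

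\textbf{Liouville part.} The terms $-\nabla_q\cdot((p/m)\rho)+\nabla_p\cdot(\nabla U\,\rho)$ equal $-\{\rho,H\}$, because the Hamiltonian vector field is divergence-free. This Poisson bracket vanishes for any $\rho$ that is a function of $H$ alone, in particular for $\rho_\infty$.

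\textbf{OU part.} Since $\nabla_p\rho_\infty=-\beta_{\rm eff}(p/m)\rho_\infty$, the remaining terms can be written as
\[
\nabla_p\cdot\Big[\big((\gamma-A)\,p/m\big)\rho_\infty+\gamma k_BT\,\nabla_p\rho_\infty\Big]
=\nabla_p\cdot\Big[\big((\gamma-A)-\gamma k_BT\beta_{\rm eff}\big)(p/m)\,\rho_\infty\Big].
\]
This makes the probability current zero pointwise, not merely divergence-free, exactly when $\gamma k_BT\beta_{\rm eff}=\gamma-A$. That condition is $k_BT_{\rm eff}=\gamma k_BT/(\gamma-A)$, i.e.\ $T_{\rm eff}=T\gamma/(\gamma-A)$. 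The hypothesis $A<\gamma$ guarantees $\gamma-A>0$, so $T_{\rm eff}>0$ and $\rho_\infty$ is a genuine positive density. Summing the two parts, the Fokker--Planck operator annihilates $\rho_\infty$.

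\textbf{Momentum marginal.} Because $H$ is separable, $\rho_\infty$ factorizes as $e^{-\beta_{\rm eff}U(q)}\,e^{-\beta_{\rm eff}\|p\|^2/(2m)}$. The momentum marginal is therefore an isotropic Gaussian, and each component has variance $m/\beta_{\rm eff}=mk_BT_{\rm eff}$, which is the final claim.

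\textbf{Expected obstacle.} Only bookkeeping is involved: getting the factor $\tfrac12$ of the diffusion coefficient right and tracking signs in the $p$-drift. The key point to state explicitly is that the drive merely rescales the effective friction to $\gamma-A$ while leaving the noise tied to $\gamma$, and this mismatch is what produces the rescaled temperature.

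Uniqueness or convergence to $\rho_\infty$ is not claimed, so hypoellipticity arguments are not needed. The integration by parts that would be used to argue stationarity in weak form is covered by the assumption that boundary terms vanish.
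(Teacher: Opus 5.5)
Your proposal is correct and takes essentially the same approach as the paper. In both, Hamiltonian transport annihilates any function of $H$, and the friction--noise momentum current $(\gamma-A)(p/m)\rho_\infty+\gamma k_BT\nabla_p\rho_\infty$ vanishes pointwise exactly when $T_{\rm eff}=T\gamma/(\gamma-A)$; the separable Gibbs form then gives a Gaussian momentum marginal with variance $mk_BT_{\rm eff}$.
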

\begin{proof}
  Hamiltonian transport annihilates any differentiable density depending
  only on $H$. The momentum probability current from friction and noise is
  \[
    (\gamma-A)(p/m)\rho_\infty+
        \gamma k_BT\nabla_p\rho_\infty
    =\big((\gamma-A)-\gamma T/T_{\rm eff}\big)(p/m)\rho_\infty=0 .
  \]
  Thus the Fokker--Planck operator annihilates the stated density.
  Its momentum marginal is Gaussian with the given variance.
\end{proof}
The isolated OU momentum process has no stationary probability law when
$A\ge\gamma$ and $T>0$. The proposition makes no corresponding
unqualified assertion for arbitrary coupled nonlinear models, and proves
neither uniqueness nor exact preservation by a numerical integrator.
The externally driven bath interpretation is distinct from the mathematical
effective Gibbs form. A factored drive supplies the coefficient dependence;
predictive accuracy still depends on the remaining learned dynamics.

\begin{definition}[Regime-match and affine recovery]
  \label{def:metrics}
  \emph{Regime-match.} Let $\hat x$ be the model's rollout under the flipped law,
  and $x^{-},x^{+}$ the ground-truth flipped-law and training-law (default) rollouts, both from the
  \emph{true} initial state (so initialisation error is counted, not cancelled).
  The regime is matched iff $\lVert\hat x-x^{-}\rVert<\lVert\hat x-x^{+}\rVert$; we
  report the matched fraction and the normalised MSE $\lVert\hat x-x^{-}\rVert^2/
  \mathrm{Var}(x^{-})$, for nonzero reference variance.
  Ties do not count as matches. A non-finite rollout is reported as a failure
  rather than assigned a nearest-reference label.
  Two conventions are made explicit here. First, the distance is the batch-pooled
  normalised MSE over the whole rollout, which gives one regime-match per run rather than
  per episode. Second, the sign convention is fixed: the correct reference is rolled at the
  tested coupling $G_{\rm test}$ and the default reference at $|G_{\rm test}|$, and the
  regime is scored as $\mathbb{1}[\mathrm{nMSE}(\hat x,x^{-})<\mathrm{nMSE}(\hat x,x^{+})]$.
  \emph{Affine recovery residual} of a discovered latent $\hat z$ with respect to
  the true $(q,p)$: $\min_{A,b}\lVert A\hat z+b-(q,p)\rVert^2/\mathrm{Var}(q,p)$. This
  measures linear recoverability under a fitted affine map, not uniqueness
  of the latent coordinates. Values below $0.5$ are a descriptive threshold
  used in this study, not an identifiability theorem.
\end{definition}

\section{Extended results and findings}
\label{app:results}

This appendix reports the findings summarised in the main text,
together with their figures, multi-seed estimates, experiment-specific methods,
and stated limitations. The unified notation of App.~\ref{app:formal} is used:
state $x=(q,p)$, coupling $G$,
relative energy drift $D(t)$ (Eq.~\ref{eq:drift}), regime-match and affine
identifiability residual (Def.~\ref{def:metrics}). Each finding is tagged
\emph{[well-supported]}, \emph{[directional]}, \emph{[supported]}, or
\emph{[boundary]} (criteria in App.~\ref{app:evidence}).

Aggregates are the mean over at least three seeds, and at least five seeds at
scale, with the available aggregate summaries in App.~\ref{app:results}. Where a
bracket is labelled a normal-approximation 95\% interval, it retains that meaning
even for small seed counts and is not an observed range. Such small-sample
intervals are descriptive and do not justify significance from non-overlap.
Individual seed values are not tabulated for every comparison.

Display-clipped finite errors and non-finite numerical failures are distinct. A
sentinel substituted for a failure is not a measured error and cannot establish a
quantitative error ratio. These figures retain their plotting thresholds, and
the text distinguishes what the two mean. A pre-registered absolute-error
\emph{backstop} marks a perception result invalid when even its in-distribution
fidelity is worse than the threshold (such runs are reported as invalid, not
compared). Regime-match is meaningful only for a \emph{bounded} rollout. For a
diverged rollout, such as that of the graph network, comparison with the nearest
reference degenerates and is not used as evidence. Such cases are identified
explicitly. Table~\ref{tab:overview} summarises all the findings; in it, ``match'' is the
regime match (1 = follows the unseen flipped law), ``nMSE'' the normalised inversion error,
and ``drift'' the relative true-energy drift of Eq.~\ref{eq:drift}.

{\small
\begin{longtable}{@{}p{0.17\linewidth}p{0.08\linewidth}p{0.52\linewidth}p{0.12\linewidth}@{}}
  \caption{Summary of findings and evidence classes.}\label{tab:overview}\\
  \toprule
  Finding & System & Summary statistic & Class \\
  \midrule
  \endfirsthead
  \multicolumn{4}{c}{\tablename~\thetable\ (continued)}\\
  \toprule
  Finding & System & Summary statistic & Class \\
  \midrule
  \endhead
  \midrule
  \multicolumn{4}{r}{\emph{continued on the next page}}\\
  \endfoot
  \bottomrule
  \endlastfoot
      Stability & gravity & symplectic model drift $0.40\!\to\!7.1$ vs cap $10^4$ & well-supported \\
      Tuning-robust & gravity & every unstructured arm passes the cap; $\lambda{=}100$ delays but does not bound drift & well-supported \\
      Multi-step & gravity & factored symplectic model 2000-step drift $4.4\!\to\!1.9$ & directional \\
      10k-step & gravity & factored symplectic model drift saturates $1.9\!\to\!0.89$ & well-supported \\
      Non-factored & gravity & symplectic model match $0$ (no inversion) & boundary \\
      Factored & gravity & factored symplectic model match $1$, nMSE $0.03$--$0.18$ & well-supported \\
      Param.\ extrap. & gravity & smooth: $2\times$ edge, no abrupt degradation & directional \\
      DeLaN/SymODEN & engine & factored match $1$ vs non-factored $0$ & well-supported \\
      Graph-net & gravity & reported response labels $1$ vs $0$; both fail long-horizon test & boundary \\
      Contacts & disks & bounded stiffness series; sym-inversion $1$ & directional \\
      Engine & pendulum & factored symplectic model match $1$, nMSE $8.7e{-}4$ & well-supported \\
      Contact aug. & disks & no better than smooth model & boundary \\
      Dissipation & drag & conservation sheds $0$; port sheds & boundary \\
      Supervised perc. & pixels & inversion retained; error increases with speed & directional \\
      Discovered perc. & pixels & fixed-kinetic nMSE $0.075\!\approx\!0.067$ & well-supported \\
      Momentum recov. & pixels & $p$-resid $0.72\!\to\!0.35$; pos.\ open & directional \\
      Pixel comparison & pixels & factored fixed-kinetic model nMSE $0.12$; others fail & well-supported \\
      Necessity & gravity & shortcut $42\%\!\to\!10^{-4}$; nMSE $0.43\!\to\!0.09$ & directional \\
      Comp.\ \& scale & gravity & match $1$ to $N{=}32$; drift $0.2\!\to\!102$ & directional \\
      Coulomb, held-out sign & Coulomb & factored $0.52$ vs black-box $1.42$; $2.7\times$ & well-supported \\
      Mixed two-family inversion & Coulomb & factored $0.0035$ vs black-box $0.425$; $120\times$ & well-supported \\
      Data-efficiency & gravity & inverts with 32 traj.; symplectic model error rises with data & well-supported \\
      Capacity & gravity & unstructured MLP diverges at every width & well-supported \\
      Composition & drag & three ingredients compose & directional \\
      Control & gravity & structured plans transfer; unstructured MLP misestimates & directional \\
      Rendering & pixels & structured video consistent; unstructured jumps & supported \\
      Noise screen & gravity & robust to $30\%$ state noise & supported \\
      Temperature & Langevin & structured $\hat T$ has low error; unstruct.\ diverges & well-supported \\
      Tension & Langevin & passivity precludes channel anti-damping; factored form permits it & well-supported \\
      Full operator & Langevin & full $M$ has higher error than diagonal $M$ & supported \\
      Engine dissipation & pendulum & port sheds energy, correct sign, ${\sim}$half rate & directional \\
      Noise robustness & Langevin & correct $\hat T$ to ${\sim}10\%$ meas.\ noise & directional \\
      Driven drive & driven & factored drive inverts strength+sign, err $\le0.09$ & well-supported \\
      Misspecification & gravity & low in-dist.\ error, incorrect counterfactual & well-supported \\
      Transformer/dom-rand & gravity & transformer diverges and does not invert & well-supported \\
      Grounded JEPA & pixels & anchored factored hybrid match $1$, nMSE $0.40$ vs $0.68$ & well-supported \\
      Anonymous object & pixels & identity removed: both collapse to match $0.33$ & boundary \\
      Learned binding & pixels & learned tracker $+$ factored inverts ($10/10$ seeds, nMSE $0.15$; control $0/10$) & well-supported \\
      Detection$+$binding & pixels & slot-attention does not reach it (match $0$) & boundary \\
      External encoder & pixels & frozen VideoMAE/V-JEPA2 invert ($0.13$/$0.59$, both 10-seed, factored $10/10$ $p{=}0.001$; readout-matched $2.6\times$ over our CNN; scale-flat $300$M$\to$$1$B, gain $1.02\times$) & well-supported \\
      Momentum axis & pixels & differencing DINOv2 tokens closes the image-SSL gap ($0.62\!\to\!0.096$) & well-supported \\
      Encoder composition & pixels & VideoMAE $+$ learned detect/bind $+$ prior does not compose (unstable, no clean inversion) & boundary \\
      Second law from video & pixels & reported damping diagnostic orders damped $1.25\!>\!$ cons $0.84$ ($5/5$ seeds); passive friction by construction & supported (ordering only) \\
      Floor located $+$ refusal & pixels & oracle control rules out grounding noise as the sole explanation; training-level refusal directionally consistent $5/5$ but below the pre-registered bar & boundary \\
      Energy-aux unification & pixels & the auxiliary-loss lever fails all three pre-registered predictions; with a learned meter it backfires ($\sigma_{\text{damped}}$ $1.25\!\to\!0.62$) & pre-registered negative \\
      Measure, don't fit & pixels & the energy-balance estimator removes the residual bias ($\hat\gamma_{\text{cons}}$ $0.84\!\to\!-0.006$); arrow-of-time rate read with $\sigma\!\ge\!0$ refusal; composition instability cured, grounding boundary stands & well-supported (one gate marginal) \\
\end{longtable}
}

\subsection{A. Stability from conservation (extends \S\ref{sec:r1})}
\label{app:A}

\textbf{Conservation $\to$ stability.} Five approximately
capacity-matched models (unstructured MLP, energy-penalised MLPs, and neural ODE
at ${\sim}139$k parameters; symplectic model at ${\sim}138$k) predict one step of
conservative gravity and roll to $T=2000$ ($100\times$ the 20-step training
horizon). The unstructured MLP and the $\lambda=1$ penalised MLP pass the $10^{4}$
cap in the median episode by step $500$ and lose $61$ and $123$ of $192$ episodes
to non-finite states between steps $1000$ and $2000$. The $\lambda=100$ penalised
MLP and the neural ODE stay finite in every episode but pass the cap. The
symplectic model stays finite and below the cap in every episode
(Table~\ref{tab:f1}). It remains bounded but has non-zero drift because
near-conservation concerns a modified learned Hamiltonian under the assumptions of
Prop.~\ref{prop:stab}, not the true energy. Multi-step and longer-horizon training
reduce this discrepancy.
\begin{figure}[H]
  \centering
  \includegraphics[width=0.62\linewidth]{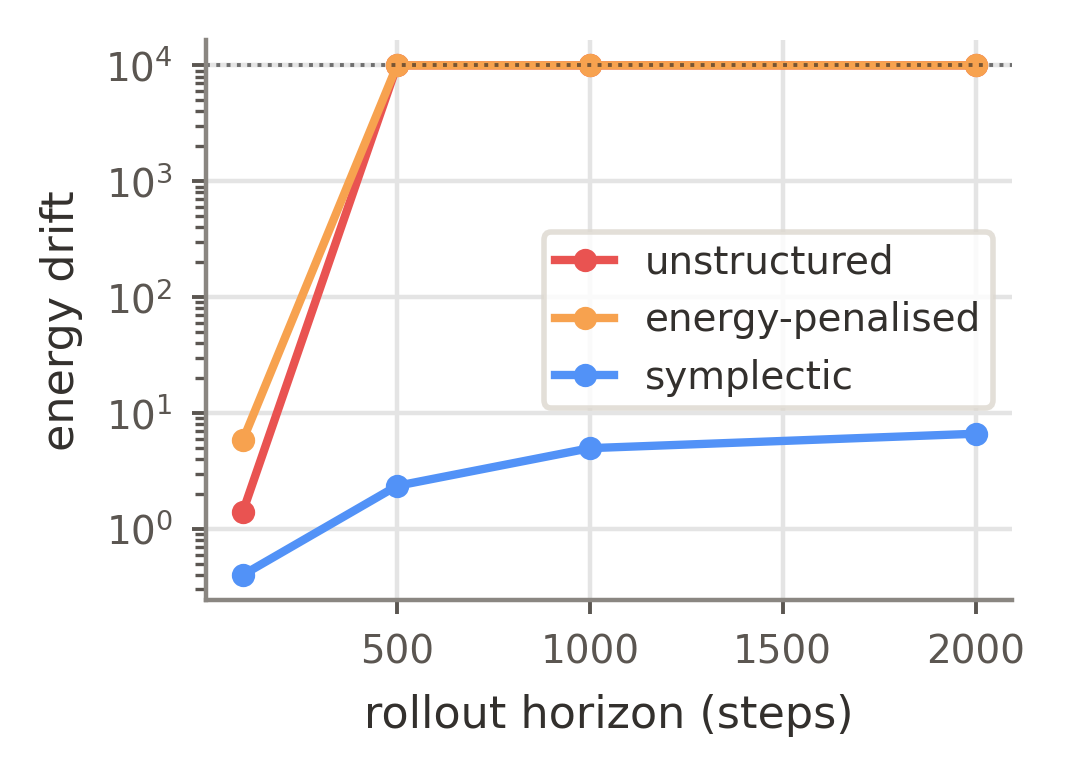}
  \caption{Relative energy drift $D(t)$
  vs rollout step (log $y$, 3 seeds; the shaded band is the plotting normal-approximation interval, descriptive at three seeds) in the original three-model display. The unstructured MLP and energy-penalised model reach the
$10^4$ display cap by ${\sim}$step 300; the symplectic model plateaus. Numerical summaries
are taken from the raw record (Table~\ref{tab:f1}), not from this capped display.}
\end{figure}

\begin{table}[H]
  \centering
  \scriptsize
  \setlength{\tabcolsep}{3.5pt}
  \caption{Raw stability record: per-episode median
  relative energy drift over $64$ held-out episodes, mean [min, max] over three runs;
  ${>}10^{4}$ marks a median above the cap. NF is the number of episodes non-finite by step
  $2000$ (of $192$; none is non-finite by step $1000$); $\mathrm{err}_{2000}$ is the median
  normalised rollout error at $2000$ steps, range over seeds. Learning rates were selected
  per seed on held-out one-step error from $\{10^{-3},3\times10^{-4},10^{-4}\}$.}
  \label{tab:f1}
  \resizebox{\linewidth}{!}{%
  \begin{tabular}{@{}lcccccc@{}}\toprule
    model & $t{=}100$ & $t{=}500$ & $t{=}1000$ & $t{=}2000$ & NF & $\mathrm{err}_{2000}$ \\\midrule
    unstructured MLP & $2.6$ [$0.6$, $5.1$] & ${>}10^{4}$ & ${>}10^{4}$ & ${>}10^{4}$ & $61$ & ${>}10^{4}$ \\
    energy-penalised, $\lambda{=}1$ & $3.3$ [$2.1$, $4.0$] & ${>}10^{4}$ & ${>}10^{4}$ & ${>}10^{4}$ & $123$ & ${>}10^{4}$ \\
    energy-penalised, $\lambda{=}100$ & $0.011$ [$0.009$, $0.016$] & $0.33$ [$0.16$, $0.49$] & $56$ [$1.4$, $141$] & $115$--${>}10^{4}$ & $0$ & $82$--${>}10^{4}$ \\
    neural ODE (RK4) & $0.08$ [$0.07$, $0.11$] & $1.8\times10^{3}$ [$3.0\times10^{2}$, $4.0\times10^{3}$] & ${>}10^{4}$ & ${>}10^{4}$ & $0$ & ${>}10^{4}$ \\
    symplectic model & $0.40$ [$0.33$, $0.46$] & $2.7$ [$2.0$, $3.7$] & $4.3$ [$3.4$, $6.2$] & $7.1$ [$6.7$, $7.7$] & $0$ & $10$--$14$ \\
    \bottomrule
  \end{tabular}}
\end{table}

\paragraph{The neural-ODE and energy-penalty controls.}
The neural vector field with RK4 uses matched capacity, the same step size and
one-step objective, and the same learning-rate sweep. All three seeds remain finite
in every episode. Its median energy drift is $0.08$ at $100$ steps,
$3\times10^2$--$4\times10^3$ at $500$, and above the cap by $1000$. Equal step size
does not imply equal integration error across RK4 and leapfrog. The $\lambda=100$
energy-penalised MLP has the lowest energy error of all the models at $100$ and $500$
steps, after which the drift grows without bound, with the rollout error growing
alongside it.

All models share the evaluation episodes within a seed, so the record supports
paired comparisons. The summaries here support a long-horizon energy-error
contrast, not a claim that Hamiltonian structure is necessary for finiteness.
The comparison is generated from one raw record rather than from capped plots. For each
model and run, that record stores the learning-rate grid and the selected rate, the
held-out one-step error used for selection, the per-episode finite and non-finite status
and counts at each checkpoint, the uncapped relative energy drift on finite episodes
(median, minimum, maximum, and fraction above $1$), and the rollout error, together with
the selected checkpoint for each model and seed. Seed-level summaries are the mean,
minimum, and maximum over seeds of per-episode medians.

\subsection{B. Law-inversion and the double dissociation (extends \S\ref{sec:r1}--\ref{sec:r2})}
\label{app:B}

\textbf{Conservation alone does not provide inversion.} Under never-trained
repulsive $G<0$, the symplectic model is stable but its rollout stays closer to the
training-law trajectory (regime-match $0$ at every $G$ and seed). It learned
a black-box $V_\theta(q,G)$ on $G>0$ only (Prop.~\ref{prop:invert} contrast).
\begin{figure}[H]
  \centering
  \includegraphics[width=0.5\linewidth]{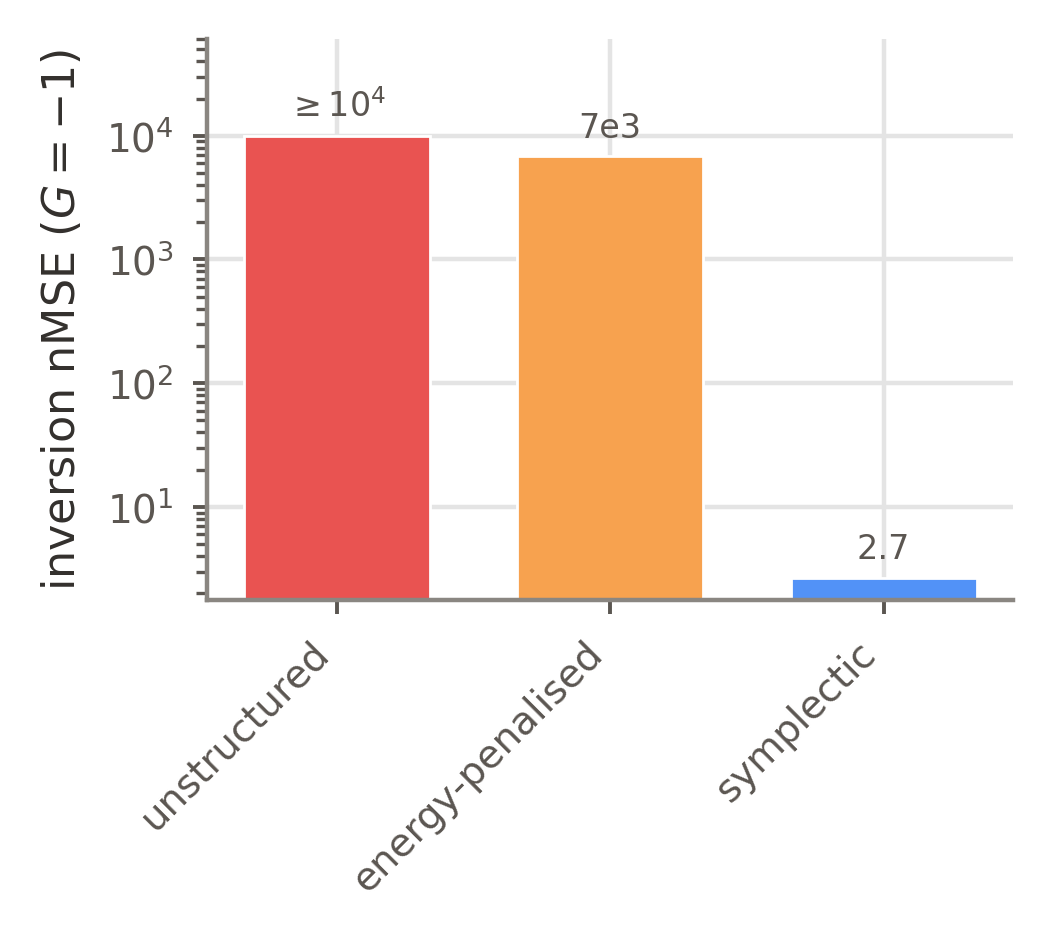}
  \caption{Non-factored HNN: stable but does not
invert; repulsive rollout matches the attractive default.}
\end{figure}

\textbf{The factored coupling yields inversion.} Trained
on attraction only, the factored symplectic model (Eq.~\ref{eq:factored}) tracks unseen
repulsion at nMSE $0.03$--$0.18$ across $G=-0.5$ to $-1.5$ (regime-match $1$ everywhere),
more than an order of magnitude below the non-factored symplectic model and far below the
(capped) unstructured baselines, while attaining the lowest in-distribution
error with fewer parameters. Together with
the non-factored result, this is the double dissociation.
\begin{figure}[H]\centering
  \begin{subfigure}[t]{0.46\textwidth}
    \centering
    \includegraphics[width=\linewidth]{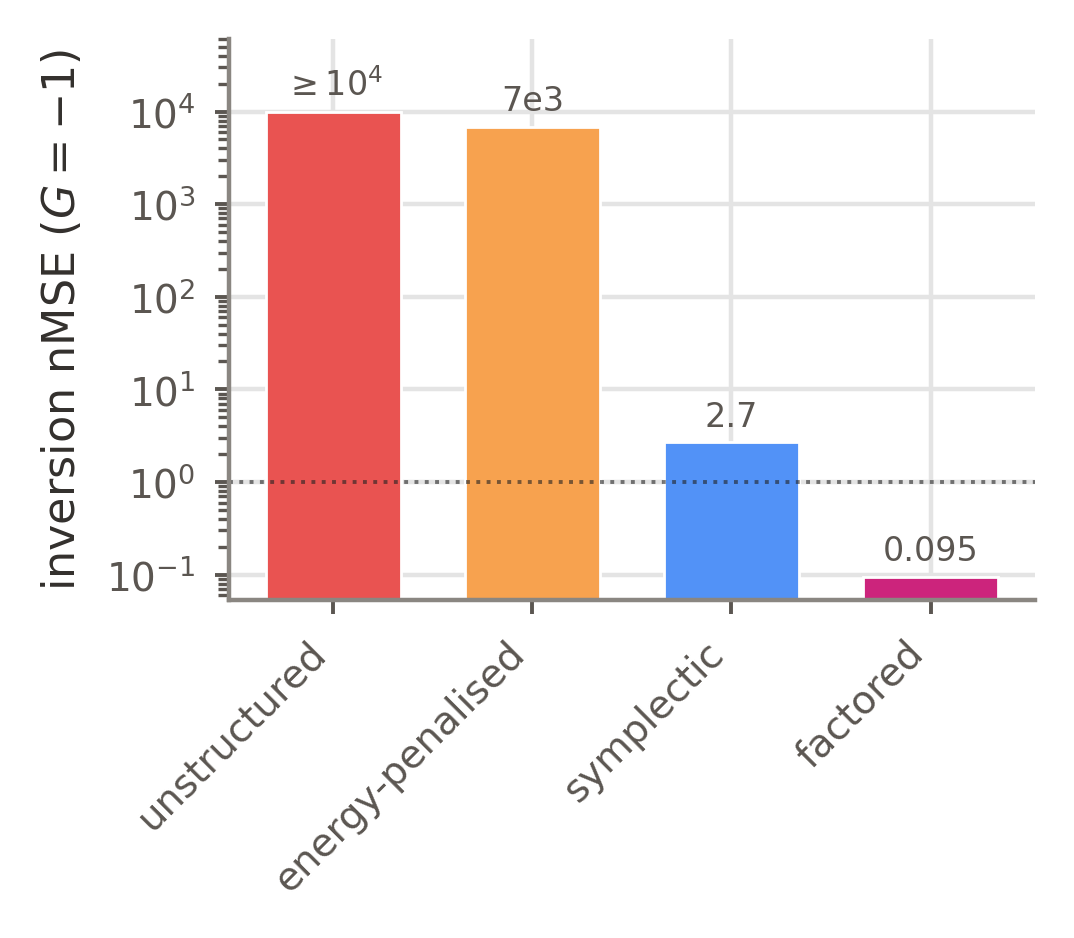}
    \caption{inversion error}
  \end{subfigure}\hfill\begin{subfigure}[t]{0.46\textwidth}
    \centering
    \includegraphics[width=\linewidth]{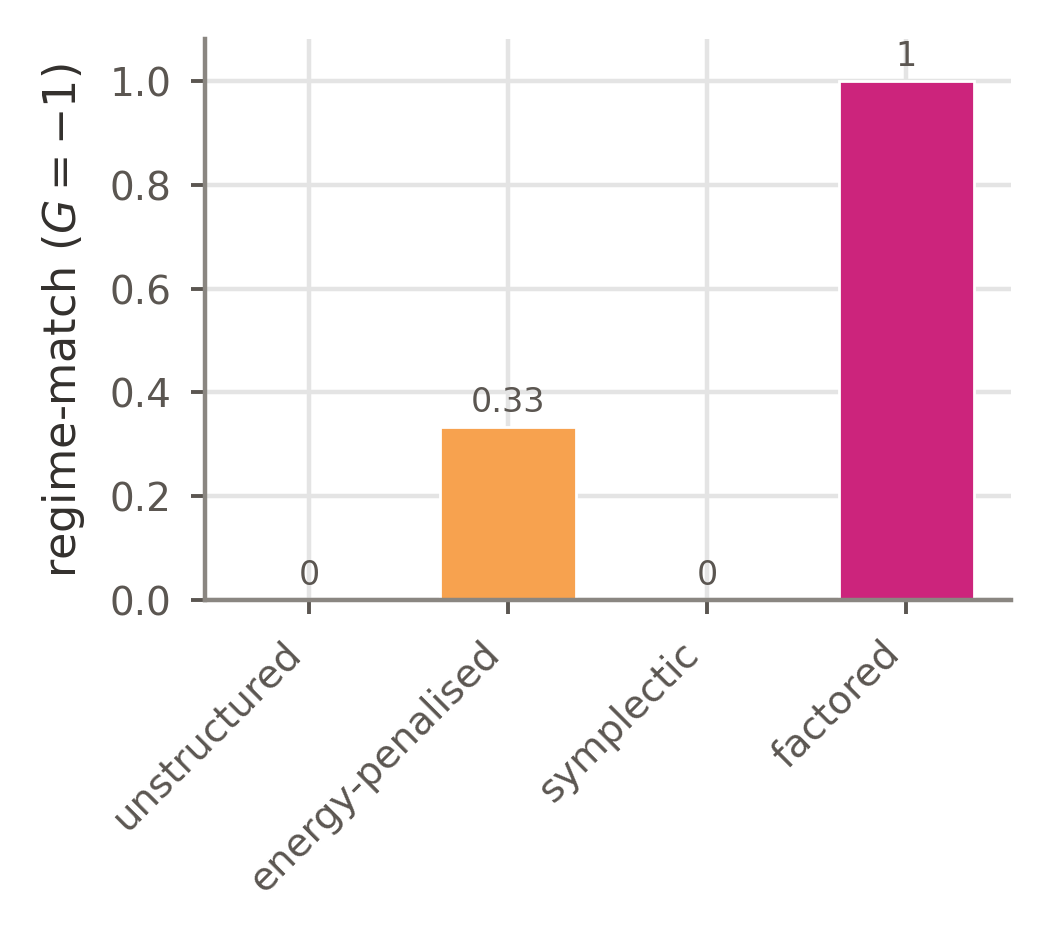}
    \caption{regime-match}
  \end{subfigure}
  \caption{Factored HNN tracks the true
    reversed ($G<0$) reference trajectory; the other evaluated models diverge or
  remain closer to the training-law trajectory.}
\end{figure}

\textbf{DeLaN/SymODEN: comparison across formalisms and integrators.} On the
engine gravity-flip, three non-factored structured models fail and both factored
models invert (Table~\ref{tab:delan}), isolating the axis to the factoring.
\begin{figure}[H]
  \centering
  \includegraphics[width=0.62\linewidth]{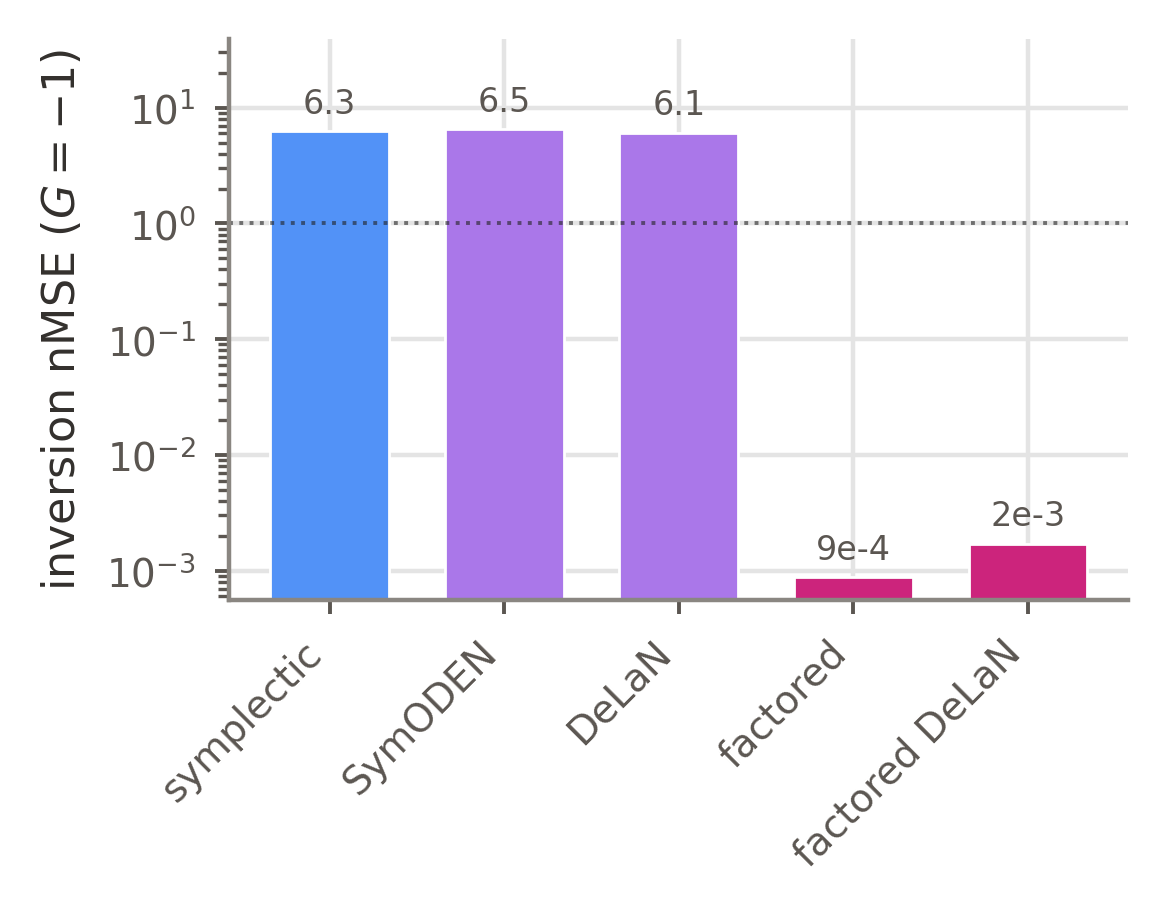}
  \caption{\textbf{DeLaN/SymODEN.} Inversion across five
  structured models: non-factored fail (match 0), factored invert (match 1), all
equally stable.}
\end{figure}

\begin{table}[H]
  \centering
  
  \caption{\textbf{DeLaN/SymODEN} engine gravity-flip inversion.}
  \label{tab:delan}
  \begin{tabular}{@{}lcc@{}}\toprule
    model & regime-match & inversion error \\\midrule
    symplectic model (Tao, non-fac.) & 0 & ${\sim}6$ \\
    SymODEN (RK4, non-fac.) & 0 & ${\sim}6$ \\
    DeLaN (non-fac.) & 0 & ${\sim}6$ \\
    factored symplectic model (factored) & 1 & $8.7\times10^{-4}$ \\
    factored DeLaN (factored) & 1 & $1.7\times10^{-3}$ \\
    \bottomrule
  \end{tabular}
\end{table}

\textbf{Matched causal factorial.} A natural worry is that our
stability and inversion contrasts change several things at once, namely the state
representation, the update rule, and the numerical integrator. We therefore run a
controlled factorial that varies these three axes independently. We cross a
Hamiltonian representation against an unrestricted vector field, a
coupling-factored parameter map against an unrestricted conditional map, and
three integrators (leapfrog, RK4, and implicit midpoint) at three step sizes,
over twelve paired seeds in two coercive coupled-oscillator systems, with every
model fitted to the same observed regime. The unrestricted vector field has
more coefficients than the Hamiltonian model. Parameter counts alone do not establish equal approximation
capacity. Two results follow. First, at matched fit the Hamiltonian
representation reaches a maximum true-energy error of about $5\times10^{-4}$ to
$8\times10^{-4}$, against $0.31$ to $0.33$ for the unrestricted field, and a
trajectory error of $5\times10^{-4}$ to $1.7\times10^{-3}$ against $0.044$ to
$0.059$ (Table~\ref{tab:p1-factorial}). Second, at the unseen negative coupling
the factored map reaches an error of about $0.0035$, while the unrestricted
conditional map stays near $0.985$, a gap that holds in both systems
(Holm-adjusted $p\approx0.003$, twelve seeds; Table~\ref{tab:p1-negmap}). The
differences persist across the evaluated representations, parameter maps,
and solvers. This does not establish equal functional capacity or exclude
all optimisation and discretisation effects. Every factorial cell stays bounded
here (escape fraction $0$), so the factorial isolates a fidelity and inversion
effect rather than a boundedness effect. The boundedness contrast comes instead
from the softened-gravity system of \S\ref{sec:r1}, where the unrestricted and
$\lambda=1$ energy-penalised models pass the $10^{4}$ cap by step 500 and lose episodes to
non-finite states by step 2000, while the symplectic model stays finite and below the cap.

\begin{table}[H]
  \centering
  
  \caption{Controlled factorial at step size $0.05$ over twelve training seeds. Ranges span the two systems and the relevant model cells.}
  \label{tab:p1-factorial}
  \begin{tabular}{@{}lcc@{}}
    \toprule
    Comparison & Hamiltonian representation & Unrestricted vector field \\
    \midrule
    Maximum true-energy error & $0.00050$ to $0.00084$ & $0.314$ to $0.332$ \\
    Trajectory normalised error & $0.00054$ to $0.00169$ & $0.0439$ to $0.0586$ \\
    Escape fraction & $0$ & $0$ \\
    \bottomrule
  \end{tabular}
\end{table}

\begin{table}[H]
  \centering
  
  \caption{Matched-state response at the unseen negative coupling.}
  \label{tab:p1-negmap}
  \begin{tabular}{@{}lcc@{}}
    \toprule
    System & Coupling-factored map & Unrestricted conditional map \\
    \midrule
    Coupled oscillator & $0.00348$ & $0.9853$ \\
    Modified oscillator & $0.00382$ & $0.9853$ \\
    \bottomrule
  \end{tabular}
\end{table}

\textbf{Graph-network response and long-horizon failure.}
Both graph variants fail the 2000-step test, so message factoring alone
does not prevent their observed long-horizon failure.
The source reports nearest-reference labels of 1 for the factored variant
and 0 for the other variant. Without verified short-horizon finiteness
and fidelity, those labels are descriptive and are not counted here as
validated counterfactual success. Training-noise injection does not
remove the reported long-horizon failure (Table~\ref{tab:f18}).

\begin{figure}[H]
  \centering
  \begin{subfigure}[t]{0.46\textwidth}
    \centering
    \includegraphics[width=\linewidth]{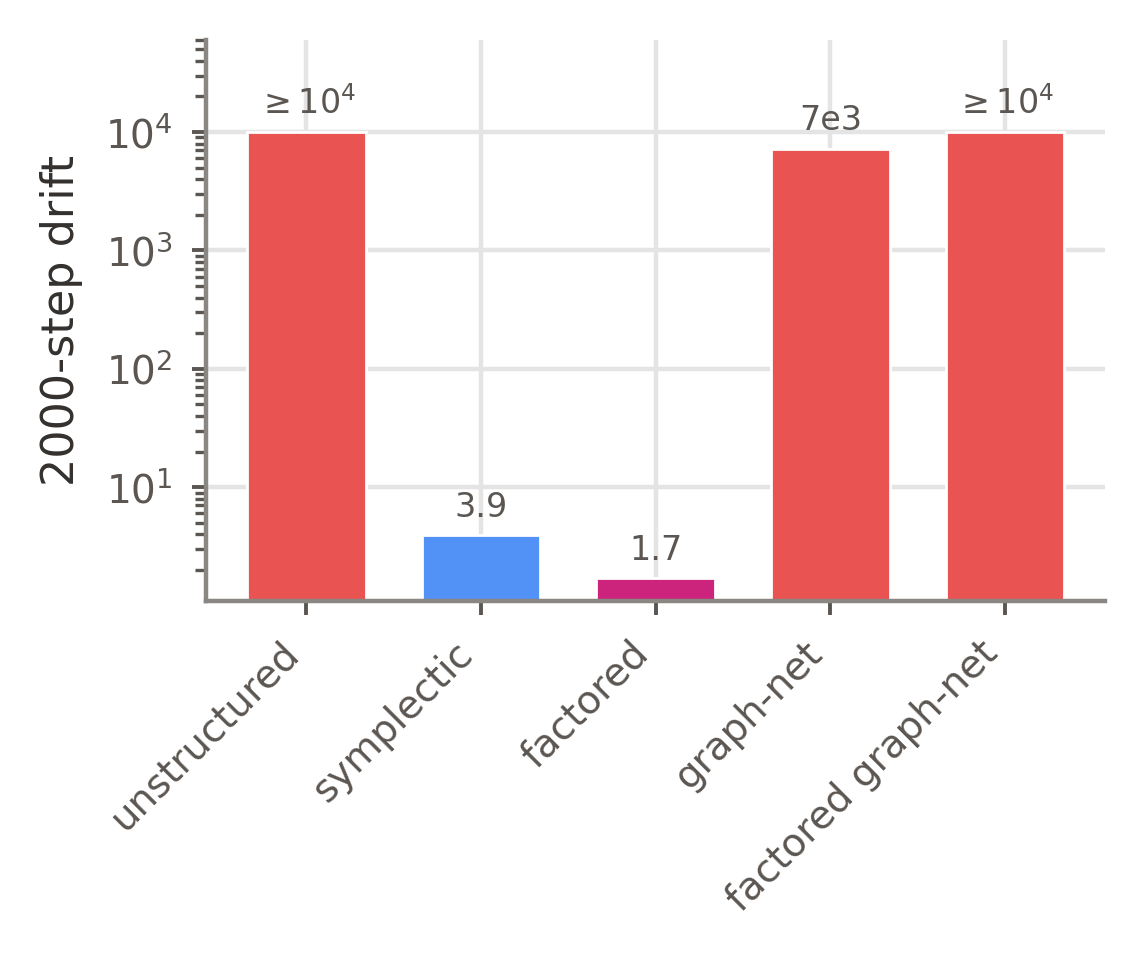}
    \caption{2000-step drift}
  \end{subfigure}\hfill\begin{subfigure}[t]{0.46\textwidth}
    \centering
    \includegraphics[width=\linewidth]{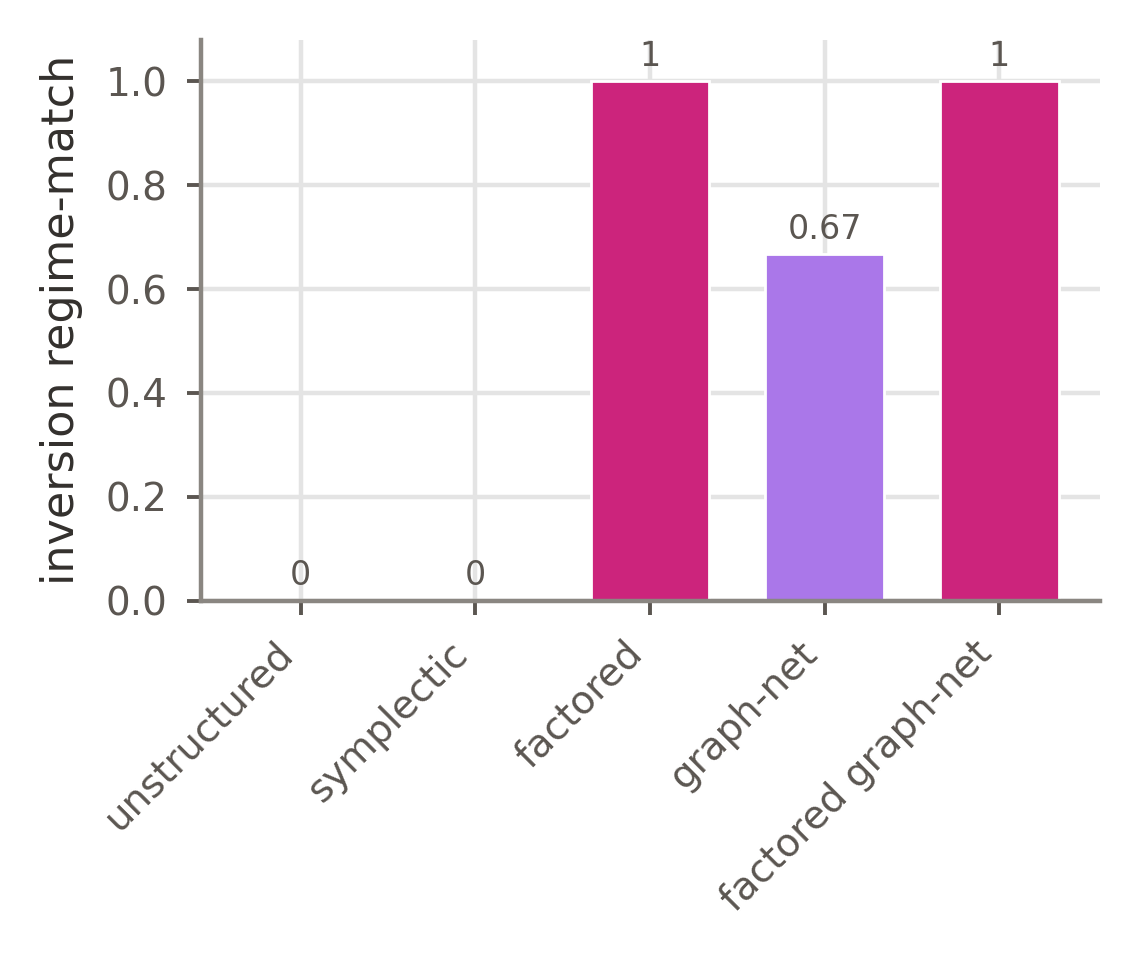}
    \caption{inversion regime-match}
  \end{subfigure}
  \caption{Graph-network long-horizon failure and reported response labels.
  The labels require separate short-horizon validity checks before being
  interpreted as counterfactual success.}
\end{figure}

\begin{table}[H]
  \centering
  
  \caption{Graph-network summaries. Failure is separated from finite energy
  error; reported response labels are descriptive pending horizon-validity checks.}
  \label{tab:f18}
  \begin{tabular}{@{}lcc@{}}\toprule
    model & 2000-step drift & regime-match \\\midrule
    unstructured MLP (MLP) & numerical failure & 0 \\
    graph-network simulator & long-horizon failure & 0 \\
    factored graph-network simulator & long-horizon failure & 1 \\
    symplectic model/factored symplectic model & bounded & --/1 \\
    \bottomrule
  \end{tabular}
\end{table}

\subsection{C. Contacts and the engine (extends \S\ref{sec:r5})}
\label{app:C}

\textbf{Contacts.} On colliding disks (contact stiffness series: soft
$k{=}30$, stiff $k{=}300$, impulsive), the factored symplectic model's energy stays bounded across
the whole series and time-reversal (symmetry) inversion remains present
(Table~\ref{tab:f13}); parameter-inversion regime-matches (match 1) at soft/stiff,
though its fidelity degrades ($\text{nMSE }0.76\to2.9$), and has no referent at the
impulsive level (\S\ref{app:D}).

\begin{figure}[H]
  \centering
  \begin{subfigure}[t]{0.46\textwidth}
    \centering
    \includegraphics[width=\linewidth]{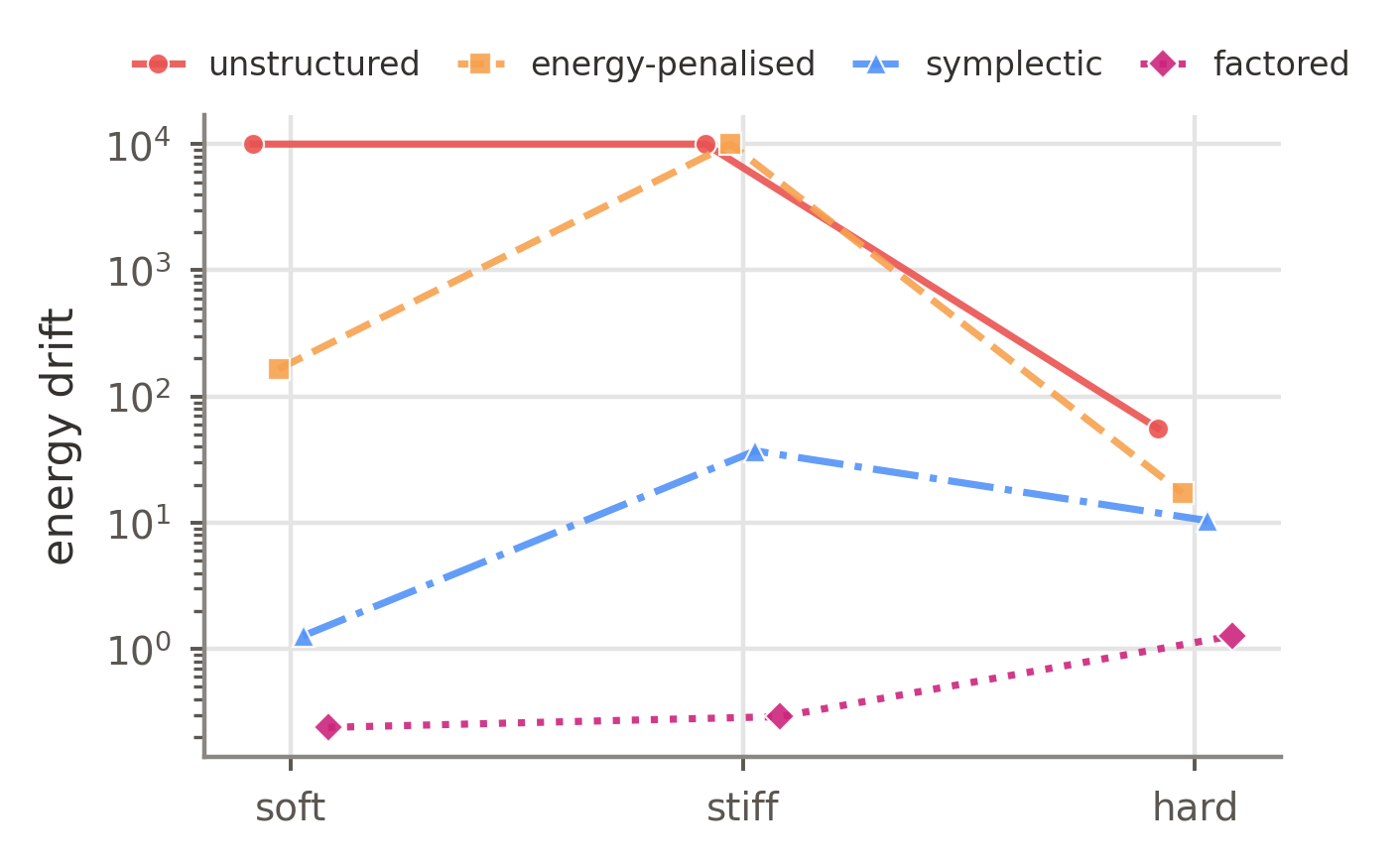}
    \caption{energy drift across the stiffness series}
  \end{subfigure}\hfill\begin{subfigure}[t]{0.46\textwidth}
    \centering
    \includegraphics[width=\linewidth]{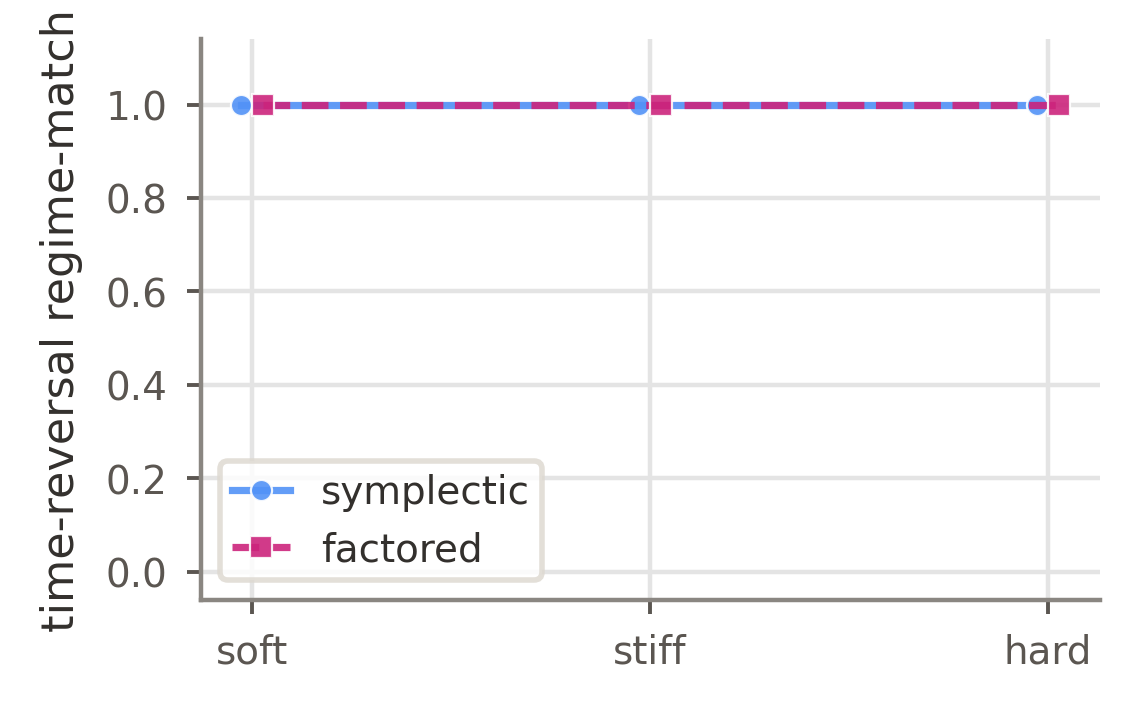}
    \caption{time-reversal regime-match}
  \end{subfigure}
  \caption{Contact stiffness series: bounded energy and
  surviving symmetry-inversion where unconstrained models diverge.}
\end{figure}

\begin{table}[H]
  \centering
  
  \caption{Across the stiffness series (factored symplectic model).}
  \label{tab:f13}
  \begin{tabular}{@{}lccc@{}}\toprule
    level & drift & sym-inversion match & param-inversion nMSE \\\midrule
    soft & $0.24$ & 1 & $0.76$ \\
    stiff & $0.29$ & 1 & $2.9$ \\
    impulsive & $1.27$ & 1 & undefined \\
    \bottomrule
  \end{tabular}
\end{table}

\subsection{D. The boundary map (extends \S\ref{sec:r4})}
\label{app:D}

\begin{figure}[h]
  \centering
  \begin{subfigure}[t]{0.32\textwidth}
    \centering
    \includegraphics[width=\linewidth]{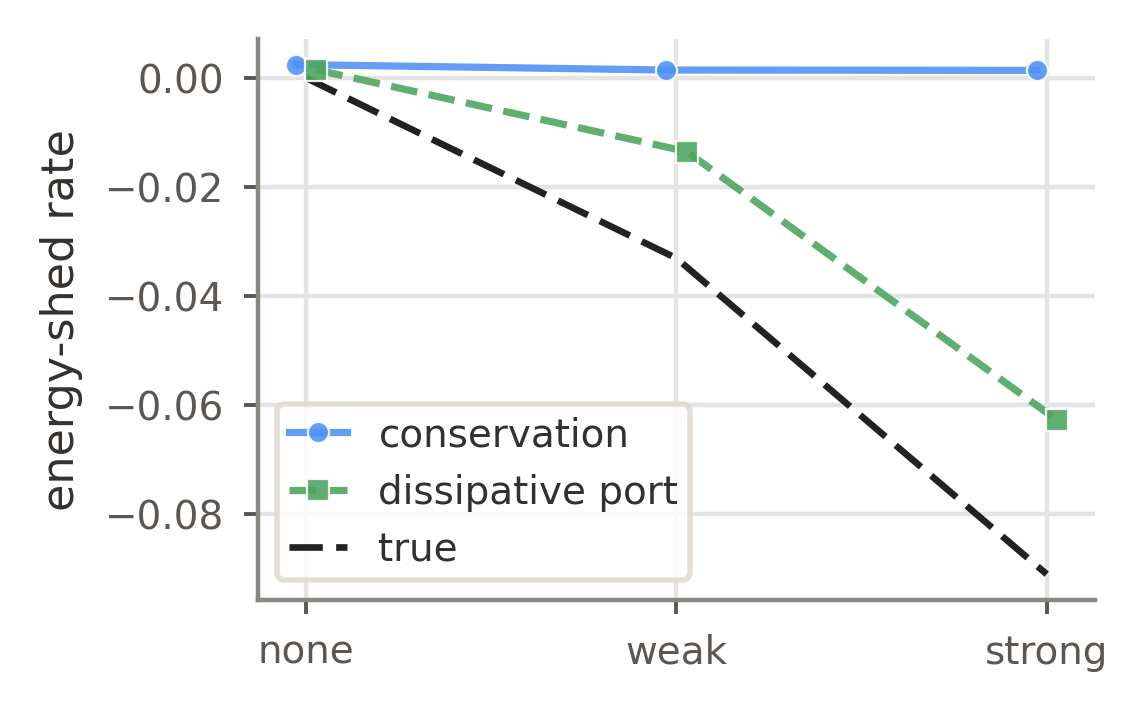}
    \caption{Dissipation: conservation is incompatible}
  \end{subfigure}\hfill
  \begin{subfigure}[t]{0.32\textwidth}
    \centering
    \includegraphics[width=\linewidth]{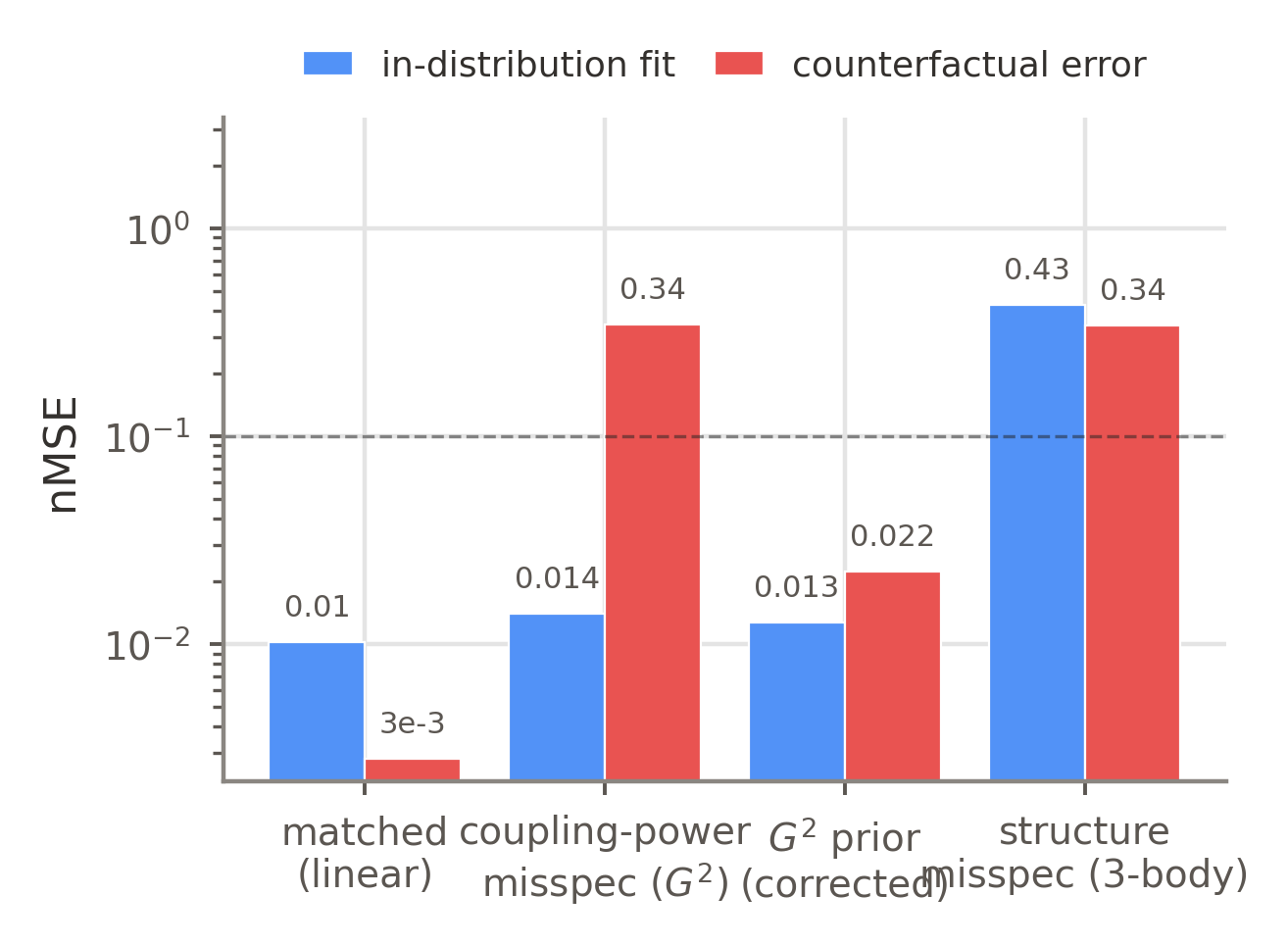}
    \caption{A misspecified factored law}
  \end{subfigure}\hfill
  \begin{subfigure}[t]{0.32\textwidth}
    \centering
    \includegraphics[width=\linewidth]{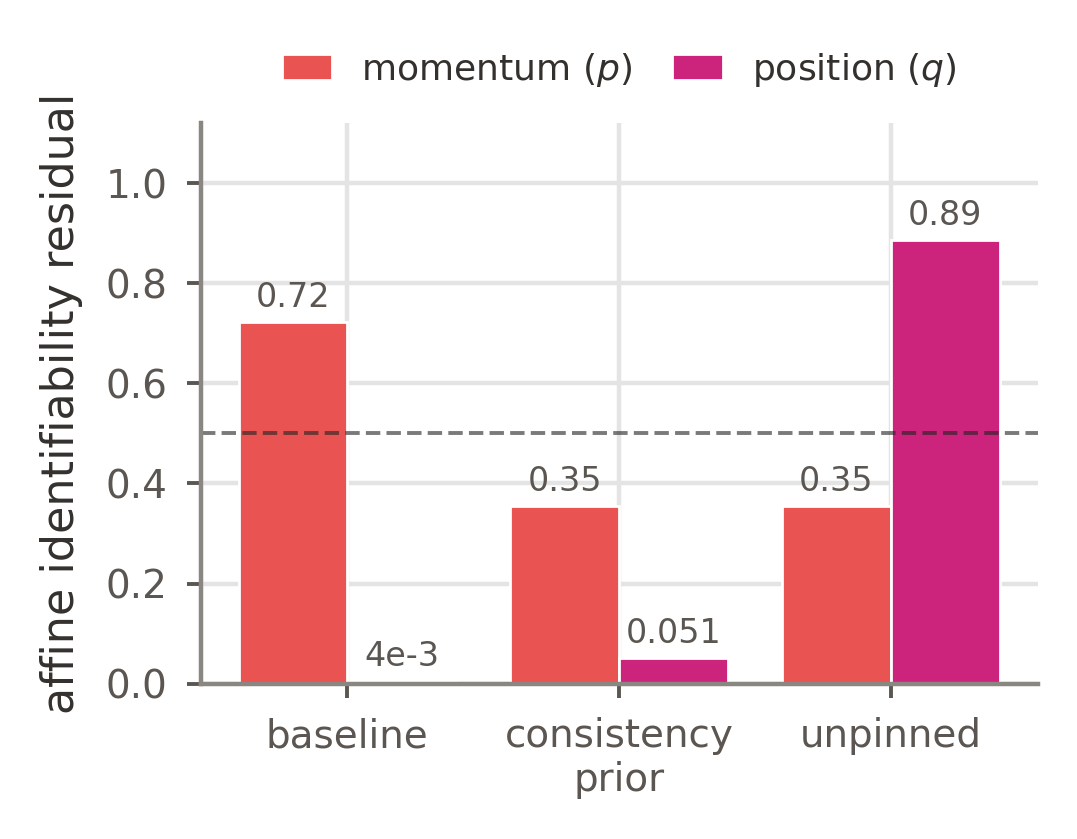}
    \caption{Perception: momentum is the open axis}
  \end{subfigure}
  \caption{The boundaries. Conservation cannot shed energy under drag; a
    factored law of the wrong power fits in-distribution yet gives a wrong
  counterfactual; and a single frame recovers position but not momentum.}
  \label{fig:boundary}
\end{figure}

\textbf{The impulsive-contact boundary.}
The stiffness-sign intervention used here has no specified counterpart in
the impulsive hard-contact model. This does not rule out interventions
on restitution, friction, or forcing in a separately defined hybrid law.
The tested differentiable-restitution variant does not improve on the
smooth model and loses time-reversal performance at the impulsive level;
the free-learned impulse model diverges (Table~\ref{tab:f26}).
These results delimit the tested augmentation rather than all
contact-capable structured models.

\begin{table}[H]
  \centering
  
  \caption{Drift at the soft, stiff, and impulsive levels.}
  \label{tab:f26}
  \begin{tabular}{@{}lccc@{}}\toprule
    model & soft & stiff & impulsive \\\midrule
    factored symplectic model (smooth) & $0.11$ & $0.29$ & $1.06$ \\
    diff.-restitution & $0.92$ & $16.9$ $[0.25, 33.6]$ & $0.91$ (loses retrace) \\
    free-learned impulse & not evaluated & not evaluated & $3360$ (diverges) \\
    \bottomrule
  \end{tabular}
\end{table}

\textbf{Conservation is incompatible with dissipative dynamics.} With
linear drag the truth sheds energy (Fig.~\ref{fig:boundary}); symplectic model/factored symplectic model shed ${\sim}0$, and only
a port-Hamiltonian variant more closely reproduces the reference shedding rate
and inverts the drag sign
(Table~\ref{tab:f14}). Energy-shed rate is primary here (trajectory-MSE does not
separate the regimes short-horizon).
\begin{figure}[H]
  \centering
  \includegraphics[width=0.5\linewidth]{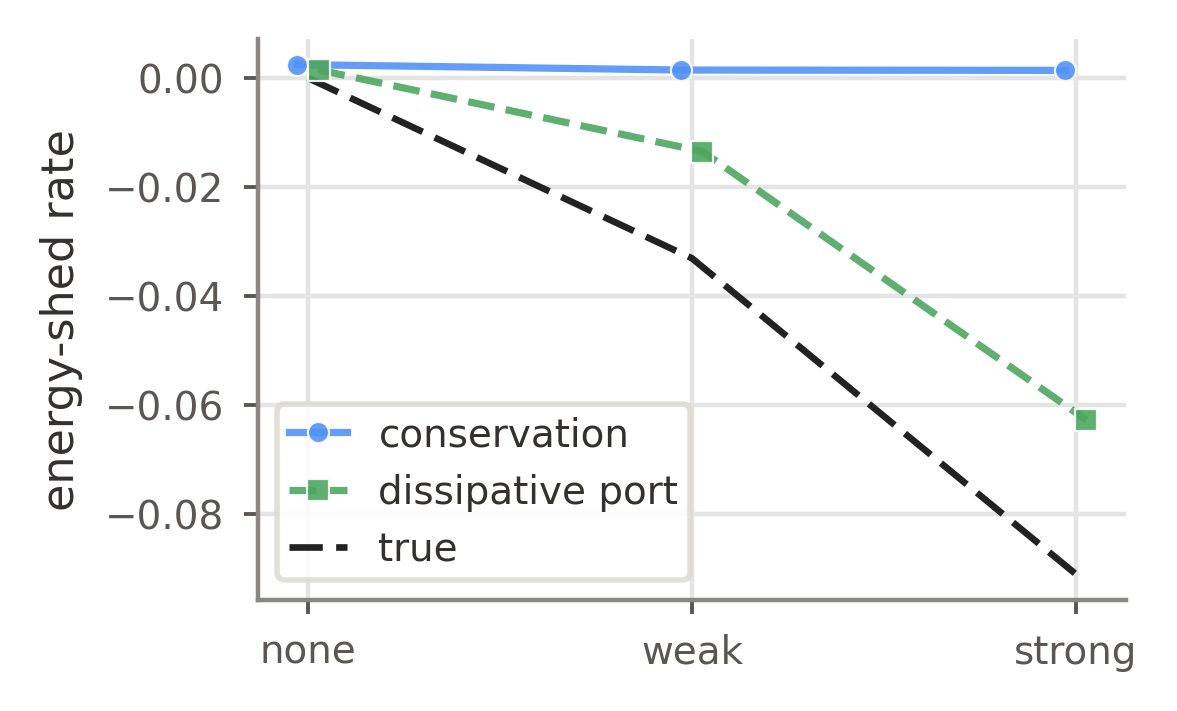}
  \caption{Conservative models have approximately
zero energy-shed rate; the port variant more closely reproduces the reference rate.}
\end{figure}

\begin{table}[H]
  \centering
  
  \caption{Energy-shed rate (negative = losing energy).}
  \label{tab:f14}
  \begin{tabular}{@{}lcc@{}}\toprule
    & weak drag & strong drag \\\midrule
    truth & $-0.033$ & $-0.091$ \\
    symplectic model/factored symplectic model & ${\sim}0$ & ${\sim}0$ \\
    port-Hamiltonian & $-0.014$ & $-0.063$ \\
    \bottomrule
  \end{tabular}
\end{table}

\textbf{Supervised perception.} With a soft-argmax encoder
supervised to the oracle state, position is near-perfect and flat across a visual-difficulty
series; the limit is object \emph{speed} (per-body momentum error $0.01$ slow
$\to0.87$ fast), and downstream inversion is retained (factored symplectic model nMSE $0.062/0.064$
$\approx$ oracle $0.09$). Reference trajectories are rolled from the true initial state so that perception
error is counted rather than cancelled.
\begin{figure}[H]
  \centering
  \includegraphics[width=0.62\linewidth]{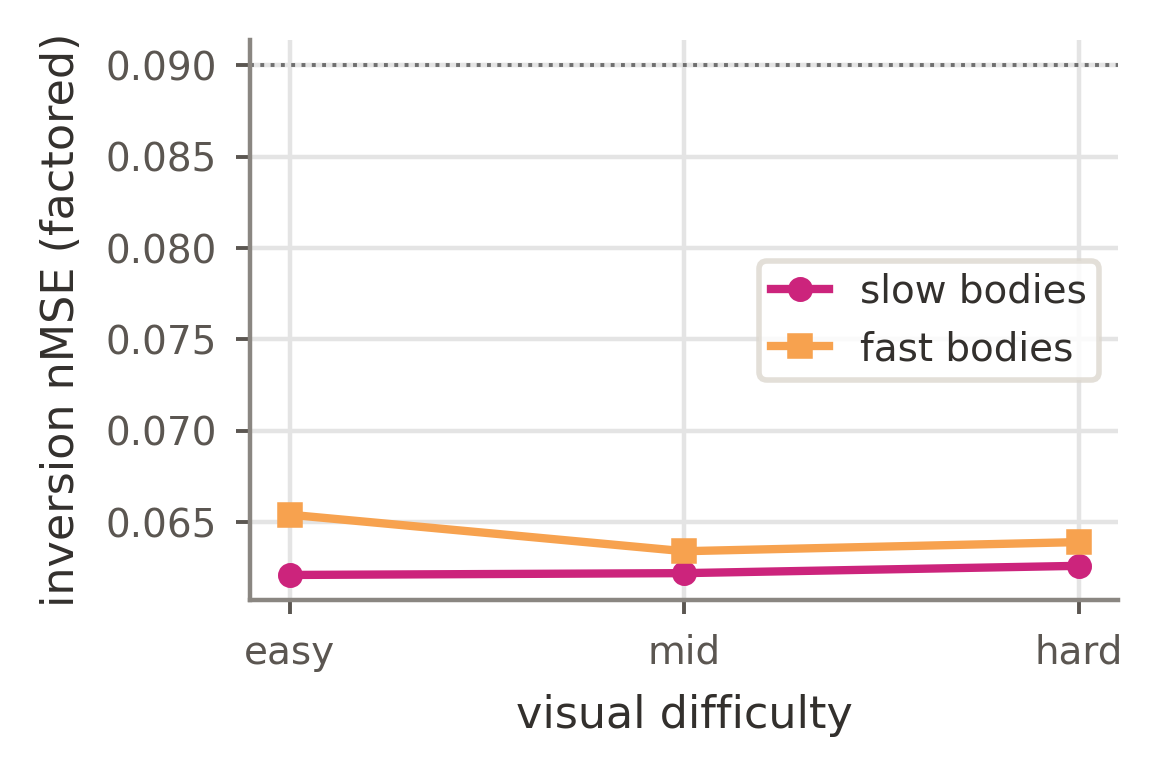}
  \caption{Supervised perception:
  position error remains stable; inversion is retained; momentum error increases
with object speed.}
\end{figure}

\textbf{Discovered perception leaves a large linear momentum-recovery error; a canonical
kinetic recovers it.} A latent learned end-to-end by prediction leaves a large linear momentum-recovery error
(affine $p$-residual $0.72$) with renderer-pinned position ($q$-residual $0.004$);
a canonical \emph{fixed} kinetic recovers inversion with error comparable to the
oracle result (3-seed, 500-epoch
nMSE $0.065$--$0.092$ $\approx$ oracle $0.067$; backstop $0.045$). A learned
decoder is invalid against the backstop. The value is the 500-epoch, three-seed result; a 60-epoch run is undertrained and
disagrees.
\begin{figure}[H]
  \centering
  \includegraphics[width=0.62\linewidth]{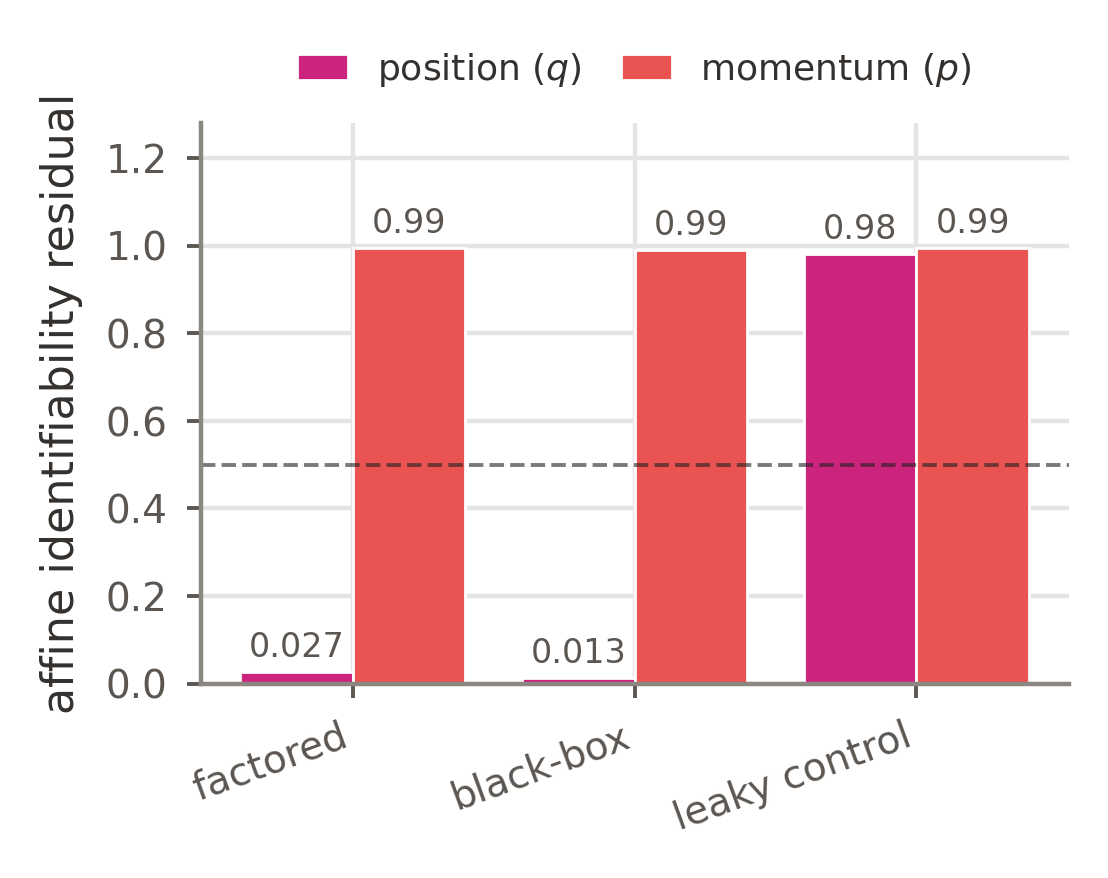}
  \caption{Discovered latent: momentum
  poorly recovered by an affine readout (high $p$-residual); the fixed canonical kinetic reduces inversion
nMSE to $0.065$--$0.092$, compared with the oracle value $0.067$.}
\end{figure}

\textbf{Momentum partially recoverable; position grounding open
[directional/boundary].} A symplectic-consistency prior drops the momentum
residual $0.72\to0.35$ ($<0.5$, improved linear recovery) while keeping position ($q$-residual
$0.05$) and low inversion error (nMSE $0.10$), at valid fidelity (backstop
$0.149$). Here $v_{\text{obs}}$ is frame-derived, not oracle. Removing the renderer to
\emph{discover} position increases the residual ($q$-residual $0.89$; backstop $1.84$, invalid):
the residual boundary is decoder-free position grounding.
\begin{figure}[H]
  \centering
  \includegraphics[width=0.62\linewidth]{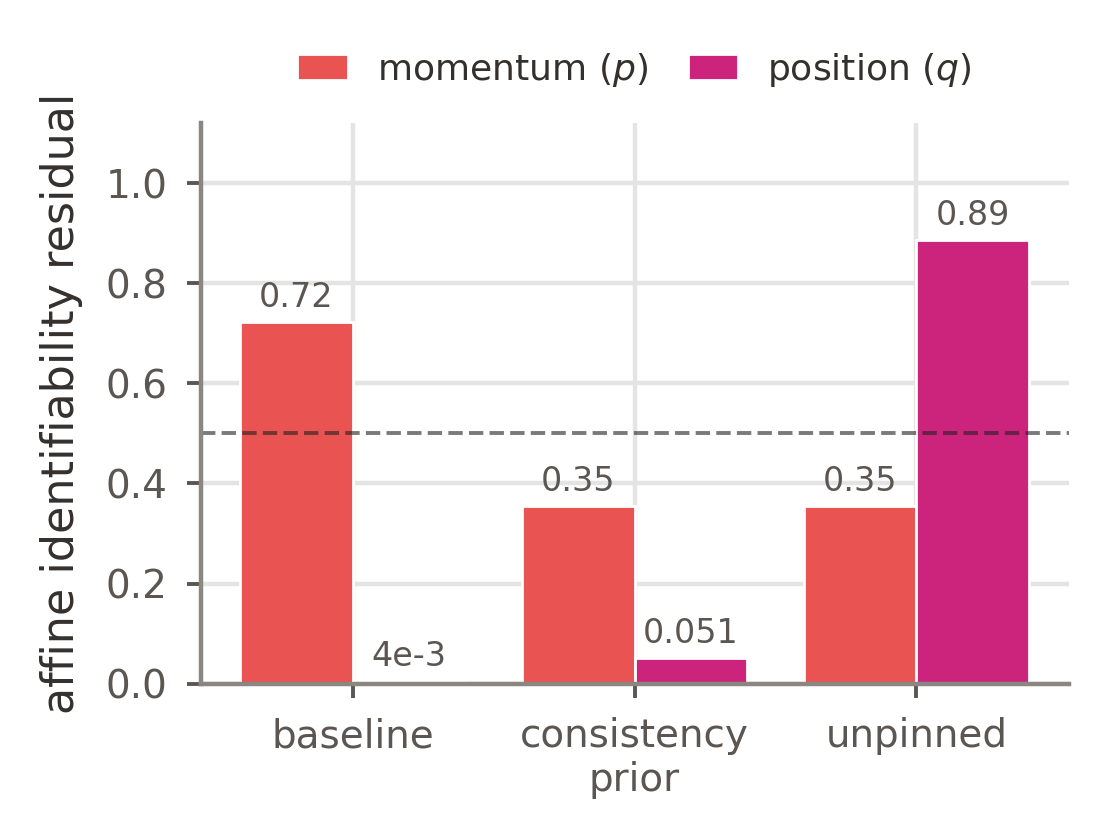}
  \caption{Symplectic-consistency improves linear momentum recovery
  ($p$-residual $0.72\!\to\!0.35$) while retaining position; removing
renderer-based grounding produces a high position residual.}
\end{figure}

\textbf{Pixel-based dynamics comparison under grounded perception
[well-supported, scoped].} Trained end-to-end on synthetic renders with a shared
fixed disk-renderer (only the dynamics differ), symplectic models stay bounded
while unstructured MLP/graph-network simulator/factored graph-network simulator/JEPA-style predictor diverge, and only the
factored fixed-kinetic model approaches the oracle
inversion error (Table~\ref{tab:f28}). The factored symplectic model (learned kinetic) fails like
the non-factored models, reproducing the empirical recovery condition on
pixels. The comparison evaluates dynamics \emph{on} pixels with
renderer-grounded position, not unsupervised inference \emph{from} pixels. The
baselines are restricted to the evaluated architecture families and do not
constitute a published leaderboard. Oracle-latent reconstruction is
$0.22$--$0.53$. The fully unsupervised branch is invalid
(backstops $2.8$--$3.3$) even though the decoder renders from oracle latents, a
position-discovery failure that confirms, rather than resolves, the
identifiability and position-grounding boundaries above. The decode-free JEPA-style predictor column here is the bare
objective (no grounding). It is developed into a grounded hybrid, and into its own
boundary, in App.~\ref{app:H}.

\begin{figure}[H]
  \centering
  \begin{subfigure}[t]{0.46\textwidth}
    \centering
    \includegraphics[width=\linewidth]{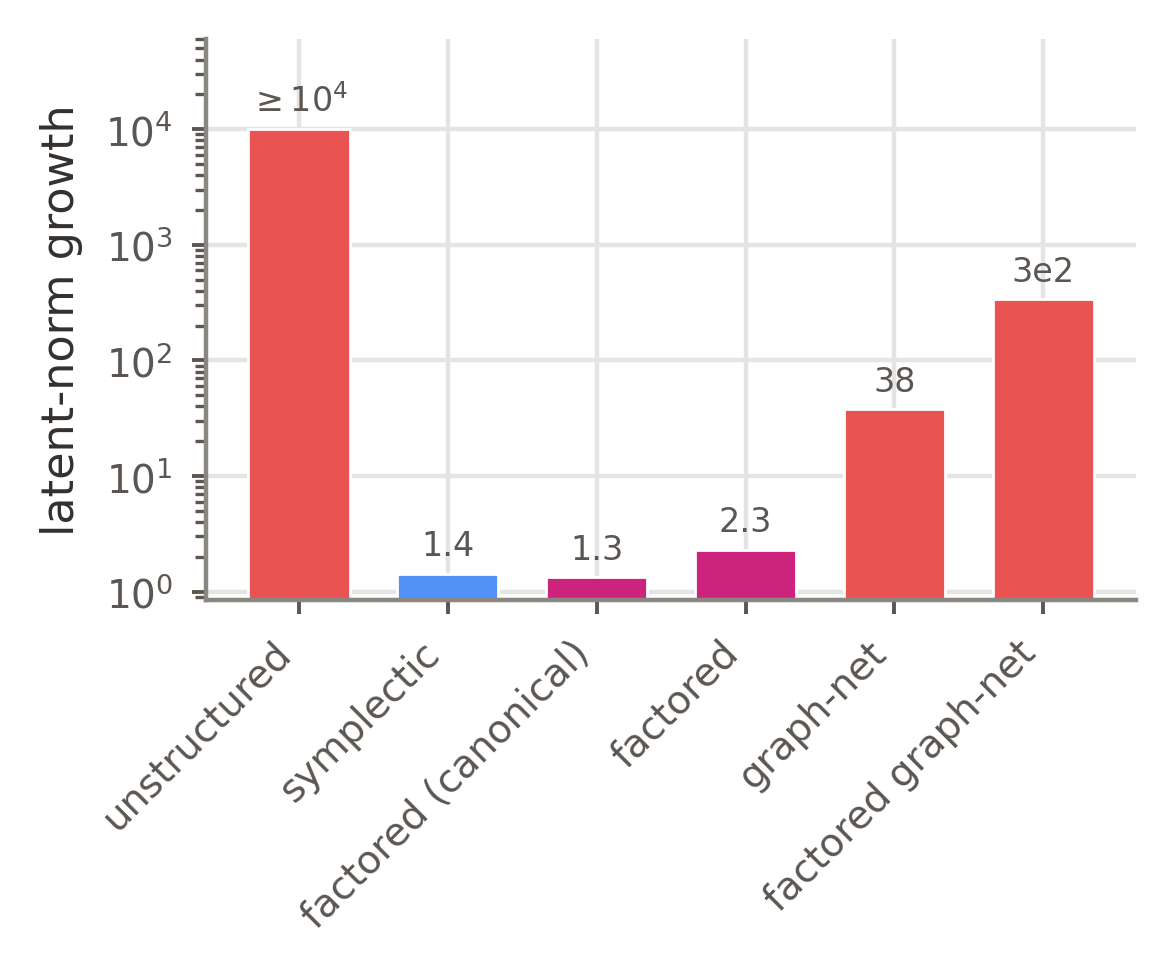}
    \caption{stability (latent-norm growth)}
  \end{subfigure}\hfill\begin{subfigure}[t]{0.46\textwidth}
    \centering
    \includegraphics[width=\linewidth]{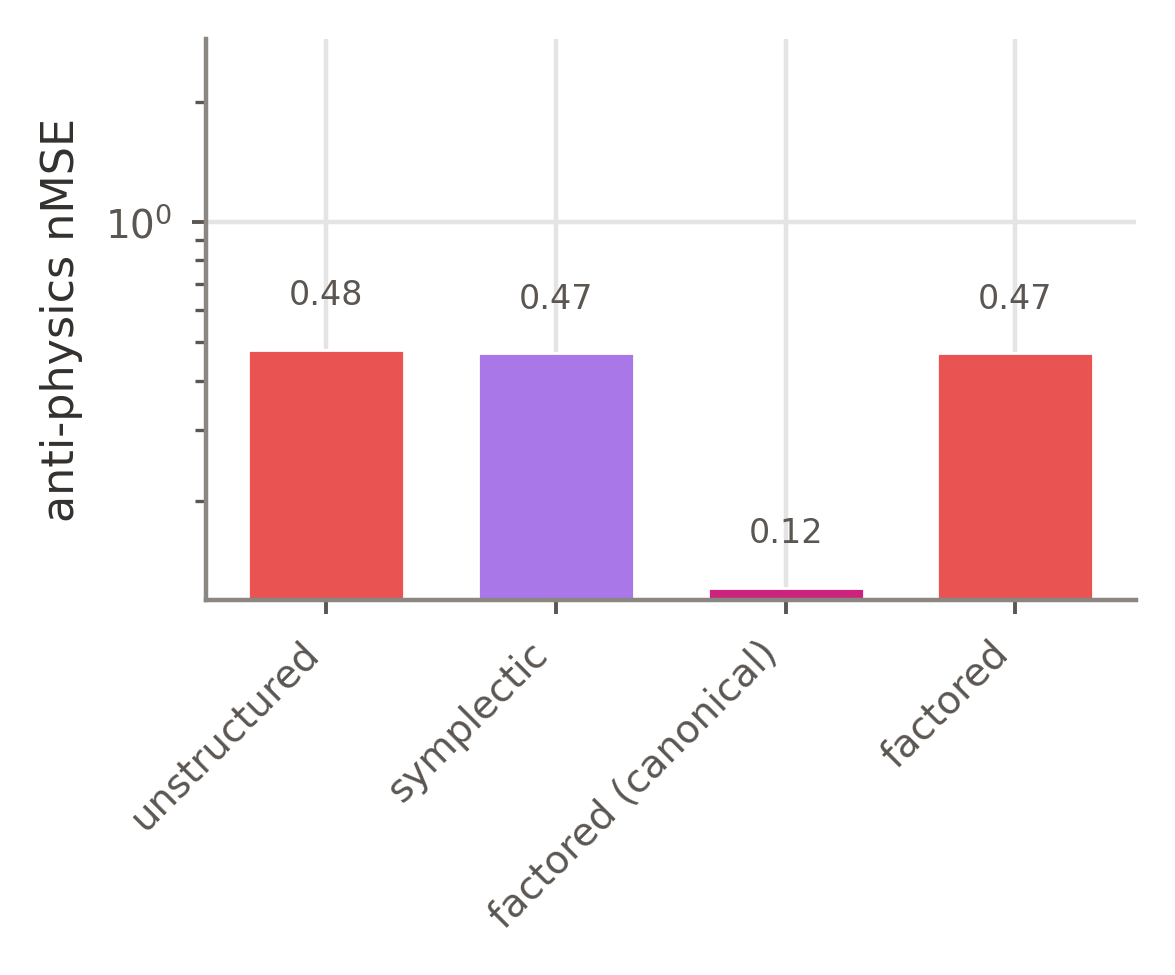}
    \caption{anti-physics inversion error}
  \end{subfigure}
  \caption{End-to-end on pixels: (a) latent-norm growth (log)
    diverges for all but symplectic models; (b) anti-physics inversion error is
    comparable to the oracle result only for the factored-plus-canonical-kinetic model. In-distribution
  backstops are within the valid range throughout (Table~\ref{tab:f28}).}
\end{figure}

\begin{table}[H]
  \centering
  
  \caption{End-to-end on pixels (grounded perception).}
  \label{tab:f28}
  \begin{tabular}{@{}lcc@{}}\toprule
    model & latent-norm growth & inversion nMSE \\\midrule
    unstructured MLP & $10^4$ & $0.478$ \\
    GNS / factored GNS & $38$ / $338$ & div.-contaminated \\
    JEPA-style predictor (decode-free) & $22$ & $1.46$ (diverges) \\
    symplectic model & $1.43$ & $0.468$ \\
    factored symplectic model (learned $T$) & $2.3$ & $0.468$ \\
    factored fixed-kinetic model (canonical $T$) & $1.33$ & $\mathbf{0.121}$ \\
    \bottomrule
  \end{tabular}
\end{table}

\subsection{E. Necessity, compositionality, generality, and scale (extends \S\ref{sec:r5})}
\label{app:E}

\textbf{Necessity.} A first screen yields a null result because
always-active interactions provide no available shortcut. A second screen makes a shortcut
available: a hybrid (factored pathway $+$ free black-box term) assigns
${\sim}42\%$ of the force, with inversion degraded $5$--$10\times$. An $L_2$
penalty drives the share to $10^{-4}$ and restores fidelity (nMSE $0.43\to0.09$).
These results support the shortcut-use interpretation within this physical system.

\textbf{Cross-family.} On Coulomb with mixed signs, a
charge-product coupling has the lowest error on a held-out all-repulsive configuration
(error $0.52$ vs black-box $1.42$, a modest ${\sim}2.7\times$ margin). A single
factored model spanning a mixed gravity$+$Coulomb generator inverts both
${\sim}120\times$ better than a black-box that reads the couplings (nMSE $0.0035$
vs $0.425$).
\begin{figure}[H]
  \centering
  \includegraphics[width=0.5\linewidth]{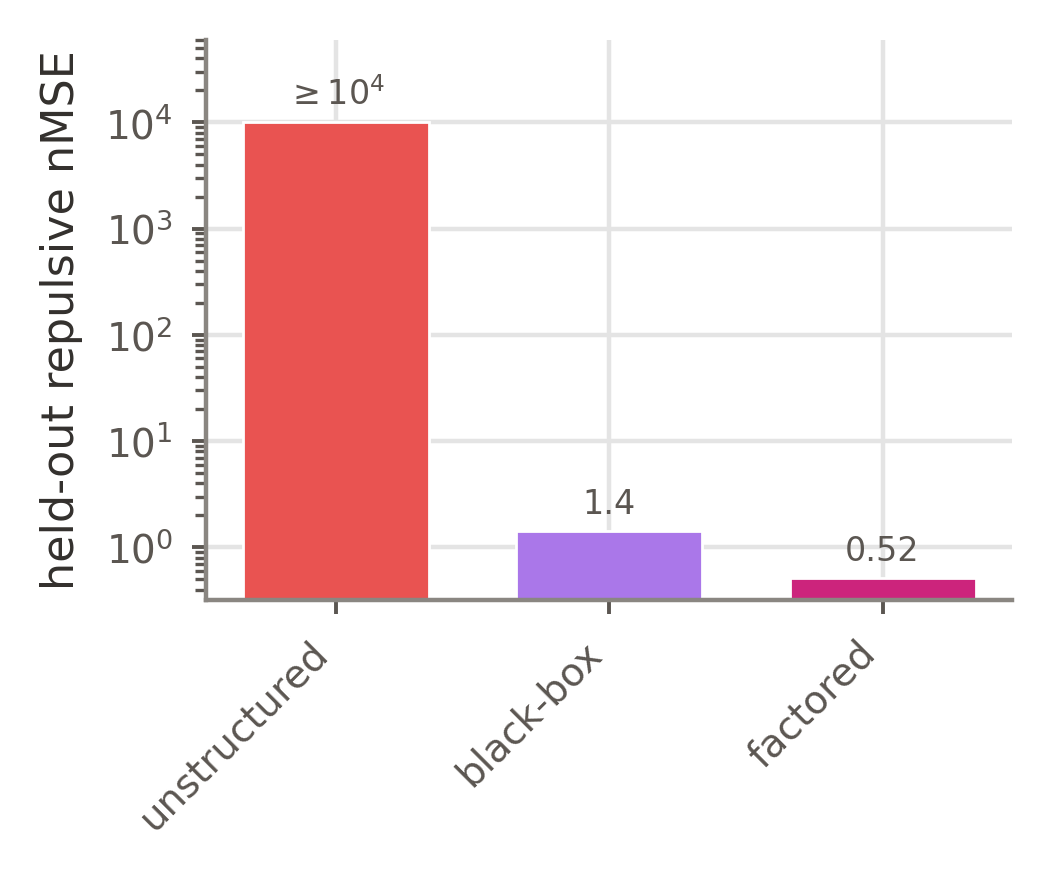}
  \caption{Held-out all-repulsive Coulomb:
the charge-product coupling generalises across sign.}
\end{figure}

\textbf{Data-efficiency.} The factored symplectic model inverts at every training-set
size down to 32 trajectories. Giving the non-factored symplectic model \emph{more} data
makes its inversion \emph{worse} ($0.76\to2.67$ across $32\to512$). This is not
undertraining at small data: the symplectic model's \emph{in-distribution} ($G{>}0$) rollout
stays bounded with low drift at every size (drift ${\sim}1.4$ at 32 trajectories), so it
fits the training law well. Within this experiment, the inversion failure is
therefore not explained by insufficient training data.

\textbf{Capacity and composition [well-supported / directional].} Across a width sweep $32\to512$,
the unstructured MLP diverges at every width and the factored symplectic model inverts at every width. Non-factored
conservation worsens with capacity. Factored potential, dissipative
port, and necessity penalty compose without interference under dissipation, with
the necessity-fidelity effect muted in-regime (shortcut $13\%$ vs the earlier $42\%$).

\textbf{Calibration of in-regime planning.} Planning an impulse through
each model's rollout and executing it in the true physics, structured models'
plans transfer (executed $\approx$ predicted) while unstructured MLP misestimates its
own plans. The unstructured model reports the lowest predicted error but the
highest realised error. Because confidence intervals for realised error overlap,
the supported result concerns prediction--execution calibration rather than a
control-performance advantage.
\begin{figure}[H]
  \centering
  \includegraphics[width=0.5\linewidth]{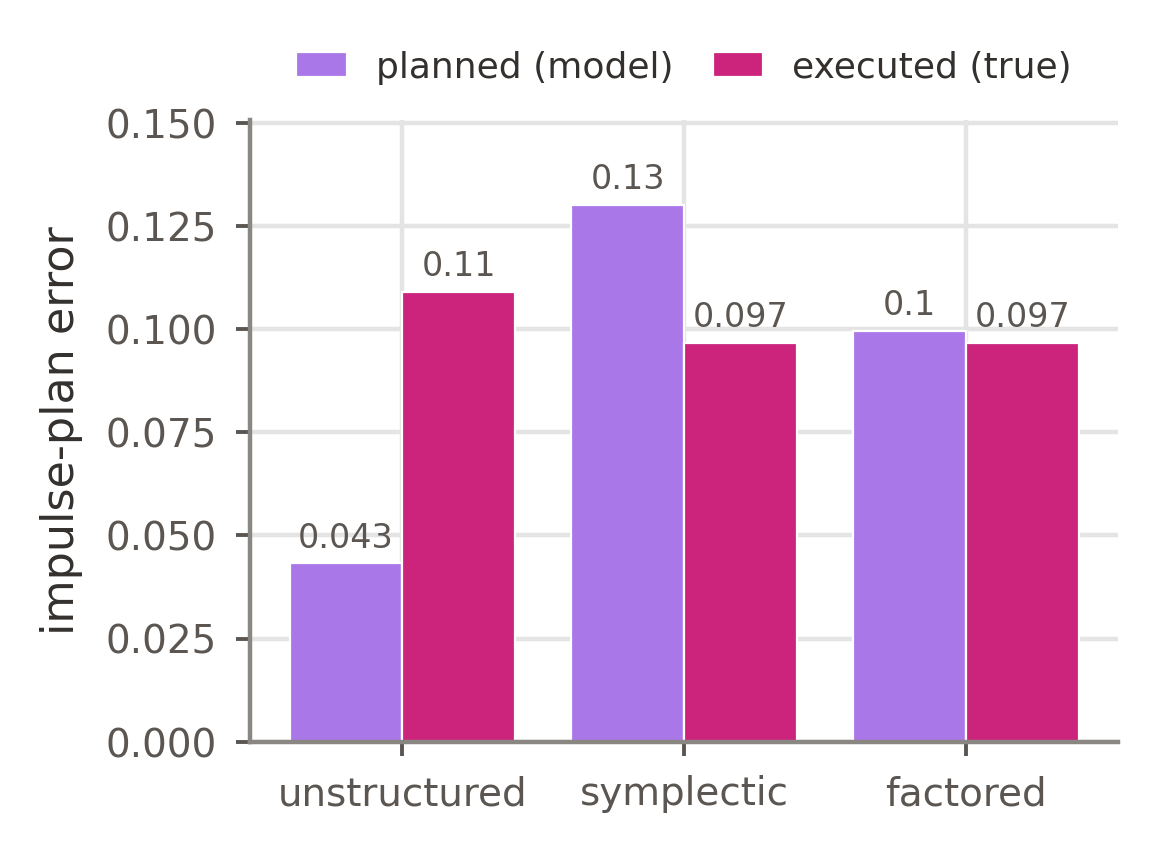}
  \caption{Planned vs executed error: structured
plans transfer; the unstructured model misestimates its own plans.}
\end{figure}

\textbf{Rendering and noise screen.} A structured model's rendered rollout is physically
consistent (re-perceived energy conserved) whereas the unstructured model's bodies
jump discontinuously, a re-visualisation of the stability gap in pixels, not a benchmark. Under
i.i.d.\ initial-state noise to $30\%$ of the state s.d., the factored symplectic model keeps
regime-match 1 and inversion nMSE ${\sim}0.1$, a screen (not perception error)
that motivated the move to pixels.

\subsection{F. Beyond conservation: stochastic dissipation and driven injection (extends \S\ref{sec:r5})}
\label{app:F}

This subsection extends \S\ref{sec:r5} using the passive-friction formulation of
Eq.~\eqref{eq:generic} and the temperature metric of App.~\ref{app:formal}. All
aggregates are mean [95\% CI] over $\ge3$ seeds; temperature error is capped at
$100$ and compared against the true simulator's \emph{measured} equilibrium or
steady-state temperature which targets agreement with the discrete reference rather than cancellation of model-dependent integration bias.

\textbf{Structured fluctuation--dissipation models approach the target
temperature; guarantee--invertibility tension.} On the confined Langevin system (three bodies,
three seeds, four $(\gamma,T)$ settings), a structured model with a
positive-semidefinite friction and fluctuation--dissipation noise reaches the
target temperature with relative error $0.19$ ($\le0.10$ at strong drag), while an
unstructured neural SDE \emph{diverges} ($\hat T$ error ${\sim}10^{3}$--$10^{4}$)
and a reversible-only model \emph{never thermalises} ($11.4$): the
conservation$\to$stability analogue for the thermal case (Table~\ref{tab:f29}).
The same table shows the tension: the guarantee-side model
has the lowest temperature error among the evaluated models but injects energy
only at the reversible baseline
(injection $0.27\approx0.29$, i.e.\ it \emph{cannot} invert), whereas the
factored-friction model injects ${\sim}7\times$ the baseline ($1.91$) at the cost of
being a worse thermostat without the passive-channel sign guarantee (Prop.~\ref{prop:tension}).

\begin{figure}[H]
  \centering
  \begin{subfigure}[t]{0.46\textwidth}
    \centering
    \includegraphics[width=\linewidth]{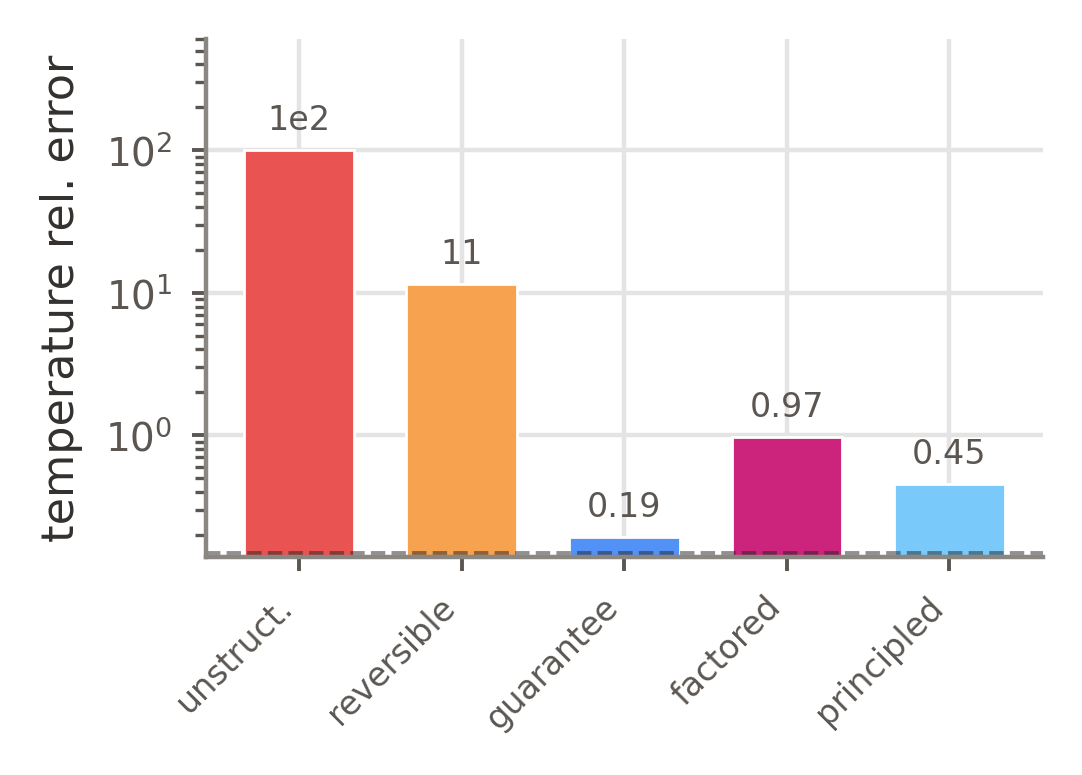}
    \caption{temperature error by model structure}
  \end{subfigure}\hfill\begin{subfigure}[t]{0.46\textwidth}
    \centering
    \includegraphics[width=\linewidth]{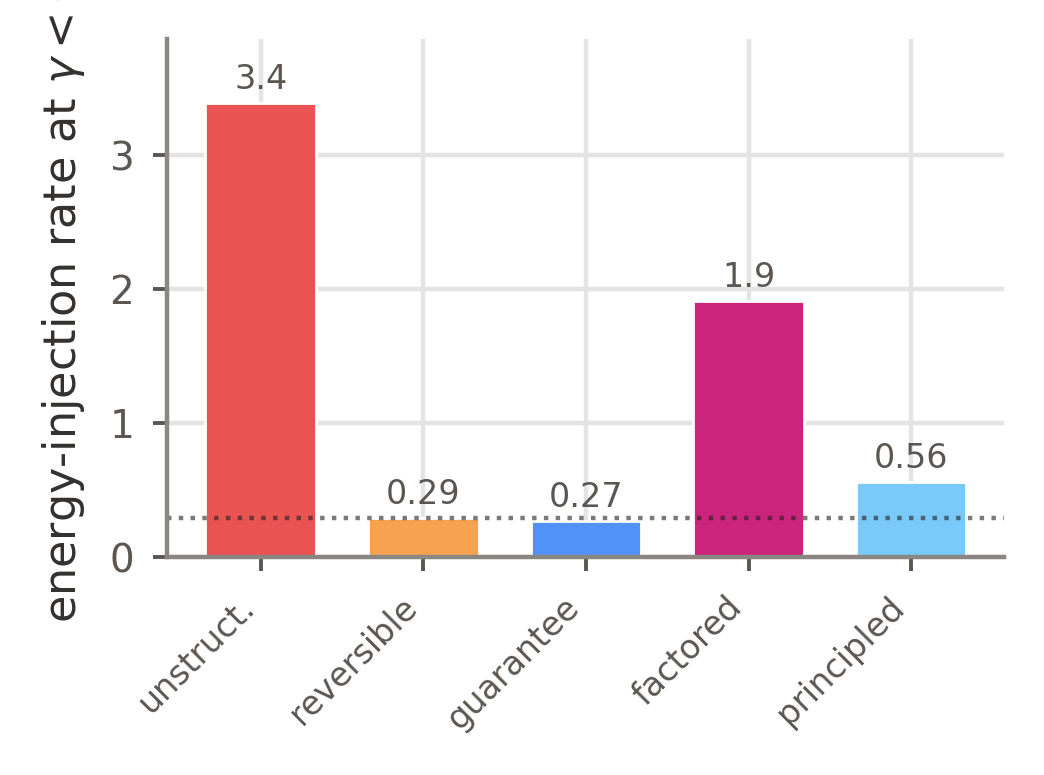}
    \caption{guarantee--invertibility trade-off}
  \end{subfigure}
  \caption{The structured FDT model has the lowest temperature
    error; the unstructured model diverges and the reversible-only model does not
    thermalise. The positive-semidefinite model cannot reverse its passive
  friction channel, whereas the signed factored model can.}
\end{figure}

\begin{table}[H]
  \centering
  
  \caption{Temperature error and energy-injection rate at $\gamma<0$
    (confined Langevin; injection near the reversible baseline $0.29$ means ``cannot
  invert'').}
  \label{tab:f29}
  \begin{tabular}{@{}lcc@{}}\toprule
    model & temperature rel-err & injection rate ($\gamma<0$) \\\midrule
    unstructured neural SDE & $100$ (reporting cap; diverges) & not evaluated \\
    reversible-only & $11.4$ (does not thermalise) & $0.29$ (baseline) \\
    structured, guarantee ($M\succeq0$) & $\mathbf{0.19}$ & $0.27$ (cannot invert) \\
    factored friction (invertible) & $0.97$ & $\mathbf{1.91}$ (injects, no guarantee) \\
    principled full $M=LL^\top$ & $0.46$ & $0.56$ \\
    \bottomrule
  \end{tabular}
\end{table}

\textbf{Full and diagonal friction parameterisations.} The full
positive-semidefinite operator $M=LL^\top$
(Cholesky) with correlated fluctuation--dissipation noise retains the qualitative
thermalisation result but has higher temperature error ($0.46$ versus $0.19$) and is more
difficult to train. In these mechanical experiments, the diagonal
parameterisation therefore performs better; evaluation in non-mechanical
dissipative domains remains future work.

\textbf{Dissipation in MuJoCo, with linear and nonlinear friction.} On the MuJoCo double pendulum in generalised coordinates, the
structured dissipative port reproduces the true energy-shed rate for both linear
and nonlinear ($v^2$) friction, while an energy-conserving control under-sheds
(Table~\ref{tab:f31}). The unstructured model also fits smooth deterministic
friction. The comparison is therefore between the structured model and the
conserving control. The distinction from the unstructured model occurs in the
stochastic temperature results above. Under anti-friction ($\gamma<0$),
the port injects energy at rate $+10.1$ for linear friction but diverges under
nonlinear friction in all seeds. This instability is consistent with
Prop.~\ref{prop:tension}.

\begin{figure}[H]
  \centering
  \begin{subfigure}[t]{0.46\textwidth}
    \centering
    \includegraphics[width=\linewidth]{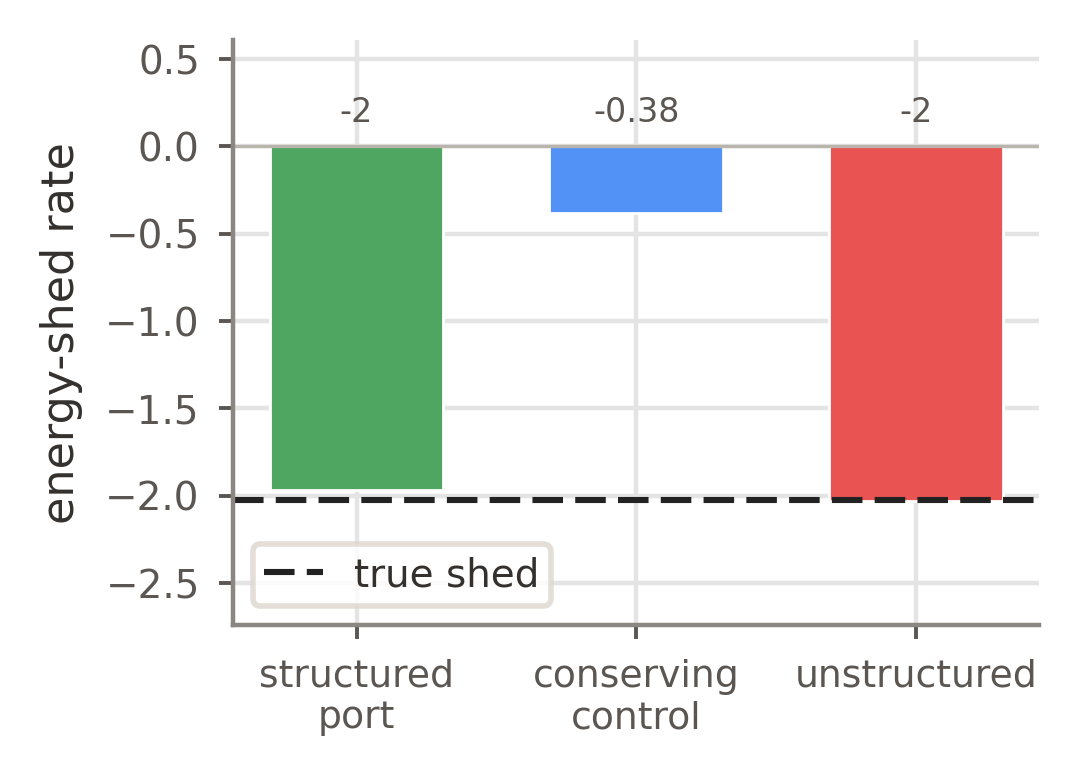}
    \caption{linear friction}
  \end{subfigure}\hfill\begin{subfigure}[t]{0.46\textwidth}
    \centering
    \includegraphics[width=\linewidth]{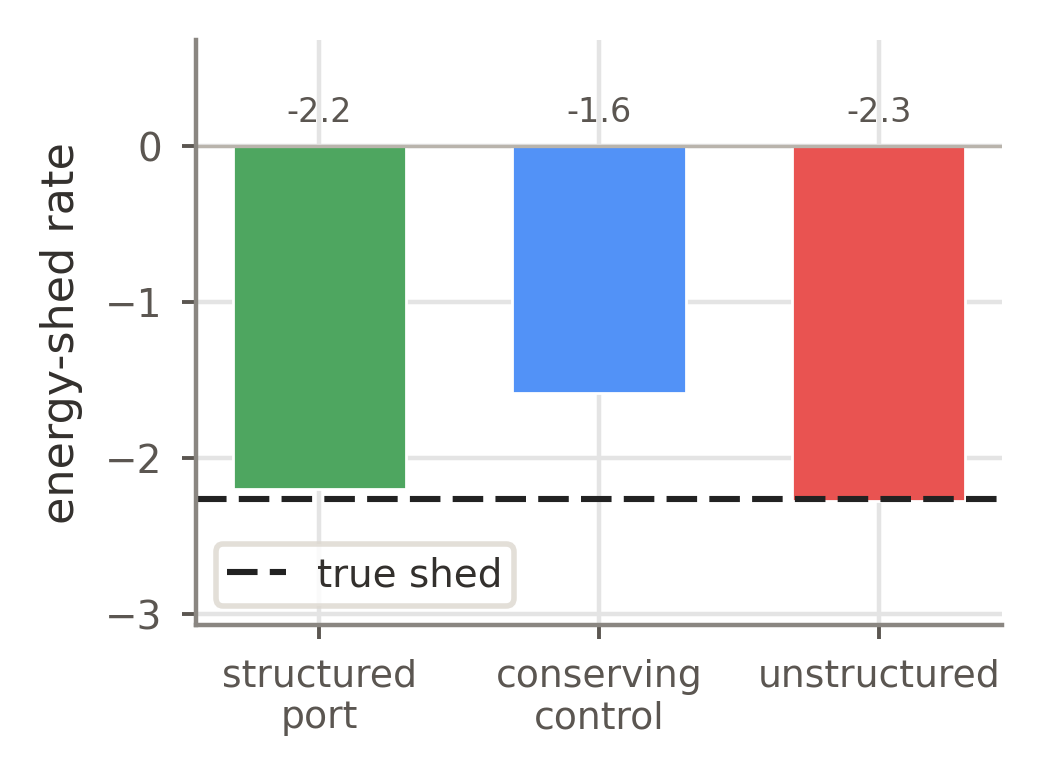}
    \caption{nonlinear ($v^2$) friction}
  \end{subfigure}
  \caption{The structured port sheds the true
    energy-loss rate on the engine for linear and $v^2$ friction; the conservation
  control under-sheds.}
\end{figure}

\begin{table}[H]
  \centering
  
  \caption{Energy-shed rate (negative $=$ losing energy), model vs true.}
  \label{tab:f31}
  \begin{tabular}{@{}lccc@{}}\toprule
    friction & true shed & structured port & conserving control \\\midrule
    linear & $-2.03$ & $\mathbf{-1.97}$ & $-0.38$ ($\sim$19\%) \\
    nonlinear ($v^2$) & $-2.26$ & $\mathbf{-2.20}$ & $-1.59$ \\
    \bottomrule
  \end{tabular}
\end{table}

\textbf{Energy injection as a factored drive generalises over strength and
sign.} Modelling energy injection as an explicit factored \emph{drive}
$A$ (Prop.~\ref{prop:driven}), a structured model reaches the correct driven
non-equilibrium steady state $T_{\text{eff}}=T\gamma/(\gamma-A)$ across trained and
held-out drive strengths and the never-seen cooling sign (rel-err $\le0.094$,
Table~\ref{tab:f33}), while a control without drive input remains
at the bath temperature (rel-err climbing to $0.68$ at the extremes) and the
unstructured model diverges. Beyond the friction ($A\ge\gamma$) there is no steady
state and the driven models exhibit unbounded growth (diverged fraction: factored drive
  $67\%$, unstructured $100\%$, control without drive input $0\%$ because it does not represent
the drive), consistent with the expected unbounded energy growth when net
damping is non-positive. This parallels the gravity-coupling result: a factored
external drive generalises across sign by construction, whereas a friction-sign flip is
thermodynamically ill-posed and should not generalise at all.

\begin{figure}[H]
  \centering
  \begin{subfigure}[t]{0.46\textwidth}
    \centering
    \includegraphics[width=\linewidth]{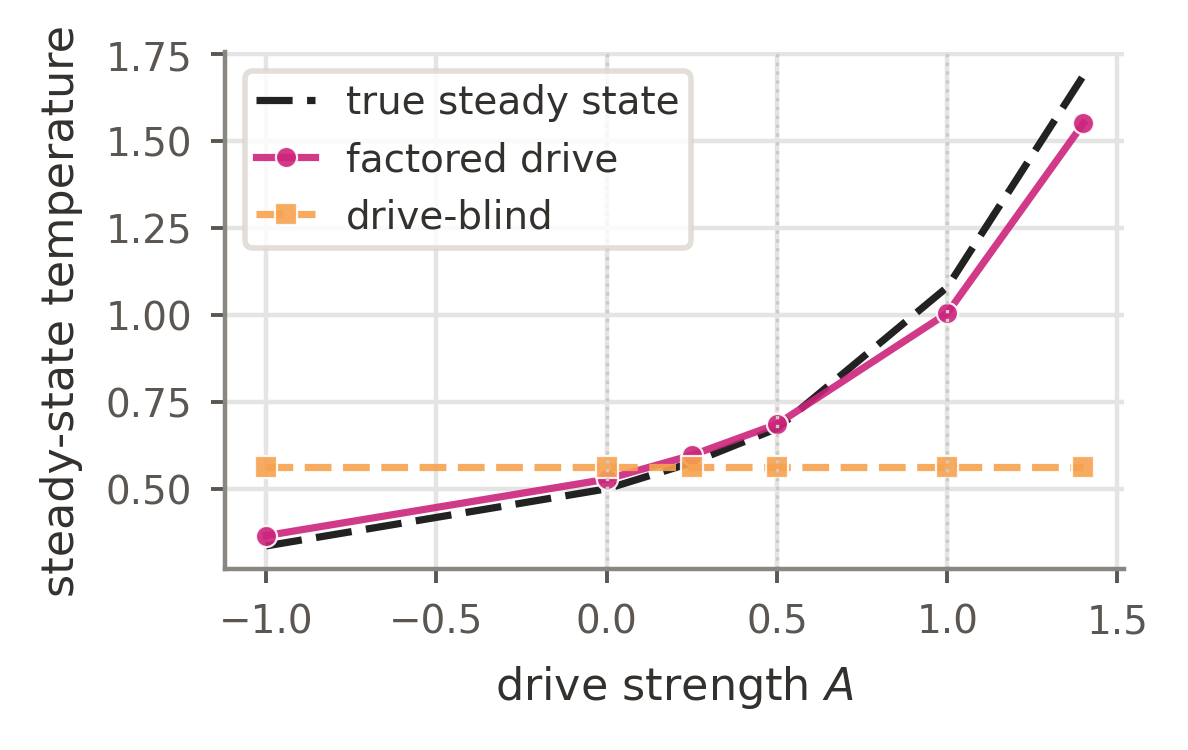}
    \caption{correct driven steady state}
  \end{subfigure}\hfill\begin{subfigure}[t]{0.46\textwidth}
    \centering
    \includegraphics[width=\linewidth]{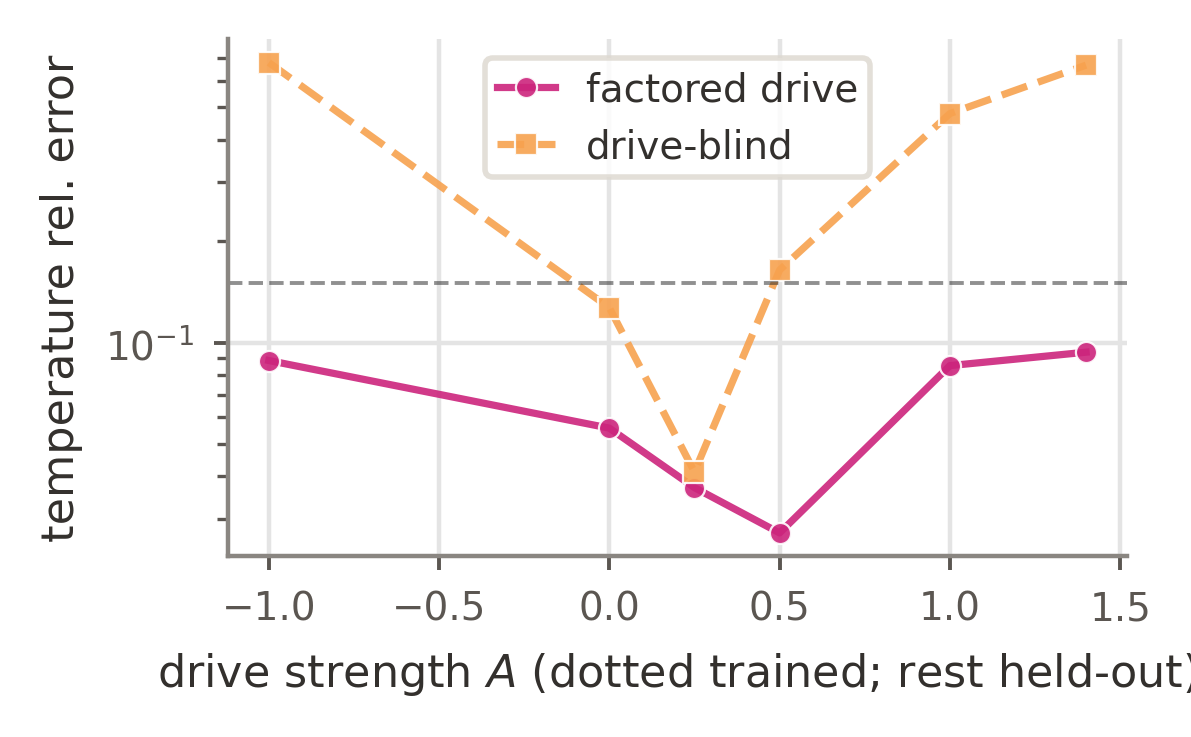}
    \caption{generalising over drive strength/sign}
  \end{subfigure}
  \caption{The factored-drive model follows the
    true driven-steady-state curve across drive strengths and both signs; the
  control without drive input cannot track it.}
\end{figure}

\begin{table}[H]
  \centering
  
  \caption{Driven-steady-state temperature rel-err ($*$ $=$ trained
  drive; rest held-out).}
  \label{tab:f33}
  \begin{tabular}{@{}lcccccc@{}}\toprule
    model & $A{=}{-}1$ & $A{=}0^{*}$ & $A{=}0.25$ & $A{=}0.5^{*}$ & $A{=}1^{*}$ & $A{=}1.4$ \\\midrule
    factored drive & $\mathbf{0.088}$ & $\mathbf{0.056}$ & $\mathbf{0.037}$ & $\mathbf{0.027}$ & $\mathbf{0.085}$ & $\mathbf{0.094}$ \\
    control ($A$ omitted) & $0.68$ & $0.13$ & $0.041$ & $0.16$ & $0.48$ & $0.67$ \\
    unstructured & \multicolumn{6}{c}{diverges (rel-err $100$ at all $A$)} \\
    \bottomrule
  \end{tabular}
\end{table}

\subsection{G. Misspecification and stronger baselines (extends \S\ref{sec:r2} and \S\ref{sec:r4})}
\label{app:G}

\textbf{Inaccurate counterfactuals under coupling-power misspecification.} Every earlier
inversion used a prior whose form matched the law. Here we deliberately
misspecify it (softened
3-body gravity, factored prior $V=G\sum\phi_\theta$, force linear in $G$, 3 seeds). When the true coupling
is instead \emph{even} in $G$ (force $\propto G^2$, so flipping $G$ leaves it unchanged), the model fits
in-distribution as well as the matched case yet predicts a sign flip the truth never
performs, with no in-distribution warning. The correct $G^2$ structure restores
the expected counterfactual, and a mismatched
\emph{interaction} structure (a 3-body term under a pairwise prior) instead produces poor
in-distribution fit (Table~\ref{tab:f34}). Parameter factoring therefore depends
on the assumed functional form: coupling-power misspecification is not detected
by in-distribution error here, whereas interaction-structure misspecification
produces elevated in-distribution error.

\begin{table}[H]
  \centering
  
  \caption{Misspecification series (nMSE; 3 seeds). ``true-cf differs'' ${\approx}0$ means the
  true counterfactual does not change under the flip.}
  \label{tab:f34}
  \setlength{\tabcolsep}{4pt}
  \begin{tabular}{@{}llccc@{}}\toprule
    law & prior & in-dist nMSE & counterfactual nMSE & true-cf differs \\\midrule
    linear (matched) & factored & $0.010$ & $\mathbf{0.003}$ & $0.33$ \\
    quadratic (power misspec) & factored & $0.014$ & $\mathbf{0.343}$ & $0.0$ \\
    quadratic & factored-$G^2$ & $0.013$ & $\mathbf{0.022}$ & $0.0$ \\
    threebody (struct.\ misspec) & factored & $\mathbf{0.43}$ & $0.34$ & $1.23$ \\
    \bottomrule
  \end{tabular}
\end{table}

\textbf{Transformer and domain-randomisation comparisons.} A
permutation-equivariant set-transformer (bodies as tokens, self-attention), a
relational unstructured model, \emph{diverges} over 2000 steps (${\sim}2$ orders more slowly than the MLP,
but still unbounded) and, evaluated at the same 150-step inversion horizon, does not
invert (regime-match $0$, counterfactual nMSE ${\sim}0.3$;
Table~\ref{tab:f35}). Thus, stability is associated with symplectic structure
and inversion with factoring. The same distinction is observed with this
additional baseline. Regime-match is primary here. At 50 steps, the attractive
and repulsive trajectories remain close, producing a small transformer nMSE
despite regime-match $0$. This value reflects the short horizon rather than
inversion. Domain randomisation, in which the black-box model is trained on both
signs of $G$, inverts (regime-match $1$) because the counterfactual is then
in-distribution, but it still diverges, so it gains no stability. The factored
model is the only evaluated model that
both inverts (zero-shot) and stays bounded.
\begin{figure}[H]
  \centering
  \includegraphics[width=0.5\linewidth]{ap_transformer_dr.png}
  \caption{The factored model is bounded
  and inverts; the transformer diverges and does not invert; domain randomisation
inverts after training on both coupling signs but diverges.}
\end{figure}

\begin{table}[H]
  \centering
  
  \caption{Transformer + domain-randomisation (3 seeds; inversion read at the 150-step
  horizon). Regime-match is primary; nMSE secondary.}
  \label{tab:f35}
  \begin{tabular}{@{}lcccc@{}}\toprule
    model & 2000-step & inversion & counterfactual & inverts short- \\
    & stability & regime-match & nMSE (150-step) & horizon (50)? \\\midrule
    factored HNN & $\mathbf{1.3}$ & $\mathbf{1}$ & $0.002$ & yes \\
    black-box MLP & $10^4$ (div.) & $0$ & $3.6{\times}10^3$ & yes \\
    transformer & $58$ (div.) & $\mathbf{0}$ & $0.29$ & no \\
    black-box $+$ dom.-rand. & $10^4$ (div.) & $1$ & $1.5$ & yes \\
    \bottomrule
  \end{tabular}
\end{table}

\subsection{H. Grounding a decode-free objective, and the object-binding boundary (extends \S\ref{sec:r5})}
\label{app:H}

\textbf{Encoder readout, scaling, and image-SSL.}
Is the advantage in the encoder or in the readout? VideoMAE's $0.13$ improves on our own
CNN's $0.45$, but that comparison changes two things at once, namely the encoder \emph{and}
the readout (our CNN used the earlier pooled head, whereas the public encoders use the spatial
cross-attention). We deconfound by running our \emph{own} trained CNN, frozen, through the
\emph{same} spatial readout: it improves to $0.34$, so the localisation-aware readout
explains part of the gap, but it stays $2.6\times$ worse than VideoMAE under the
identical readout. The ``better than our own encoder'' claim therefore softens from
$\sim\!5\times$ to a readout-matched $\sim\!2.6\times$, and the residual earns a mechanism:
the public video-SSL representation is richer than our task-trained CNN's, not
merely better-read. The readout-limited reading was then tested at scale: the same
frozen-encoder harness on V-JEPA2 at ViT-L ($300$M), ViT-H ($600$M), and ViT-g ($1$B),
identical spatial readout and head budget. The best-rung gain over ViT-L is $1.02\times$
(factored inversion nMSE $0.585 \to 0.573$, both rungs $10$ seeds; ViT-H $0.60$ at $3$)
against a pre-registered $1.5\times$ weak-gain bar, and the escalation itself is
instructive: a $3$-seed ViT-g interim read $3\times$ better ($0.195$) until the
pre-registered $10$-seed escalation surfaced the rung's own heavy-tail seed ($3.6$),
collapsing the gain. The full dissociation reproduces at $1$B (factored drift ${\sim}3$
bounded on $10/10$ seeds; unstructured $0.77$, diverges $5/10$): perception-side scale does
not buy the physics readout here.

\textbf{Image-SSL and the momentum axis.} A frozen \emph{image}-SSL
encoder, DINOv2 \citep{oquab2023dinov2}, transfers only partially under the spatial
readout: the stability dissociation holds (factored drift $2.8$ vs unstructured $6.8$) but
the inversion-nMSE gap is small ($0.62$ vs $0.65$), consistent with a per-frame encoder
exposing position but not momentum. We tested that mechanism directly: a readout that reads
a per-frame position from DINOv2's tokens and \emph{differences} it across strided frames
(momentum lives in differences, at the representation level) closes the gap to nMSE $0.096$, matching VideoMAE, while its
unstructured twin stays at $0.52$. So image-SSL encoders \emph{do} transfer fully once
momentum is differenced rather than read from a single frame. The apparent
``video-encoders-only'' advantage resolves into one momentum-visibility law. (A global pooled readout remains a confound throughout: VideoMAE $+$ pool
is null, nMSE $0.76$.) The law was then \emph{quantified} on the cited model itself (a
  supervised $(q,p)$-readout probe at matched head budget, $3$ seeds $\times$ $4$ temporal
configurations): position readout is configuration-independent (nMSE $0.04$--$0.09$
everywhere), while momentum readout splits exactly along the temporal-difference channel:
V-JEPA2's native multi-frame clip reads momentum at $0.54$ against $1.35$ for the same clip
with its centre frame repeated (temporal information ablated at identical compute; a
$2.5\times$ advantage, pre-registered bar $2\times$), and DINOv2-single-frame sits at $1.29
\approx$ the frame-repeated video encoder (ratio $0.95$). A video encoder therefore buys
exactly the temporal-difference channel the momentum boundary demands, that is, the channel
rather than the architecture (Table~\ref{tab:grounding}). This is encoder transfer on synthetic renders, not
real-video object physics (joint learned detection-and-binding remains the open rung).

This block develops the decode-free JEPA-style predictor column of the pixel comparison into a grounded family
(3 seeds, matched encoder/data/objective/anchor/budget, so the only variable within a
pair is the dynamics module). The unstructured dynamics head has $36{,}748$
parameters and the factored fixed-kinetic head $4{,}417$ (${\sim}8\times$ smaller),
so every hybrid win is against a strictly larger unconstrained competitor. CI lower
bounds are clipped at the physical floor of each non-negative metric for display.

\textbf{A label-free image anchor lets the physics prior separate on a
decode-free objective.} A decode-free JEPA-style objective (predict
future \emph{latents} against an exponential-moving-average target; no decoder)
with the physics prior alone does not invert: the fixed-kinetic factored
JEPA-style predictor fails (nMSE $1.46$) and its latent grows (${\sim}22$), reproducing the
empirical recovery condition that the prior needs a fixed latent gauge to act on. Supplying that
gauge \emph{without a renderer or state labels} (a soft-argmax image centroid for
position and a central finite difference for momentum) separates dynamics
structure from grounding cleanly (Table~\ref{tab:f36}, Fig.~\ref{fig:c4jepa}): the
anchored unstructured model never inverts (match $0$, nMSE $0.68$), while the
anchored factored hybrid inverts on every seed (match $1$, nMSE $0.40$) and stays
bounded ($1.39$), ${\sim}41\%$ lower error under matched everything. This partially
closes the residual position-grounding boundary (position grounding without a decoder). Two
limits: the anchor assumes calibrated per-body channels, and its centroid/finite-%
difference grounding is noisier than the renderer, so the hybrid ($0.40$) trails the
renderer-grounded factored fixed-kinetic model ($0.12$): a reliable \emph{regime} inversion
decoder-free, not near-oracle fidelity. A grounding-quality control isolates which
of the two differences drives that gap: replacing the noisy centroid with an
exact-position anchor (same world coordinates and finite-difference momentum, minus
centroid and occlusion noise), holding the JEPA objective fixed, does \emph{not} move
the number (nMSE $0.405$ vs $0.401$, match $1$ every seed), so the anchor is
exonerated and the residual gap to $0.12$ is a property of the decode-free objective,
not of grounding quality. A composited-RGB ladder confirms the
operative ingredient is persistent temporal grounding, not per-body channels: naive
RGB differencing breaks both variants, but a seven-frame robust tracker (palette
  demixing, visibility-weighted trajectory fitting; a controlled probe, not a general
perception method) restores the hybrid advantage (nMSE $0.45$ vs $0.82$, every seed).
\begin{figure}[H]
  \centering
  \includegraphics[width=0.62\linewidth]{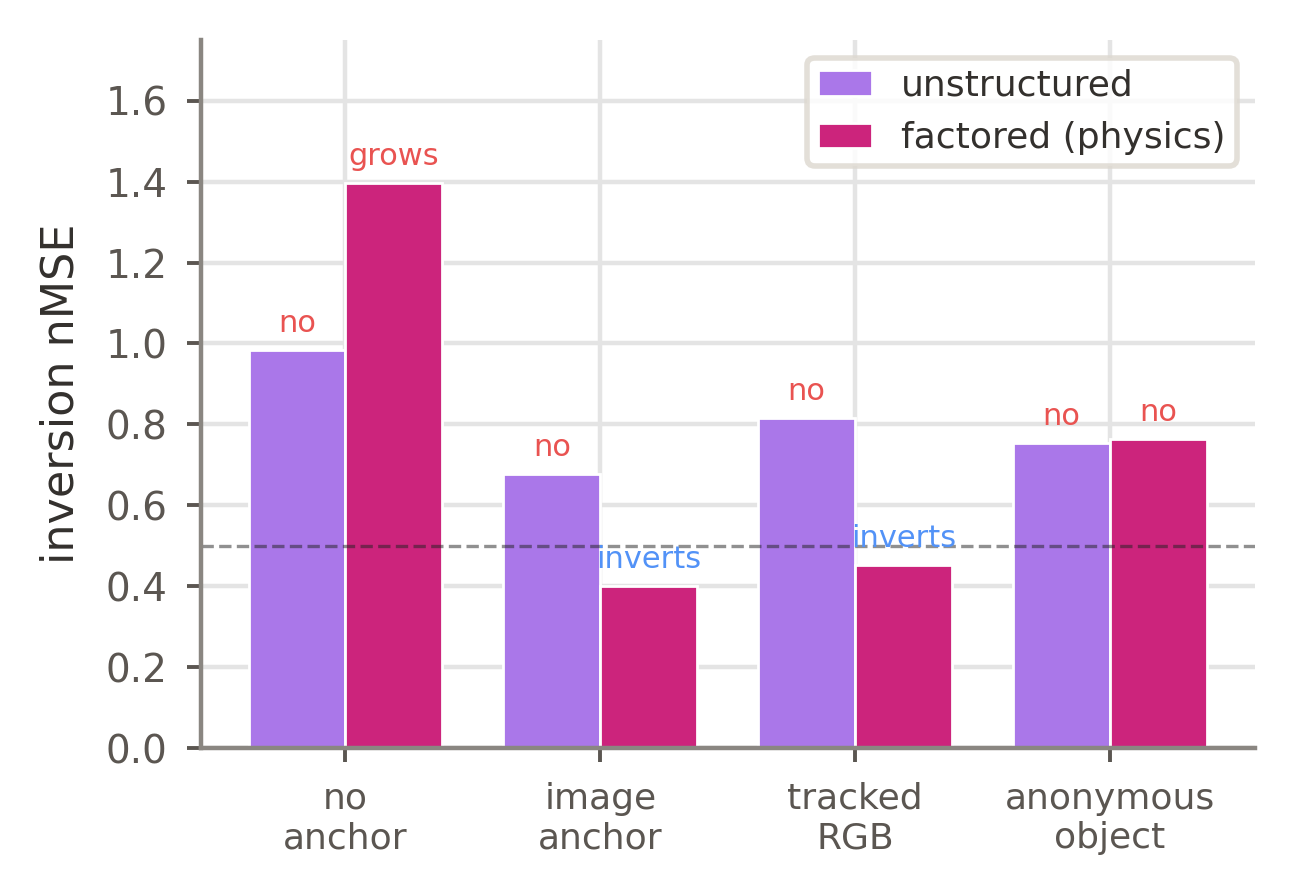}
  \caption{Decode-free inversion nMSE across the
  grounding ladder: the image-anchored factored hybrid inverts (match $1$) where its
  unstructured twin does not, on channels and on tracked RGB; with object identity
removed the advantage disappears.\label{fig:c4jepa}}
\end{figure}

\begin{table}[H]
  \centering
  
  \caption{Decode-free JEPA family (3 seeds). ``match'' is anti-physics
  regime-match; nMSE the inversion error; growth the $60$-step latent-norm ratio.}
  \label{tab:f36}
  \begin{tabular}{@{}lccc@{}}\toprule
    variant (grounding) & dynamics & match & inversion nMSE \\\midrule
    none (no anchor) & unstructured & $0$ & $0.99$ \\
    none (no anchor) & factored & $1$ & $1.46$ (grows) \\
    image anchor & unstructured & $0$ & $0.68$ \\
    image anchor & factored & $\mathbf{1}$ & $\mathbf{0.40}$ \\
    tracked RGB & unstructured & $0$ & $0.82$ \\
    tracked RGB & factored & $\mathbf{1}$ & $\mathbf{0.45}$ \\
    anonymous object & unstructured & $0.33$ & $0.76$ \\
    anonymous object & factored & $0.33$ & $0.76$ \\
    \bottomrule
  \end{tabular}
\end{table}

\textbf{Removing object identity relocates the blocker to learned object
binding.} When per-body channels \emph{and} colour identity are removed
(all bodies share one monochrome frame, bound across time by anonymous peak
detection and matching), the hybrid advantage vanishes: both the structured and
unstructured variants match the flipped law on only $1/3$ seeds with
indistinguishable error (nMSE ${\sim}0.76$), because close encounters make the
momentum/identity assignment ambiguous. The physics dynamics are not the blocker;
learned persistent \emph{object binding} (long-context slots or tracks) is. The label-free anchor moves
the boundary from decoder-free \emph{position} grounding down to \emph{object}
binding, and this anonymous-object result marks where the hand-engineered binder fails; the next two results attack that
blocker directly.

\textbf{A learned binder resolves the anonymous-object boundary, and the physics
prior disambiguates the learned binding.} The anonymous-object collapse traces to
the hand-engineered binder: it detects $N$ blobs and assigns identity across frames by
a hard minimum-acceleration match, which is ambiguous at a close encounter, so the
prior is grounded on mis-bound $(q,p)$. Replacing that match with a learned,
differentiable, confidence-aware soft-assignment (a ``learned tracker''; the detector
  is unchanged, and the binder is the encoder, its grounded centroids rolled and matched
under the decode-free objective) resolves it (Table~\ref{tab:f38},
Fig.~\ref{fig:c4bind}): the learned-tracker factored hybrid inverts on every seed
(match $1$, nMSE $0.15$, bounded growth ${\sim}1.4$), while its unstructured twin does
not (match $0$, nMSE $0.64$) and the hand-engineered floor does not (match $0.33$, nMSE
$0.76$). Extended to the ten-seed standard, the dissociation is exact:
factored inverts on $\mathbf{10/10}$ seeds and the unstructured control on $\mathbf{0/10}$
(one-sided exact binomial $p\!\approx\!10^{-3}$ each). This is the same double dissociation once
more, now on \emph{learned} binding: with the same binder, factored inverts and
unstructured fails. It holds whether the binder
is trained through the physics loss ($0.152$) or self-supervised on temporal smoothness
and then frozen ($0.155$), so physics need not train the binder end-to-end to obtain
the separation; and $0.15$ is below the channelised hybrid ($0.40$), approaching the
renderer-grounded $0.12$, because the compact anonymous disks detect cleanly and only
the matching needed fixing.
\begin{figure}[H]
  \centering
  \includegraphics[width=0.55\linewidth]{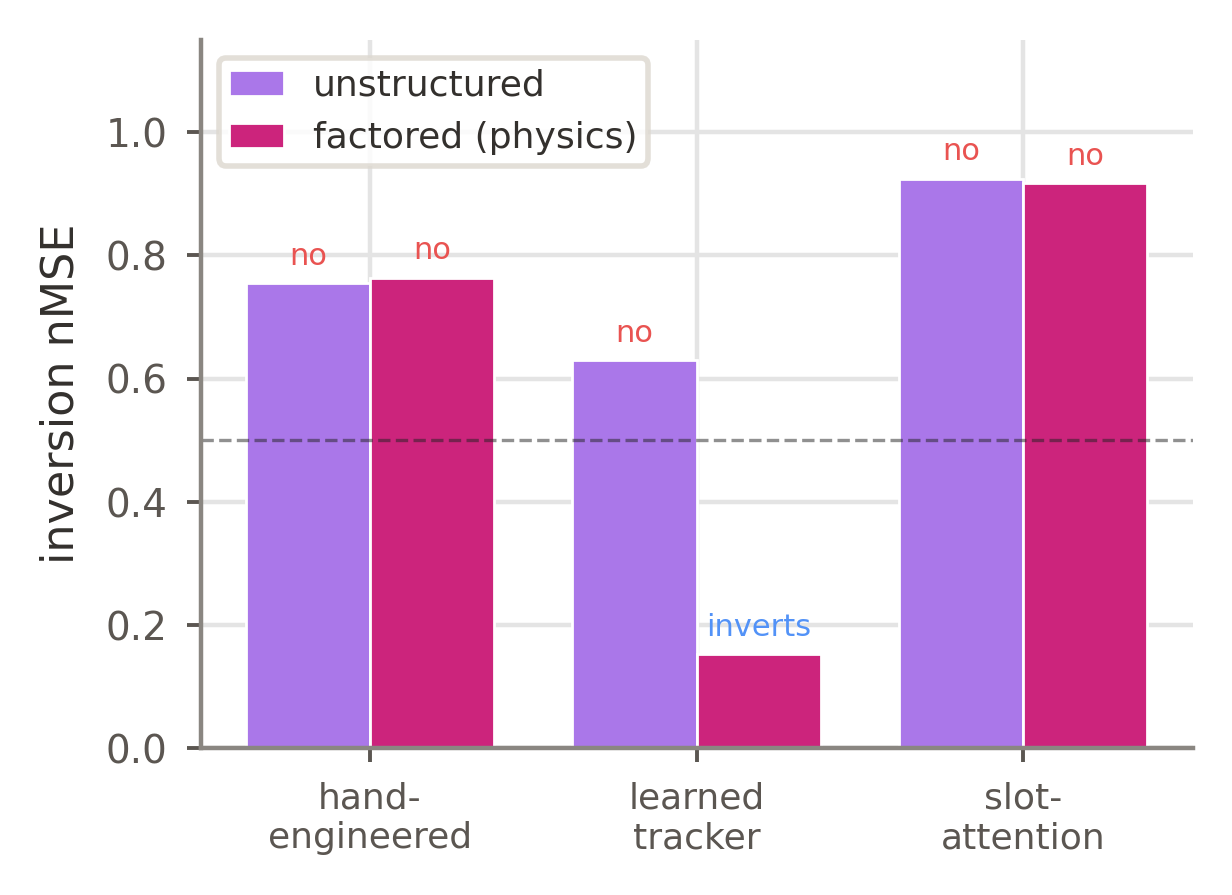}
  \caption{Inversion nMSE on the anonymous world
  across the binding ladder: only the learned-tracker factored hybrid inverts (below the
  $0.5$ guide), where the hand-engineered floor and slot-attention do not.%
\label{fig:c4bind}}
\end{figure}

\begin{table}[H]
  \centering
  
  \caption{Learned binding on the anonymous world (3 seeds). ``match''
  is anti-physics regime-match; nMSE the inversion error.}
  \label{tab:f38}
  \begin{tabular}{@{}llccc@{}}\toprule
    binder & dynamics & coupling & match & inversion nMSE \\\midrule
    hand-engineered & factored & --- & $0.33$ & $0.76$ \\
    learned tracker & unstructured & e2e & $0$ & $0.63$ \\
    learned tracker & factored & e2e & $\mathbf{1}$ & $\mathbf{0.15}$ \\
    learned tracker & factored & twostage & $\mathbf{1}$ & $\mathbf{0.16}$ \\
    slot-attention & factored & e2e & $0$ & $0.92$ \\
    slot-attention & unstructured & e2e & $0$ & $0.93$ \\
    \bottomrule
  \end{tabular}
\end{table}

\textbf{Joint learned detection-and-binding is the next rung.} A
slot-attention binder, which must learn \emph{detection} as well as assignment, does
not reach the inversion (match $0$, nMSE $0.9$--$1.0$ across factored/unstructured and
both couplings). Learning the binding on top of a fixed detector is therefore solved;
learning detection and binding jointly and decode-free is not. The perception
frontier accordingly moves one rung further: from persistent object binding, now
resolved by a learned soft-assignment, to \emph{joint} learned detection-and-binding. The learned quantity in the tracker is only the assignment ($4{,}609$ params,
detector fixed); the slot-attention binder is $105{,}728$ params.

\textbf{The grounded approach transfers to a frozen public video encoder we did not
build.} To test whether the pixel result depends on our specific
encoder, we replace the trained CNN with a frozen public encoder (weights not trained;
only a small head and the dynamics train) in the tracked setting, reading out
$(q,p)$ with $N$ object queries that cross-attend over its patch tokens. On two frozen
\emph{video}-SSL encoders, VideoMAE \citep{tong2022videomae} and the V-JEPA2 model
\citep{vjepa2_2025} this paper's introduction cites (both arms now at the full 10-seed
standard: factored inverts $10/10$, exact binomial $p=0.001$), the factored approach inverts on
every seed with bounded drift while its unstructured twin is far worse and unstable.
Over $10$ seeds VideoMAE inverts at nMSE $0.13$ (oracle $0.067$) against $0.65$
unstructured (which diverges on a seed); V-JEPA2 inverts at $0.59$ against $0.82$
unstructured (the 10-seed expansion widens the 3-seed $0.25$ with two heavy-tail
seeds; the unstructured twin diverges $5/10$, drift up to ${\sim}3100$). This is the
same dissociation on representations we did not build (Table~\ref{tab:grounding}).

\textbf{The world-model half of V-JEPA2, characterised, with the positive control as the
finding.} Our external-encoder result tested the public model's \emph{encoder}; here we test its
\emph{predictor}: the official action-conditioned V-JEPA2-AC (ViT-g encoder + 300M frame-causal
predictor, frozen, official release weights), rolled autoregressively on our rendered physics
(native 256px, 16-frame context at the dynamics cadence) with action tokens held at no-op, in
both the zero pose and the release example's real end-effector pose (disclosed; a no-op stream
is itself part of the domain gap for a model trained on robot manipulation). A pre-registered
positive control gates the physics probes: the predictor's one-step latent
prediction on our renders must beat a copy-last-frame baseline by a stated margin before any
probe is interpretable. It does not: pred/copy $= 0.978$--$0.986$ across $3$ seeds and both
no-op variants, against a certified-fair baseline (consecutive-frame representations sit at
  $82\%$ of the \emph{unrelated}-frame distance, since bodies traverse $8\%$ of the scene per frame,
so there is real predictable signal that copying does not capture). The same gate shows the
\emph{encoder} half does encode the state: our spatial readout recovers $(q,p)$ from its tokens
at $0.28$--$0.43$ nMSE. Per the pre-registered rule, the imagined-rollout probes (energy drift,
counterfactual tracking, second-law flags) are VOID and reported as such, not as model
failures. The finding is that the representation transfers across the domain gap (consistent
with the encoder-transfer result) while the \emph{predictive} half, under no-op conditioning far from its training
distribution, adds nothing over frame persistence. This is a characterisation rather than a strawman: the domain
gap (Droid tabletop $\to$ rendered $N$-body) is the headline variable, the fairness artifact
and both no-op variants are in the record, and a same-domain test would require fine-tuning
the predictor, which is out of scope for a frozen-model claim.

\textbf{Composing a public encoder, learned binding, and the physics prior, the boundary
that localises the residual.} The learned-binder result and the frozen-encoder result each
remove one obstacle separately: the frozen-encoder result keeps the hand-engineered $7$-frame tracker anchor, and the learned-binder result's
detections are \emph{real} image centroids. We composed them so that no part of the perception stack
is author-built: a frozen VideoMAE reading the \emph{anonymous} world, a \emph{learned} soft-argmax
detector replacing the hand-built centroids, and the learned tracker, under a factored prior. It does
\emph{not} compose. The factored cell is unstable across seeds (inversion nMSE $0.8$--$7.5$, latent
  growth $1.2$--$42\times$; the same seed even flips between bounded and divergent under GPU
nondeterminism, i.e.\ it sits on the edge of stability) and never reaches the clean inversion of the learned-binder result
($0.15$) or the frozen-encoder result ($0.13$). The unstructured control is worse (nMSE up to $315$). The factored prior
keeps only a weak stability edge. The boundary is informative: this composition removes \emph{both}
physical-grounding sources that made the two earlier results work (real centroids; a built anchor), and the
physics prior, which can disambiguate binding and transfer across frozen encoders
\emph{given} grounding, does not bootstrap grounding in this frozen-token configuration. This is the joint learned detection-and-grounding
residual at the representation level: the tested joint detection-and-grounding configuration remains unresolved.

\textbf{Damping-rate ordering from video [partial].}
A grounded seven-frame tracker supplies latent $(q,p)$ for a disk world
with conservative or damped dynamics. The model uses the positive port
$p\leftarrow p\exp[-\mathrm{softplus}(a)\Delta t]$.
The effective rate is $r=\mathrm{softplus}(a)>0$; the raw parameter $a$
is not constrained to be nonnegative. This update contracts canonical
quadratic kinetic energy during the isolated port, not arbitrary total energy.

The source reports rate diagnostics $1.25$ for damped video and $0.84$
for conservative video, against reference rates $1.5$ and $0$, across
five seeds. Their ordering is recovered, and a label-free window head
tracks a medium switch. The parameter-to-rate reporting convention
must be checked before interpreting these diagnostics as calibrated
physical rates.
The reported $\sigma$ values are transformed effective rates,
$\sigma=\mathrm{softplus}(\log\sigma)$ read from the trained port (not the raw parameter and
not a fitted diagnostic), in units of inverse physical time at the frame cadence; the
estimator validation below gives the grounding comparison and the untested step-size
sensitivity.
The conservative diagnostic remains nonzero; per-clip arrow-of-time
detection is not demonstrated (prediction nMSE approximately $0.9$).

\textbf{Oracle-grounding and reversed-video controls.}
Replacing grounded pixels with true simulator states leaves the conservative
diagnostic near $0.885$, versus $0.84$ from pixels (five seeds).
Oracle grounding therefore does not eliminate the floor. This directs
attention to model, objective, optimisation, and temporal-resolution
limitations rather than grounding alone; it does not uniquely identify
the cause.
On reversed damped clips, the constrained-port momentum-loss floor rises
from $1.03$ to $1.24$, while the sign-free control changes from $0.84$
to $0.87$. The pooled $1.42\times$ contrast misses the declared
$2\times$ threshold. The retained findings are ordering and a channel
constraint, not accurate conservative-rate recovery or demonstrated
video-level refusal.

\textbf{A grounded energy-balance estimator [pre-registered;
estimator amended before the run, disclosed].} The preceding experiments show
that oracle grounding does not eliminate the learned-model floor; they do not
uniquely identify its cause. A separate estimator uses grounded positions and
the known interaction law. For unit masses and linear velocity drag, it regresses
the windowed energy balance
$\Delta E_j\approx-2\gamma\bar K_j\Delta t$, with $E=K+V(q;G)$.
For general masses under $\dot p_i|_{\rm drag}=-\gamma p_i/m_i$,
the dissipated power is instead $-\gamma\sum_i\|p_i\|^2/m_i^2$.
A time-constant additive measurement bias cancels in the energy difference;
time-varying tracking noise and discretisation error need not cancel.
\emph{Estimator validation.} The learned-rate readout stores the transformed rate
$\sigma=\mathrm{softplus}(\log\sigma)$, applied as $p\leftarrow p\,e^{-\sigma\Delta t}$ with
$\Delta t$ the frame cadence ($M$ simulator steps), so $\sigma$ is a rate per unit physical
time; masses are $1$ in this world, so the $p/m$ and $p/m^{2}$ conventions coincide. The
$\sim0.85$ positive bias of the learned conservative rate is \emph{not} tracking noise: the
oracle-grounded arm (true positions) gives $\sigma_{\rm cons}=0.85$--$0.93$ and the
pixel-grounded arm $0.80$--$0.89$ (five seeds each), so the bias is the co-training transient
described above, which the grounded energy-balance estimator removes
($\hat\gamma_{\rm cons}=-0.006$). The damped rate is recovered at $1.22$--$1.30$ against a
truth of $1.5$ (about $15\%$ low) under both groundings. Sensitivity to the frame cadence was
not tested, so the video result is an ordering and an approximate rate, not a calibrated
measurement.
One physics amendment was made to the pre-registration \emph{before} the full run,
on smoke evidence: the originally specified kinetic-envelope form fails on self-gravitating
worlds, where $K$ \emph{grows} even without drag (virial infall, measured $0.066\to1.52$); the
energy-balance form replaced it with all gates unchanged. Result: pixel-grounded
$\hat\gamma_{\text{cons}}=-0.006$ and
$\hat\gamma_{\text{damped}}=1.63$ (truth $1.5$), both gates pass at $5$ seeds; the port carries
$\sigma=\max(\hat\gamma,0)$, so the estimator supplies the rate and the architecture the
guarantee. The estimator is label-free by construction (per-episode, no medium index), and a
sliding version reads a mid-clip medium switch cleanly ($\hat\gamma$ median $-0.05\to1.62$ across
a conservative$\to$damped concatenation), a label-free discovery pattern
completed at the estimator level. Scope: the meter here uses the known
interaction law $V(q;G)$; on real video, selecting the form that \emph{becomes} the meter is
an open form-selection problem. On time-reversed footage the estimator reads $\hat\gamma=-1.66$ while the
$\sigma\!\ge\!0$ port clamps its carried rate to zero, giving arrow-of-time \emph{rate} measurement
with architectural refusal, extending the classification-only contrast of
\citet{pickup2014arrow} and \citet{wei2018arrow}. The training-floor refusal channel was pursued through two pre-registered designs and is
now \emph{retired as an instrument}: the original sign-free control reached $1.93\times$ against
the $2\times$ bar (not demonstrated under the pre-registered threshold), and the exponential-twin control (the port's
exact unrestricted twin, the cleanest nested pair) inverted the prediction for an identified
reason: a fixed growth rate amplifies the core's error by $e^{|\hat\gamma|\Delta t}$ per step
(compounding $\approx\!6.9\times$ in squared error over the horizon; predicted floor $6.6$,
measured $5.5$), so under exponential growth a training floor measures \emph{error amplification},
not rate-fit, so both models degrade on reversed footage on $10/10$ seeds and the channel cannot
discriminate. This is a metric retired by its own control; the
refusal claim rests on the estimator channel and the architectural guarantee. Finally, a
distill-initialisation probe on the composition boundary (detector distilled from the centroid detector, then released
with EMA, gradient clipping, and $0.1\times$ detector learning rate) \emph{removes the instability} (latent growth $1.2$--$1.3\times$ on every seed, versus the bounded$\leftrightarrow$
divergent flipping) while inversion plateaus at nMSE $0.73$, and the residual is now
\emph{localised}: the cured detector's median position error against the centroid teacher is
$0.71$--$0.75$ world units, larger than the disk radius ($0.6$), so frozen tokens fail at
\emph{localisation} on this anonymous world before binding even starts (measured on the trained
stack; the tokens' information ceiling is not probed). That number is the opening problem for
future work.

\section{Claim--evidence map}
\label{app:claims}
This appendix maps each claim of the paper to the evidence that supports it and the class we
assign that evidence.
\renewcommand{\arraystretch}{1.15}
{
\begin{longtable}{@{}p{0.30\linewidth}p{0.45\linewidth}p{0.16\linewidth}@{}}
  \toprule
  \textbf{Claim} & \textbf{Evidence} & \textbf{Class} \\
  \midrule
  \endfirsthead
  \toprule
  \textbf{Claim} & \textbf{Evidence} & \textbf{Class} \\
  \midrule
  \endhead
  \midrule
  \multicolumn{3}{r}{\emph{continued on the next page}}\\
  \endfoot
  \bottomrule
  \endlastfoot
    In the evaluated conservative systems, conservation by construction is
    associated with lower long-horizon energy error than the evaluated unstructured baselines; neural-ODE finiteness shows that structure is not necessary for bounded finite-horizon states.
    & Bounded vs diverged energy and robustness to tuning; 10k-step saturation.
    & Well-supported. \\
    Within the evaluated model families, law inversion occurs with factored coupling
    and not with conservation alone.
    & Non-factored fails while factored inverts in the reported fidelity-valid comparisons; the double-pendulum engine and DeLaN/SymODEN comparison. Graph-network response labels are descriptive pending short-horizon validity checks.
    & Well-supported. \\
    Under grounded pixel training, a canonical fixed kinetic is additionally
    required for inversion in the evaluated factored models.
    & The discovered-perception and pixel-comparison results (learned-kinetic factored fails like non-factored).
    & Supported. \\
    The results extend to contact dynamics and a generalised-coordinate
    double-pendulum engine.
    & Contacts; the engine with learned $M(q)$.
    & Supported. \\
    The evaluated boundaries include hard contacts, dissipation, reversibility, and
    ungrounded visual-state discovery.
    & Contacts; dissipation; the reversibility flip; perception; the unsupervised pixel branch (invalid).
    & Supported as boundaries. \\
    When an explicit shortcut is available, penalising its use increases reliance on
    the factored pathway and improves inversion fidelity.
    & The shortcut dose--response; the null result without an available shortcut.
    & Supported. \\
    The factored model exhibits object-level transfer and cross-family
    generalisation within the tested ranges.
    & $N$-transfer with the regime persisting to 32; Coulomb and the mixed generator.
    & Supported (degradation reported). \\
    The dissociation is retained under end-to-end pixel training with grounded perception.
    & The end-to-end pixel comparison.
    & Supported, scoped. \\
    Discovered-from-pixels perception does not provide position grounding through the
    objective alone; a label-free image anchor supplies it without a decoder, and a
    learned soft-assignment binder then resolves object binding, leaving joint learned
    detection-and-binding as the open rung.
    & The discovered-perception results show the objective alone fails; a decoder-free image
    anchor grounds position and the physics prior then separates; a hand-engineered
    binder collapses on anonymous objects; a learned tracker $+$ the physics prior
    inverts (match $1$, nMSE $0.15$); slot-attention, learning detection too, does
    not yet reach it.
    & Supported (boundary moved two levels down). \\
    The structured FDT model has the lowest stationary-temperature error among the
    evaluated models.
    & The structured model is correct while the unstructured model diverges and the reversible-only model never thermalises; this holds under noise.
    & Well-supported. \\
    Under the passive-friction formulation of Prop.~\ref{prop:tension}, a global
    positive-semidefinite friction guarantee is incompatible with friction-sign inversion.
    & Prop.~\ref{prop:tension}; the passive channel cannot anti-damp, whereas sign reversal relinquishes that channel guarantee; the double-pendulum engine.
    & Well-supported (with proof). \\
    Energy injection, as a factored drive, generalises over drive strength and sign.
    & The driven steady state at trained and held-out drives, including the unseen
    sign ($A\ge\gamma$ produces the expected divergence).
    & Well-supported. \\
    Accurate counterfactual inference from factoring depends on correct
    specification of the coupling's functional form.
    & Wrong coupling power gives good fit but a wrong counterfactual with no warning, and the correct structure restores the result.
    & Well-supported (a boundary). \\
    The dissociation is reproduced with the evaluated graph-network and transformer
    baselines; domain randomisation inverts only when both coupling signs are
    included in training and does not provide bounded rollouts.
    & The transformer diverges and does not invert at the inversion horizon; domain
    randomisation includes both signs in training and does not provide stability.
    & Well-supported. \\
    On a decode-free future-latent objective, the physics prior separates once a
    label-free image anchor fixes the latent gauge; grounding \emph{quality} is not the
    limiter.
    & The anchored factored hybrid inverts every seed at nMSE $0.40$ vs anchored
    unstructured $0.68$; an exact-position anchor does not improve on the noisy centroid,
    so the residual gap to the renderer-grounded $0.12$ is a property of the objective.
    & Well-supported. \\
    The grounded factored approach does not depend on our own encoder: it transfers to
    frozen public video encoders, one within $2\times$ of oracle fidelity (VideoMAE $0.13$ vs oracle $0.067$) and one at ${\sim}8.8\times$ (V-JEPA2 $0.59$), with a
    localisation-aware readout.
    & Frozen VideoMAE and V-JEPA2 $+$ spatial readout $+$ factored invert at nMSE
    $0.13$ and $0.59$ vs unstructured $0.65$/$0.82$ and diverging [$10$ seeds throughout];
    a global pooled readout misses it; readout-matched, VideoMAE stays $2.6\times$ better
    than our own frozen CNN; scale-flat across V-JEPA2 ViT-L/H/g, gain $1.02\times$;
    a frozen image encoder transfers fully once momentum is differenced across
    frames.
    & Well-supported (encoder transfer; synthetic renders). \\
\end{longtable}
}

\section{Evidence classification}
\label{app:evidence}
We classify evidence as \textbf{well-supported} when effects are large, rest on an
exact test or a clear per-seed separation rather than on the non-overlap of
marginal intervals, and replicate across the stated architectures, seeds, or
epochs. Findings in this class are conservation-to-stability; factored-coupling-to-%
inversion (across MLP, Hamiltonian, Lagrangian, and graph-net; pixel-trained under
grounded perception); the canonical-fixed-kinetic condition; the capacity and
data-efficiency results; the misspecification boundary; the transformer /
domain-randomisation dissociation; the grounded decode-free hybrid, where an
image anchor lets the physics prior separate on a JEPA-style objective; the
learned-binding crack, where a learned soft-assignment binder $+$ the physics prior
inverts the anonymous-object world while its unstructured twin fails; the
external-encoder transfer, where the factored approach inverts on two frozen public video
encoders we did not build, one within $2\times$ of oracle fidelity; and,
beyond conservation,
structure-to-correct-temperature, the guarantee/invertibility tension (with proof),
and the factored-drive generalisation. We classify evidence as
\textbf{directional} when only a single configuration was evaluated or the per-seed separation is not clear. Findings in this class are the reversibility effect on the
engine; the Coulomb margin ($3\times$ vs $15$--$40\times$ for gravity); $N$-scaling
($N{=}32$); the necessity dose-response; in-regime control (overlapping CIs); the
perception partial-recovery; and the engine energy-shed and noise
robustness. Two further tags complete the vocabulary used in the findings table and claim
map: \textbf{supported} marks effects that replicate but with a scoped configuration,
a qualified control, or only some of their pre-registered gates passed (stated in the
row); and \textbf{boundary} marks a mapped limit, a place where the approach
demonstrably stops working, reported as a result of equal standing. A
\textbf{pre-registered negative} is a pre-registered hypothesis whose gates failed, kept
at full strength because the failure is the finding (the energy-aux unification is the
instance).

\section{Pre-registration and reproducibility}
\label{app:prereg}
The protocol, including the metrics, the equal-capacity rule, the non-finite cap,
and the permitted conclusions, was fixed in dated documents ahead of the corresponding
runs, and serves as a prospective specification and as provenance for the results.
The following were recorded before the corresponding
experiments: the fixed-kinetic and decorrelation designs; the
perception and canonical-latent design; the pixel-based comparison
and its success criterion; the necessity analysis; and the
stochastic-dissipative design, including the initial go/no-go experiment and
temperature as the primary metric. The guarantee--invertibility
trade-off and the anti-friction boundary interpretation in the engine and driven experiments
were specified in advance. For the later arc, the specifications are dated ahead of their
results: the oracle-grounding and refusal arms of the video dissipation experiment (the
first of which \emph{refuted our initial grounding-noise attribution}); the
energy-auxiliary unification (all three predictions failed under the pre-registered
thresholds); and the final estimator and its gates, including one physics amendment
recorded \emph{before} the full run (the kinetic-envelope form fails on self-gravitating
worlds, and the energy-balance form replaced it with the gates unchanged) and the
exponential-twin control whose outcome retired the training-floor instrument. The video
world-model characterisation was specified in advance, with its thresholds fixed before
any run; its positive control was registered as the experiment's power condition, and
its firing is the reported finding. The frozen public components used across the
external-encoder experiments are DINOv2-small, VideoMAE-base, V-JEPA2 (ViT-L/H/g), and
V-JEPA2-AC (ViT-g), each from its official public release pinned to a fixed version.
\emph{Statistics.} Aggregates are the mean over at least three seeds, and at
least five seeds at scale. When five or more seeds are available, the reported
bracket $[$lo, hi$]$ is a normal-approximation 95\% confidence interval,
$\text{mean}\pm1.96\,\text{SE}$, a normal-approximation interval that we label as such
wherever it appears; it is not an observed range, and five seeds do not by themselves
justify the approximation, so for the core comparisons we report the per-seed values and,
where the design is paired, the paired difference with its interval. Significance
statements rest on exact tests with a stated null: the binomial regime-match test has
null $\Pr(\text{match})=\tfrac12$ per seed, and the paired sign-flip tests of the
factorial have the null of exchangeable signs of the per-seed differences, Holm-corrected
across the declared family. Display clipping must be distinguished from numerical failure. A finite
value above the temperature-error threshold of $100$ or drift threshold of
$10^4$ remains a finite observation. A non-finite state or diagnostic is
a numerical failure and has no measured finite error. We do not interpret
ratios to substituted failure sentinels quantitatively. Recomputing
finite-only aggregate statistics requires the underlying per-run records.
\emph{Horizons.} Stability is evaluated at
2000 steps, $100\times$ the training horizon. Inversion error is evaluated at
150 steps, within the pre-Lyapunov window used for all models. At longer
horizons, trajectory MSE saturates even for accurate chaotic-system models.
\emph{Regime-match} is therefore the primary inversion metric and nMSE the
secondary metric. The released reproducibility materials comprise
per-experiment model checkpoints, aggregate result tables, the scripts that
regenerate the figures, and a fixed software environment.

\begin{table}[H]
  \centering
  
  \caption{Compute and hyperparameters (defaults; per-experiment overrides in the
    released configs). All runs use Adam and one-step prediction loss unless noted.
  Each run uses one GPU, with runs distributed across eight GPUs.}
  \label{tab:compute}
  \begin{tabular}{@{}l p{0.6\linewidth}@{}}\toprule
    setting & value \\\midrule
    optimiser / learning rate & Adam / $10^{-3}$ \\
    batch size & 1024 (one-step transitions) \\
    epochs & 150--300 (200 typical) \\
    seeds & $\ge3$ ($\ge5$ at scale) \\
    capacity (MLP / HNN / factored) & ${\sim}138$k parameters (matched) \\
    transformer & 2 layers, 4 heads, width 96 (${\sim}90$k) \\
    decode-free JEPA pair & shared $383$k encoder; unstructured dynamics
    $36{,}748$ vs factored $4{,}417$ params \\
    learned binders & tracker $4{,}609$ (assignment only) vs slot-attention
    $105{,}728$ params \\
    frozen encoders & DINOv2 $22$M / VideoMAE $86$M / V-JEPA2 ViT-L/H/g $0.3$--$1$B frozen; head $+$ dyn.\ trained \\
    integrator & leapfrog ($dt{=}0.002$); Tao \citep{tao2016explicit} for the engine \\
    rollout horizons & stability 2000 steps; inversion 150 steps \\
    hardware / time & one GPU per cell; ${\sim}1$--$4$ min/cell (oracle), longer on pixels \\
    \bottomrule
  \end{tabular}
\end{table}

\end{document}